\documentclass[final]{colt2020} 

\title[Covariance semi-bandits]{Covariance-adapting algorithm for semi-bandits with application to sparse outcomes}
\usepackage{times}

\coltauthor{%
 \Name{Pierre Perrault} \Email{pierre.perrault@outlook.com }\\
 \addr Inria Lille \& ENS Paris-Saclay
 \AND
 \Name{Vianney Perchet} \Email{vianney.perchet@normalesup.org}\\
 \addr  ENSAE \& Criteo AI Lab
  \AND
 \Name{Michal Valko} \Email{valkom@deepmind.com}\\
 \addr   DeepMind Paris
}
\usepackage{wrapfig}

\newcommand{\kl}[2]{\mathrm{kl}\pa{#1, #2}}

\newcommand{\N}{{\mathbb N}}
\usepackage{xcolor} 
\definecolor{blued}{RGB}{70,197,221}
\definecolor{pearOne}{HTML}{2C3E50}
\definecolor{pearTwo}{HTML}{A9CF54}
\definecolor{pearTwoT}{HTML}{C2895B}
\definecolor{pearThree}{HTML}{E74C3C}
\colorlet{titleTh}{pearOne}
\colorlet{bull}{pearTwo}
\definecolor{pearcomp}{HTML}{B97E29}
\definecolor{pearFour}{HTML}{588F27}
\definecolor{pearFith}{HTML}{ECF0F1}
\definecolor{pearDark}{HTML}{2980B9}
\definecolor{pearDarker}{HTML}{1D2DEC}
\usepackage{hyperref}   
\hypersetup{
	colorlinks,
	citecolor=pearDark,
	linkcolor=pearThree ,
	urlcolor=pearDarker}
\usepackage{algorithm}
\usepackage{algorithmic}

\usepackage{tikz}
\definecolor{reded}{RGB}{250,150,150}

\definecolor{graphicbackground}{rgb}{0.96,0.96,0.8}
\definecolor{rouge1}{RGB}{226,0,38}  
\definecolor{orange1}{RGB}{243,154,38}  
\definecolor{jaune}{RGB}{254,205,27}  
\definecolor{blanc}{RGB}{255,255,255} 
\definecolor{rouge2}{RGB}{230,68,57}  
\definecolor{orange2}{RGB}{236,117,40}  
\definecolor{taupe}{RGB}{134,113,127} 
\definecolor{gris}{RGB}{91,94,111} 
\definecolor{bleu1}{RGB}{38,109,131} 
\definecolor{bleu2}{RGB}{28,50,114} 
\definecolor{vert1}{RGB}{133,146,66} 
\definecolor{vert3}{RGB}{20,200,66} 
\definecolor{vert2}{RGB}{157,193,7} 
\definecolor{darkyellow}{RGB}{233,165,0}  
\definecolor{lightgray}{rgb}{0.9,0.9,0.9}
\definecolor{darkgray}{rgb}{0.6,0.6,0.6}
\definecolor{babyblue}{rgb}{0.54, 0.81, 0.94}
\definecolor{citrine}{rgb}{0.89, 0.82, 0.04}
\definecolor{misogreen}{rgb}{0.25,0.6,0.0}

\DeclareMathOperator*{\argmax}{arg\,max}

\let\originalleft\left
\let\originalright\right
\renewcommand{\left}{\mathopen{}\mathclose\bgroup\originalleft}
\renewcommand{\right}{\aftergroup\egroup\originalright}

\newcommand{\sset}[1]{\left\{#1\right\}}
\newcommand{\ceil}[1]{\left\lceil#1\right\rceil}
\newcommand{\floor}[1]{\left\lfloor#1\right\rfloor}

\newcommand{\II}[1]{\mathbb{I}{\left\{#1\right\}}}

\newcommand{\Bernoulli}{\mathrm{Bernoulli}}

\newtheorem{assumption}{Assumption}

\newcommand{\R}{\mathbb{R}}

\newcommand{\EE}[1]{\mathbb{E}\left[#1\right]}

\newcommand{\PP}[1]{\mathbb{P}\left[#1\right]}
\newcommand{\Prb}{\mathbb{P}}

\newcommand{\pa}[1]{\left(#1\right)}

\newcommand{\norm}[1]{\left\|#1\right\|}

\newcommand{\abs}[1]{\left|#1\right|}
\newcommand{\imp}{\Rightarrow}

\newcommand{\CommaBin}{\mathbin{\raisebox{0.5ex}{,}}}

\newcommand{\transpose}{^\mathsf{\scriptscriptstyle T}}

\newcommand{\cA}{\mathcal{A}}

\newcommand{\cC}{\mathcal{C}}

\newcommand{\cN}{\mathcal{N}}

\newcommand{\cU}{\mathcal{U}}

\newcommand{\ba}{{\bf a}}

\newcommand{\bb}{{\bf b}}

\newcommand{\bc}{{\bf c}}
\newcommand{\bC}{{\bf C}}
\newcommand{\bD}{{\bf D}}

\newcommand{\be}{{\bf e}}

\newcommand{\bx}{{\bf x}}
\newcommand{\bX}{{\bf X}}

\renewcommand{\epsilon}{\varepsilon}

\renewcommand{\tilde}{\widetilde}
\renewcommand{\bar}{\overline}

\newcommand{\bSigma}{{\boldsymbol \Sigma}}
\newcommand{\bGamma}{{\boldsymbol \Gamma}}
\newcommand{\bmu}{{\boldsymbol \mu}}
\newcommand{\bxi}{{\boldsymbol \xi}}

\newcommand{\nothere}[1]{}

\usepackage{xspace}

\newcommand{\node}[2]{(#1,#2)}

\newtheorem{prop}{Proposition}
\newtheorem{lem}{Lemma}
\newtheorem{cor}{Corollary}
\newtheorem{rem}{Remark}
\newcommand{\arms}{n}
\newcommand{\actions}{\cA}
\newcommand{\vmean}[1]{\bar{\bmu}_{#1}}
\newcommand{\mean}[1]{\bar{\mu}_{#1}}
\newcommand{\meanabs}[1]{\bar{\nu}_{#1}}

\newcommand{\counter}[2]{N_{#1,#2}}
\newcommand{\blambda}{{\boldsymbol \lambda}}

\newcommand{\bzeta}{{\boldsymbol \zeta}}
\newcommand{\bd}{{\boldsymbol d}}

\newcommand{\KL}[2]{\text{KL}\pa{#1\Vert #2}}
\newcommand{\sizecorr}[1]{\makebox[0cm]{\phantom{$\displaystyle #1$}}}

\newcommand{\fA}{\mathfrak{A}}
\makeatletter
\newcommand{\numrel}[2]{
  \refstepcounter{equation}
  \ltx@label{#2}
  \overset{(\theequation)}{#1}
}
\newcommand{\numterm}[1]{\refstepcounter{equation} \ltx@label{#1} (\theequation)}
\makeatother

\begin{document}

\maketitle

\begin{abstract}%
We investigate \emph{stochastic combinatorial semi-bandits}, where the entire joint distribution of outcomes impacts the complexity of the problem instance (unlike in the standard bandits).  Typical distributions considered depend on specific parameter values, whose prior knowledge is required in theory but quite difficult to estimate in practice; an example is the commonly assumed \emph{sub-Gaussian} family. We alleviate this issue by instead considering a new general family  of \emph{sub-exponential} distributions, which contains bounded and Gaussian  ones. We prove a new lower bound on the regret on this family, that is parameterized by the \emph{unknown} covariance matrix, a tighter quantity than the sub-Gaussian matrix. We then construct an algorithm that uses covariance estimates, and provide a tight asymptotic analysis of the regret. Finally, we apply and extend our results to the family of sparse outcomes,
which has applications in
many recommender systems.
\end{abstract}

\begin{keywords}%
  combinatorial stochastic semi-bandits, covariance, sparsity, confidence ellipsoid
\end{keywords}

\section{Introduction}\label{sec:intro}

Complete automatic adaptation of algorithms to the processed data, as opposed to the requirement of prior knowledge on underlying structure or to some manual tuning of parameters, is 
one of the fundamental challenges in machine learning. We address this challenge for \textit{stochastic (combinatorial) semi-bandits}, and provide an algorithm  adaptive to the correlation structure of the data, leading to provably faster learning in a sequential setting with limited feedback.

Stochastic multi-arm bandits (MAB) are decision-making problems where an \emph{agent} sequentially acts in an uncertain environment. At each round $t\in \N^*$, the agent selects an arm $i$ among a ground set $[\arms]\triangleq \sset{1,\dots,\arms}$ of $\arms\in\N^* $ arms. This choice generates  some reward (or outcome) $X_{i,t}\in \R$, a random variable drawn from $\Prb_{X_i}$, independently from previous rounds, where $\Prb_{X_i}$ is some probability distribution --- \emph{unknown} to the agent --- of mean $\mu_i^*$. 
The objective of the agent is to maximize the expected cumulative reward, or equivalently, to minimize the \emph{regret},
defined as the difference between the expected cumulative
reward achieved by always selecting the single
optimal arm and that achieved by the agent. 
To accomplish this objective \citep{robbins1952some}, the agent must trade-off between \emph{exploration} (gaining information about the arm distributions) and \emph{exploitation} (greedily using  the information collected so far).
 To assess the learning policy followed by the agent (also called a \emph{learning algorithm}), upper bounds on the regret are often derived as a guarantee on its performance.
 These bounds are valid provided that $(\Prb_{X_1},\dots,\Prb_{X_n})$ belongs to some family of probability distributions, e.g., the family of sub-Gaussian outcomes.

There exist sophisticated learners  adaptive to the environment, in the sense that  their performance
guarantees improve (or stated otherwise, their regret upper bounds decrease) when the problem instance is ``simpler''  for some appropriate notions of complexity. 
For instance, \citet{audibert2009exploration} and \citet{Mukherjee2017} proposed to estimate the variance of each arm  to construct adaptive confidence intervals for each mean $\mu_i^*$, based on Bernstein's inequality. This leads to an 
algorithm having variance-dependent regret bounds. \citet{garivier2011kl} went beyond variance estimation and proposed a \emph{Kullback--Leibler divergence} based confidence region, and provided a tighter regret upper bound. Thompson
 sampling can also offer such adaptive regret upper bounds \citep{kaufmann2012thompson}. 
Our objective is to attain such adaptivity, but for the challenging combinatorial extension of bandits, called  stochastic semi-bandits, described next.

Henceforth, for notation conveniences, we typeset vectors in bold and indicate components with indices, i.e., $\ba=(a_i)_{i\in [\arms]} \in \R^\arms$. 
In \emph{combinatorial semi-bandits} \citep{cesa-bianchi2012combinatorial}, the action space $\actions$ is a collection of \textit{subset} of arms. At each round $t$, the agent chooses some action $A_t\in \actions$, receives the \emph{total} reward associated to the selected actions $A_t$, assumed to be $\sum_{i\in A_t}X_{i,t}$, and observes the outcome of each base arm of $A_t$, i.e., the vector $\pa{X_{i,t}\II{i\in A_t}}_{i\in [\arms]}$.
The action space $\actions$ depends on the combinatorial problem at hand. For example, actions in $\cA$ could be a path from an origin to a destination in a network \citep{gyorgy07sp,Talebi2013} or a subset of items to recommend to a customer \citep{Wang1997}.
Many other examples and applications are given by \citet{cesa-bianchi2012combinatorial}.  Notice that in this setting, the whole joint distribution of the vector of outcomes is relevant, contrary to standard bandit problems where only the $\arms$ marginals 
are sufficient to characterize the difficulty of the instance. If we define $\bX\triangleq \pa{X_1,\dots,X_\arms}$, the objective is to design a learning algorithm adaptive
 to the distribution $\Prb_{\bX}$. This is more challenging than in standard bandits, where adaptivity is only with respect to~$\otimes_{i\in [\arms]} \Prb_{X_i}$.  

In a first approach, \citet{Degenne2016} considered the general family of $\bC$-sub-Gaussian probability distributions, with $\bC\succeq 0$ (i.e., $\bC$ is positive semi-definite). Formally, those distributions $\Prb_{\bX}$ of mean $\bmu^*$ satisfy
\vspace{-0.32cm}
\begin{align}\forall \blambda\in \R^\arms,~\EE{e^{\blambda\transpose\pa{\bX-\bmu^*}}}\leq e^{\blambda\transpose\bC\blambda/2}.\label{ass:subgau}\end{align}
\vspace{-0.6cm}

\noindent
\citet{Degenne2016} devised an algorithm with a regret bound  depending on the components of another matrix $\bGamma~\succeq~0$,  satisfying $\bGamma\succeq_+ \bC$ (i.e., $\blambda\transpose\pa{\bGamma- \bC}\blambda\geq 0$ for all $\blambda\in \R_+^n$) and $\Gamma_{ij}\geq 0$ for all $i,j$. The major downside is that this algorithm \emph{requires the knowledge} of $\bGamma$. More precisely, their upper bound is of order 
\vspace{-0.25cm}
\begin{align}\frac{\log T}{\Delta}\sum_{i\in [\arms]}\Gamma_{ii}\pa{(1-\gamma)\log^2(m) + \gamma m},\label{rel:degennebound}\end{align}
\vspace{-0.31cm}

\noindent
where $\displaystyle\gamma\triangleq\max_{A\in \actions}\max_{(i,j)\in A^2,i\neq j}\Gamma_{ij}/\sqrt{\Gamma_{ii}\Gamma_{jj}}$ is the maximal off-diagonal correlation coefficient, $\Delta$ is the minimal positive gap between expected total reward of two actions, and $m\triangleq\max\sset{\abs{A},~A\in \actions}\!.$
Interestingly, their regret upper bound highlights regimes interpolating between worst case correlation between outcomes (corresponding to $\gamma=1$) and mutually independent outcomes (where $\gamma=0$).
In particular, the learning rate is much faster in the latter case. 
The main drawback however, is that their approach is not adaptive since the correlation structure of the arms \textit{needs to be given to the agent} (through the matrix $\bGamma$).

Our main objective is to alleviate this issue, and to strive to obtain fast rates for combinatorial semi-bandits, as \citet{Degenne2016} in the case where there is a favorable covariance structure, but \textit{without knowing} it beforehand. Therefore, algorithms should be able to capture the covariance structure given by $\bGamma$ from the data processed and adapt to it. We actually go further by asking whether the matrix $\bGamma$  is the relevant parameter to characterize the difficulty of a problem. We argue that the covariance matrix $\bSigma^*\triangleq \EE{{\pa{\bX-\bmu^*}\pa{\bX-\bmu^*}\transpose}}$ is more pertinent, as it allows to better differentiate complex problems from the easy ones.
One can indeed already argue in favor of a $\bSigma^*$ dependence rather than a $\bGamma$ one, based on the relation $\bSigma^*\preceq_+\bGamma$ (see Appendix~\ref{app:bSigmapreceqbGamma}).

\paragraph{Results and limitations of the results of \citet{Degenne2016}}
 Below, we list the main limitations of the approach of \citet{Degenne2016}: 
\begin{enumerate}
\vspace{-0.15cm}
    \item[$(i)$] The matrix $\bGamma$ needs to be known. This requires specific knowledge about the outcome structure, which is often not precise, as it is usually only  known that outcomes are bounded, or at most that there exists some constant $\kappa$  such that $\kappa^2\geq C_{ii}$ for all $i\in [n]$. The latter is equivalent\footnote{Indeed, $\bC\preceq_+ \bGamma\imp C_{ii}\leq \kappa^2$ for all $i\in [n]\imp C_{ij}\leq \sqrt{C_{ii}C_{jj}}\leq \kappa
  ^2$ for all $i,j\in [n]\imp \sum_{i,j}C_{ij}\abs{\lambda_i}\abs{\lambda_i}\leq \kappa^2\pa{\sum_i \abs{\lambda_i}}^2 \text{ for all }\blambda\in \R^n\imp \bC\preceq_+ \bGamma$. } to $\Gamma_{ij}=\kappa^2$ for all $i,j\in [n]$ and corresponds to the worst case correlation between outcomes ($\gamma=1$) in the regret bound \eqref{rel:degennebound}.
  \vspace{-0.2cm}
    \item[$(ii)$] The value $\gamma$ can be $1$, even when outcomes are only weakly correlated: For instance, if $\arms$ is even, $\bGamma$ can be a block-diagonal matrix with $\arms/2$ blocks of size $2\times 2$ containing only ones.  This scenario can actually occur in many examples; we provide two types below:
    \vspace{-0.2cm}
    \begin{itemize}\item Arms are nodes on a given graph, with some small communities on which outcome tends to be constant \citep{cesa-bianchi2013gang,valko2014spectral,gentile2014online,valko2016bandits}. 
    \vspace{-0.2cm}
    \item Arms are market-basket-like items, with some highly correlated pairs of items (e.g., people buying from category ``books" tend to also buy from category ``CDs",  \citealp{Zhang:2006,He2006}).\end{itemize}
    
    \vspace{-0.4cm}
    \item[$(iii)$] The value $\Gamma_{ii}$ can be high, even for low-variance outcomes, {while intuitively, low variance outcomes should be easy to work with}.
    For example, if  $\bX$ is a binary $1$-sparse random variable  --- as in some recommender systems, where a single item is desired by the user  --- then $X_i\sim \Bernoulli\pa{\mu^*_i}$ with $\sum_{i=1}^\arms \mu^*_i=1$, and $\Gamma_{ii}\geq C_{ii}\geq \pa{\mu^*_i-1/2}/{\pa{\log(\mu^*_i)-\log(1-\mu^*_i)}}$ (and this is tight, see, e.g., \citealp{Buldygin2013}). For $\mu^*_i$ of order $1/\arms$, $\Gamma_{ii}$ is thus at least of order $1/\pa{2\log\arms}$ for $\arms$ large, whereas $\mathbb{V}(X_i)$ is of order $1/\arms$.   
\end{enumerate}
To sum up the arguments above, we claim that (1) knowing a good upper bound on the sub-Gaussian matrix $\bC\preceq_+\bGamma$ is not realistic and (2) even this upper bound is not a good proxy for the complexity  of the instance at hand. 

\paragraph{Contributions}
In this paper, we address the three aforementioned criticisms $(i),$ $(ii),$ and $(iii)$. 
As a consequence, we do not assume that a good upper bound $\bGamma$ on the sub-Gaussian matrix $\bC$ is known, but only that the agent knows that
each marginal $\Prb_{X_i}$ is $\kappa^2$-sub-Gaussian.
 We compensate this relaxation by restricting the distribution family considered through a sub-exponential-type assumption involving the covariance matrix $\bSigma^*$. We argue that this restriction is mild and satisfied 
 by many outcome distributions, including bounded and Gaussian.
 
We characterize the difficulty of the problem with $\bSigma^*$; specifically, we provide a new lower bound, with a dependence on $\bSigma^*$,  more precise than  \citet{Degenne2016}. We also design a new algorithm with matching asymptotic regret upper bound, improving over the state-of-the-art results. One of the key techniques is to build an online adapted estimation of the matrix $\bSigma^*$. 

Our main contribution is in the analysis of this approach, that is not based on the usual \emph{Laplace's method}, which works in the sub-Gaussian framework, but does not handle well our sub-exponential-type assumption. Thus, our analysis is rather based on a \emph{covering-argument} \citep{Magureanu2014}. An important part of our proof is based on the transformation of the axis-unaligned ellipsoidal confidence region associated to a given action $A\in \actions$ into an axis-aligned region, using the following relation $\pa{\Sigma_{ij}^*}_{ij\in A}\preceq_+ \text{diag}\pa{\sum_{j\in A}0\vee\Sigma^*_{ij}}_{i\in A}$. 
 This allows us to conduct the same type of proof than for the independent outcome case (where confidence regions are always axis-aligned), but with a Bernstein-type analysis.\footnote{Remark that contrary to previous work on variance based confidence region, our method can't be easily generalized to Kullback--Leibler divergence based confidence region, since this would require control on higher moments of $\bX$.}
 
We also consider an application of our approach to the family of sparse bounded outcomes: we provide a lower bound on the regret, with an algorithm having a matching asymptotic  regret  upper bound.

\begin{figure}[H]
\centering
\begin{tikzpicture}[scale=2]
\draw[line width=1pt] (-1.5,-1) rectangle (1.5,1);
\draw (0,0) node[ below] {$\vmean{t-1}$};

\draw[scale=1,domain=-90:270,smooth,variable=\x,color=black!50!green,line width=1pt] plot ({1.5*(cos(\x))},{
sin(\x))});
\draw[scale=1,domain=-90:270,smooth,variable=\x,color=black!30!blue,line width=1pt] plot ({1.5*(cos(10)*cos(\x)-sin(10)*sin(\x))},{
sin(50)*cos(\x)+cos(50)*sin(\x))});
\draw[scale=1,domain=-90:270,smooth,variable=\x,color=black!15!red,line width=1pt] plot ({1.5*(cos(-17)*cos(\x)-sin(-17)*sin(\x))},{
sin(50)*cos(\x)+cos(50)*sin(\x))});

\node [black] at (0,0) {\scalebox{.5}{\textbullet}};
\end{tikzpicture}
\caption{\textit{Confidence regions build by  \textsc{escb-c} (the pseudo-ellipse), and \textsc{cucb-kl} (the rectangle), for $\norm{\cdot}_1$ constrained outcomes. Notice that \textsc{cucb-kl} has slightly better confidence intervals along the axis, but that \textsc{escb-c} is better in the direction $\be_{\sset{1,2}}$.}}
\label{fig:square_elli}
\end{figure}
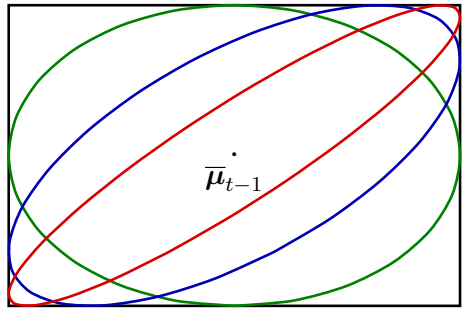

\paragraph{Prior work on stochastic semi-bandits}
We review algorithms for stochastic semi-bandits, coming with the analysis that depends on the family of probability distributions to which $\Prb_{\bX}$ belongs. To begin, \citet{kveton2015tight,chen2015combinatorial}  studied the general family of distributions having sub-Gaussian or bounded marginals. Their algorithms are not adaptive to $\Prb_{\bX}$ and regret bounds depend on parameters characterizing the family, that need to be known 
 (such as the sub-Gaussian constant or a bound on $\norm{\bX}_\infty$). On the other hand, many algorithms are \textit{only} adaptive to marginals of $\Prb_{\bX}$,  either with variance estimates \citep{pmlr-v89-perrault19a,Merlis2019}, or using Kullback--Leibler divergence. 
 These approaches are agnostic to possible correlation between marginals since the confidence region used in their algorithm are always a Cartesian product of confidence intervals (so they are always $\arms$-dimensional hypercubes). As a consequence, this translates into guarantees w.r.t.\ the worst-case correlations quantity possible. Notice that these  algorithms are actually almost direct applications of corresponding classical multi-arm bandits algorithms to the semi-bandit setting. In particular, confidence regions considered are the same in both settings.   

Another line of works restricts the probability distributions family of $\Prb_{\bX}$, so that the dependence existing between arms is controlled.
This conveniently induce better confidence regions valid for distributions in the family, and leads to the development of algorithms based on these regions, having sharper regret upper bounds.
For instance, \citet{combes2015combinatorial} and \citet{sc2018,perrault2020statistical} assumed that $\Prb_{\bX}=\otimes_{i\in [\arms]} \Prb_{X_i}$.
Confidence regions resemble to axis-aligned ellipsoids in this specific case.
They designed UCB (resp. Thompson sampling) based algorithms, leveraging on such tighter ellipsoidal confidence region.
The key difference between the above case is that this time, marginals do characterize the problem, by assumption on the probability distributions family. 

Remark that \citet{Degenne2016} provided a regret bound which adapts to the probability distribution family at hand through the matrix $\bGamma$, although their algorithm is not fully adaptive.
The confidence region used by their algorithm is also ellipsoidal, and depends on the matrix $\bGamma$. This matrix gives the control on the correlations between arms. The confidence ellipsoid is not axis aligned unless $\bGamma$ is diagonal.
To the best of our knowledge, their work is the main competitor in terms of regret bound. 


\paragraph{Sparse bandits}
Independently to combinatorial bandits, there exists a different setting actually dealing with  correlated outcomes in online learning known as \textit{sparse bandits} \citep{KwonPV17,KwonP15,BubeckSparsity2017,yadkori12sparse,carpentier12sparse,gerchinovitz2013sparsity}. The overall idea is to introduce by now a standard sparsity assumption (some parameter vector has only $s$ out of its $\arms$ components that are non zero) into sequential decision making. As usual, the objective is to replace the linear/polynomial dependence in the dimension~$n$ by a linear/polynomial dependence in $s$. Quite interestingly, the sparsity assumption has been studied in two different directions. The fist one assumes that the vector $\bmu^*$ is $s$-sparse, typically in (linear) stochastic bandits \citep{KwonPV17,yadkori12sparse,carpentier12sparse,gerchinovitz2013sparsity}. The second one assumes that the realized vector $\bX_t$ is $s$-sparse, usually in adversarial bandits \citep{KwonP15,BubeckSparsity2017}.

Sparsity in realized outcomes naturally induces negative correlation; this is not necessarily true for sparsity in expectation. More generally both concepts are complementary, since $\bmu^*$ can be sparse with non-sparse realization (for instance, if all $X_{i}$ are i.i.d., equal to  $\pm 1$ with probability $1/2$) and reciprocally (if $\bX$ is a canonical unit vector at random, then its expectation has full support). Surprisingly, 
the sparse outcomes setting has not been investigated in stochastic bandits, even if it lies at the junction of several notions of correlations between outcomes.

\section{Some technical background} 
\label{sec:back}
Let $\be_i$ be the $i^{th}$ canonical unit
vector of $\R^\arms$. The incidence vector of any subset $A\subset [\arms]$ is \(\be_A\triangleq \sum_{i\in A}\be_i.\)
The above definition allows us to represent a subset of $[\arms]$ as an element of $\sset{0,1}^\arms$.
We denote the Minkowski sum of two sets $Z,Z'\subset\R^\arms$ as $Z+ Z'\triangleq\sset{z+z',z\in Z,z'\in Z'}$, and  $Z+ z'\triangleq Z+\sset{z'}$. 
In \emph{stochastic combinatorial semi-bandits}, an agent selects an action $A_t\in \actions$ at each round $t\in\N^*$, and receives a reward $\be_{A_t}\transpose \bX_t$, where $\bX_t\in \R^\arms$ is an unknown random vector of outcomes. The successive vectors $\pa{\bX_t}_{t\geq 1}$ are i.i.d., sampled from $\Prb_{\bX}$, with an unknown mean $\bmu^*\triangleq\EE{\bX}\in \R^\arms$.  
 After selecting an action $A_t$ in round~$t$, the agent observes the outcome of each individual arm in $A_t$. 
Its goal is to
 minimize the regret, defined with $A^*\in\argmax_{A\in \actions}\be_{A}\transpose\bmu^*$ as 
 \vspace{-0.05cm}
\[\forall T\geq 1,\quad R_T\triangleq\EE{\sum_{t=1}^T \pa{\be_{A^*}-\be_{A_t}}\transpose\bX_t}.\]
\vspace{-0.25cm}

\noindent
For any action $A\in\actions$, we define its gap as the difference $\Delta\pa{A}\triangleq \pa{\be_{A^*}-\be_{A}}\transpose\bmu^*$. We then rewrite the regret as $R_T=\EE{\sum_{t=1}^T \Delta\pa{A_t}}$.  We start by stating the assumptions satisfied by $\Prb_\bX$. 
\begin{assumption}[$\kappa^2$-sub-Gaussian marginals] There is a constant $\kappa>0$ (known to the agent) such that $\forall i \in [\arms], \forall \lambda \in \R,~\EE{e^{\lambda\pa{X_i-\mu^*_i}}}\leq e^{\kappa^2\lambda^2/2}.$\label{ass:compo_subgau}\end{assumption}
Assumption~\ref{ass:compo_subgau} is not difficult to satisfy, and does not require any precision on the correlations between outcomes.
In particular, Assumption~\ref{ass:compo_subgau} includes Gaussian outcomes (with variance lower than $\kappa^2$) and bounded outcomes (with $\norm{\bX}_\infty\leq \kappa$). 
We also assume that $\bX$ satisfies the following. 
\begin{assumption}[$\norm{\cdot}_1$-sub-exponential distribution]
 $\forall\blambda\in \R^\arms$ such that $\norm{\blambda}_1\leq 1/\pa{2\kappa}$, we have $\EE{e^{\blambda\transpose\pa{\bX-\bmu^*}}}\leq e^{\blambda\transpose\bSigma^*\blambda}$, where $\bSigma^*\triangleq \EE{\pa{\bX-\bmu^*}\pa{\bX-\bmu^*}\transpose}$ is the covariance matrix of $\bX$.   
 \label{ass:subexp}
\end{assumption}
Importantly, the agent does not know the covariance matrix $\bSigma^*$. 
Remark that Assumption~\ref{ass:subexp} trivially holds for $\bX\sim\cN(\bmu^*,\bSigma^*)$, where $\forall \blambda\in \R^\arms,~\EE{e^{\blambda\transpose\pa{\bX-\bmu^*}}}= e^{\blambda\transpose\bSigma^*\blambda/2}$. The following proposition, proved in Appendix~\ref{app:bounded}, states that it also holds for bounded outcomes. 
\begin{prop}\label{prop:bounded}
If  $\norm{\bX}_\infty\leq \kappa$, then both Assumption~\ref{ass:compo_subgau} and~\ref{ass:subexp} hold. 
\end{prop}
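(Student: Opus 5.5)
Assumption~\ref{ass:compo_subgau} follows directly from Hoeffding's lemma. Each $X_i$ lies in $[-\kappa,\kappa]$, an interval of length $2\kappa$, so $\EE{e^{\lambda(X_i-\mu_i^*)}}\le e^{\lambda^2(2\kappa)^2/8}=e^{\kappa^2\lambda^2/2}$.

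For Assumption~\ref{ass:subexp}, I would work with the centered variable $Y\triangleq\blambda\transpose(\bX-\bmu^*)$. It has mean zero and variance $\EE{Y^2}=\blambda\transpose\bSigma^*\blambda$. Since $\abs{X_i-\mu_i^*}\le 2\kappa$, Hölder's inequality gives $\abs{Y}\le \norm{\blambda}_1\norm{\bX-\bmu^*}_\infty\le 2\kappa\norm{\blambda}_1\le 1$ whenever $\norm{\blambda}_1\le 1/(2\kappa)$.

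The key step is the elementary inequality $e^y\le 1+y+y^2$ for $\abs{y}\le 1$. One way to see it is via the series: for $\abs{y}\le1$,
\[
e^y-1-y=y^2\sum_{k\ge2}\frac{y^{k-2}}{k!}\le y^2\sum_{k\ge 2}\frac{1}{k!}=y^2(e-2)\le y^2 .
\]
Taking expectations then gives
\[
\EE{e^{Y}}\le 1+\EE{Y}+\EE{Y^2}=1+\blambda\transpose\bSigma^*\blambda\le e^{\blambda\transpose\bSigma^*\blambda},
\]
using $1+x\le e^x$.

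The only real subtlety is the bound $\abs{Y}\le 1$. The centering costs a factor $2$, since $\abs{X_i-\mu_i^*}\le 2\kappa$ rather than $\kappa$, and this is exactly what the threshold $\norm{\blambda}_1\le 1/(2\kappa)$ in Assumption~\ref{ass:subexp} absorbs. Note that the argument gives constant $e-2<1$ on the quadratic term, so the assumption's factor $1$ in front of $\blambda\transpose\bSigma^*\blambda$ is more than enough.
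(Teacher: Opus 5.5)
Your proof is correct and takes essentially the same route as the paper: Hoeffding's lemma gives Assumption~\ref{ass:compo_subgau}. For Assumption~\ref{ass:subexp}, both arguments use $\abs{\blambda\transpose(\bX-\bmu^*)}\le \norm{\blambda}_1\norm{\bX-\bmu^*}_\infty\le 1$ together with the series $\sum_{k\ge 2}1/k!=e-2$. The only difference is cosmetic. You bound $e^y\le 1+y+(e-2)y^2$ pointwise before taking expectations and then use $1+x\le e^x$. The paper instead expands $\EE{e^{\blambda\transpose(\bX-\bmu^*)}}$ as a series and applies $\log x\le x-1$, which amounts to the same inequality.
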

Notice, up to a re-normalization of the regret, we assume w.l.o.g.\  that $\kappa=1$.
\section{Lower bound}
We start by proving in Theorem~\ref{thm:lower} a new gap-dependent lower bound on $R_T$, valid for any covariance matrix $\bSigma^*~\succeq~0$, for some $\Prb_\bX$ satisfying Assumptions~\ref{ass:compo_subgau}~and~\ref{ass:subexp}, some action space $\actions$, and for any consistent algorithm \citep{lai1985asymptotically}, for which the regret on any problem verifies $R_T=o(T^a)$ as $T\to \infty$, for all $a>0$. 
This lower bound demonstrates the link between $\bSigma^*$ and the difficulty of the problem.
It also indicates, in anticipation, that we have to examine a subclass of action sets to hope to improve the upper bound we will provide in Theorem~\ref{thm:upper_cov}.
\begin{theorem}\label{thm:lower}For any $n,m\in \N^*$ such that $n/m\geq 2$ is an integer, any $n\times n$ matrix $\bSigma^*\succeq 0$, any $\Delta>0$, and any consistent policy, there exists an instance with $n$ arms --- characterized by some action space $\actions$, with $m=\max\sset{\abs{A},~A\in \actions}\!,$ some outcome distribution $\Prb_{\bX}$ satisfying Assumptions~\ref{ass:compo_subgau} and~\ref{ass:subexp} with all gaps equal to $\Delta$ and covariance matrix $\bSigma^*$ --- on which the regret satisfies
 \[\liminf_{T\to \infty} \frac{\Delta}{\log\pa{T}}R_T\geq  2\sum_{i\in[n],~i\notin A^*}\max_{A\in \cA,~i\in A}\sum_{j\in A}\Sigma^*_{ij}\cdot\]
\end{theorem}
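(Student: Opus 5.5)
We construct an instance made of $K\triangleq n/m\geq 2$ disjoint actions, so that the problem splits into independent-looking "super-arms" to which the classical Lai--Robbins lower bound applies. Partition $[n]$ into blocks $B_1,\dots,B_K$ of size $m$ and set $\actions\triangleq\sset{B_1,\dots,B_K}$. Then $m=\max\sset{\abs{A},~A\in\actions}$, and every arm belongs to exactly one action. For $i\in B_k$ we therefore have $\max_{A\ni i}\sum_{j\in A}\Sigma^*_{ij}=\sum_{j\in B_k}\Sigma^*_{ij}$. Summing over $i\in B_k$ gives $\be_{B_k}\transpose\bSigma^*\be_{B_k}\triangleq\sigma_k^2$, which is the variance of the reward of action $B_k$. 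The target is thus
\[
\liminf_{T\to\infty}\frac{\Delta}{\log T}R_T\geq 2\sum_{k:\,B_k\neq A^*}\sigma_k^2 .
\]
Because actions are disjoint and all cross-block observations are irrelevant to the reward, choosing $B_k$ is equivalent to pulling arm $k$ of a $K$-armed bandit. The only difference is that we observe the whole vector $\bX_{B_k}$ rather than just its sum, so the lower bound must handle a multivariate observation model.

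For the distribution we take $\bX\sim\cN(\bmu^*,\bSigma^*)$ with mean $\bmu^*$ such that $\be_{B_1}\transpose\bmu^*=\Delta$ and $\be_{B_k}\transpose\bmu^*=0$ for $k\geq2$, so all gaps equal $\Delta$. Assumption~\ref{ass:subexp} holds trivially for Gaussians, as noted after its statement. For Assumption~\ref{ass:compo_subgau} (with $\kappa=1$) we need $\Sigma^*_{ii}\leq 1$. If this fails, I would rescale: the stated bound is homogeneous in $\bSigma^*$ after the normalization $\kappa=1$, but care is needed here. Alternatively, I would read the statement as allowing the marginal sub-Gaussian constant to be adapted. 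This point is a minor obstacle, and I would handle it by noting that $\kappa$ may be chosen with $\kappa^2\geq\max_i\Sigma^*_{ii}$. Degenerate $\bSigma^*$ (singular blocks) I would handle by a limiting argument, adding $\epsilon\mathbf{I}$ and letting $\epsilon\to0$, or by working on the support. If $\sigma_k^2=0$ the term is zero and nothing is needed.

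The core step is the change-of-measure argument (\citealp{lai1985asymptotically}, or the Garivier--Kaufmann data-processing formulation). Fix a suboptimal block $B_k$. Perturb only the mean of $\bX_{B_k}$ to $\bmu'_{B_k}=\bmu^*_{B_k}+\bd$, keeping the covariance fixed, so that $B_k$ becomes optimal: $\be_{B_k}\transpose\bd>\Delta$. All other blocks keep their law. This means the conditional law of the other blocks given $\bX_{B_k}$ must be preserved; I perturb the full mean by $\bSigma^*_{\cdot,B_k}(\bSigma^*_{B_kB_k})^{-1}\bd$, which changes only the marginal of $\bX_{B_k}$ in KL terms when restricted to observations of block $k$. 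Observations from other blocks are marginally unchanged. Since only one block is observed per round, the KL per pull of $B_j$, $j\neq k$, is zero. Consistency then yields
\[
\liminf_{T}\frac{\EE{N_k(T)}}{\log T}\geq \frac{1}{\tfrac12\bd\transpose(\bSigma^*_{B_kB_k})^{-1}\bd}.
\]
We optimize over $\bd$ with $\be_{B_k}\transpose\bd=\Delta+\epsilon$. The minimum of $\bd\transpose\bSigma^{-1}\bd$ subject to $\be\transpose\bd=c$ is $c^2/(\be\transpose\bSigma\be)$, attained at $\bd\propto\bSigma\be$. This gives $\EE{N_k(T)}\gtrsim 2\sigma_k^2\log T/\Delta^2$, and multiplying by the gap $\Delta$ and summing over $k\geq2$ gives the claim. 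The main obstacle is making the perturbation affect only block $k$'s observations while keeping a valid joint Gaussian with the same $\bSigma^*$. Shifting the full mean vector along $\bSigma^*\be_{B_k}$ scaled so that the other blocks' sums stay fixed may be problematic, since other blocks' means shift too and could change the optimal action. I would therefore instead argue directly with the per-round observation laws: because observations in a round come from a single block, only marginal laws of each $\bX_{B_j}$ matter to the agent. It therefore suffices to exhibit any joint law whose block marginals are $\cN(\bmu^*_{B_j},\bSigma^*_{B_jB_j})$ for $j\neq k$ and $\cN(\bmu'_{B_k},\bSigma^*_{B_kB_k})$ for block $k$; for instance, the same Gaussian with only the block-$k$ mean shifted.
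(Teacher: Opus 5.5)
Your argument is correct. It uses the same instance as the paper: $n/m$ disjoint blocks of size $m$, Gaussian outcomes with covariance $\bSigma^*$, and a reduction to an $n/m$-armed bandit handled by a Lai--Robbins/Burnetas--Katehakis change of measure. Where you differ is the KL step, and your version is the more careful one.

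The paper only computes the KL of the scalar reward $\sum_{i\in A_k}X_i$, comparing $\cN(-\Delta,\be_{A_k}\transpose\bSigma^*\be_{A_k})$ with $\cN(0,\be_{A_k}\transpose\bSigma^*\be_{A_k})$. Under semi-bandit feedback, however, the learner observes the whole vector $\bX_{A_k}$. By data processing, its KL to any alternative is at least the KL of its sum. So exhibiting a scalar alternative with small KL does not, by itself, bound the infimum that actually enters the lower bound.

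Your alternative fills this in. You take the same joint Gaussian and shift only the block-$k$ coordinates by $\bd=\frac{\Delta+\epsilon}{\sigma_k^2}\bSigma^*_{B_kB_k}\be_{B_k}$, where $\sigma_k^2=\be_{B_k}\transpose\bSigma^*\be_{B_k}$. This keeps the covariance, so it stays in the model class. It leaves the law of every other block unchanged and makes $B_k$ optimal. Its vector KL is exactly $(\Delta+\epsilon)^2/(2\sigma_k^2)$, so it recovers the paper's constant and justifies it for vector observations.

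Two small cleanups are needed. First, the intermediate shift by $\bSigma^*_{\cdot,B_k}(\bSigma^*_{B_kB_k})^{-1}\bd$ does move the other blocks' means whenever cross-block covariances are nonzero, so drop it. Your final block-only shift has no obstacle at all, since any mean shift of a Gaussian with fixed covariance is again a valid Gaussian with that covariance. Second, for singular $\bSigma^*_{B_kB_k}$, do not replace $\bSigma^*$ by $\bSigma^*+\epsilon\mathbf{I}$, because that changes the instance, whose covariance must be exactly $\bSigma^*$. Instead note that $\bd$ lies in the range of $\bSigma^*_{B_kB_k}$. The two Gaussians are then mutually absolutely continuous, with KL $\tfrac12\bd\transpose(\bSigma^*_{B_kB_k})^{+}\bd=(\Delta+\epsilon)^2/(2\sigma_k^2)$.
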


\noindent
The proof is given in Appendix~\ref{app:lower} and considers $\actions$ containing $\arms/m$ disjoint actions $A_1,\dots,A_{\arms/m}$ composed of $m$ arms, with $A_k=\sset{(k-1)m+1,\dots,km }$, and $\bX\sim \cN(-\Delta/m(\II{i\notin A_1})_i,\bSigma^*)$.
The idea is to make a reduction to some standard bandit problems with $\arms/m$ arms, and to compute the number of rounds $t$ needed to distinguish between $A_k$ and $A_1$. Roughly speaking, $t$ is at least equal to the inverse of the $\text{KL}$ between outcome distributions of $A_k$ and its centered version, and in the case of Gaussian distributions, we get  $t\geq2\mathbb{V}(\sum_{i\in A_k}X_i)/\Delta^2=2\be_{A_k}\transpose \bSigma^*\be_{A_k}/\Delta^2$. It is not surprising that the variance appears, since this can be seen as a measure of the uncertainty we have in our samples: 
The higher the variance, the harder the estimation, and therefore the higher the round~$t$ must be.
Notice that Theorem~\ref{thm:lower} is a refinement of Theorem~1 from \citet{Degenne2016}, in which they consider the same action space $\cA$ but a specific choice for the matrix $\bSigma^*$: it is a block-diagonal matrix with $n/m$ blocks, where each block (corresponding to an action $A$) is equal to $\sigma^2\pa{(1-\gamma)\text{diag}(\be_A)+\gamma\be_A\be_A\transpose}$, i.e., they take the worst case correlation under the controls given by $\sigma^2$ and $\gamma$, and knowing that the problem given by $\cA$ is agnostic to the correlations between the arms of two different blocks.

In the next section, we describe our algorithm \textsc{escb-c} (Algorithm~\ref{algo:cov})
and provide an upper bound on its regret in  Theorem \ref{thm:upper_cov},  where the expression 
 $\max_{A\in \actions,i\in A}\sum_{j\in A}0\vee\Sigma^*_{ij}$  appears. Notice that this is very close to the expression given in Theorem~\ref{thm:lower}. In fact, both expressions coincide when $\bSigma^*$ has only non-negative entries.

\section{Main algorithm and the guarantees}
In this section, we present an algorithm for the setting introduced in Section~\ref{sec:back}. The method is stated as Algorithm~\ref{algo:cov}. To find the action with the highest mean, the agent estimates the mean $\mu_i^*$ of every arm $i$ with their corresponding \emph{empirical averages} defined as
\(\mean{i,t-1}\triangleq\sum_{u\in[t-1]}\frac{\II{i\in A_u}X_{i,u}}{\counter{i}{t-1}},\)
for $t\geq 1$, where $\counter{i}{t-1}\triangleq\sum_{u\in[t-1]}\II{i\in A_u}$ is the number of time arm $i$ have been drawn for the first $t-1$ rounds. As mentioned above, the agent also estimates the covariance $\Sigma^*_{ij}=\EE{X_iX_j}-\mu_i^*\mu_j^*$ of each pair $(i,j)\in [\arms]^2$.  This will be done with the following estimate
\begin{align*}
\bar\Sigma_{ij,t-1}&\triangleq\sum_{u\in[t-1]}\frac{\II{i,j\in A_u}\pa{X_{i,u}-\mean{i,t-1}}\pa{X_{j,u}-\mean{j,t-1}}}{\counter{ij}{t-1}}\\
& = \sum_{u\in[t-1]}\frac{\II{i,j\in A_u}\pa{X_{i,u}X_{j,u}-\mean{i,t-1}X_{j,u}-\mean{j,t-1}X_{i,u}}}{\counter{ij}{t-1}}+\mean{i,t-1}\mean{j,t-1},
\end{align*}
\vspace{-0.3cm}

\noindent
where $\counter{ij}{t-1}\triangleq\sum_{u\in[t-1]}\II{i,j\in A_u}$ is the number of times arms $i$ and $j$ have been drawn \emph{together} for the first $t-1$ rounds.
Notice that in order to efficiently update $\bar\Sigma_{ij,t-1}$, in addition to $\mean{i,t-1}$ and $\mean{i,t-1}$, we only have to maintain the three quantities, 
\[\sum_{u\in[t-1]}\frac{\II{i,j\in A_u}X_{i,u}X_{j,u}}{\counter{ij}{t-1}}\CommaBin\quad\sum_{u\in[t-1]}\frac{\II{i,j\in A_u}X_{i,u}}{\counter{ij}{t-1}}\CommaBin\quad \text{and} \quad\sum_{u\in[t-1]}\frac{\II{i,j\in A_u}X_{j,u}}{\counter{ij}{t-1}}\cdot\]
\vspace{-0.3cm}

\noindent
Using concentration inequalities, we get confidence intervals for the above estimates. We  are then able to use an upper-confidence-bound strategy \citep{auer2002finite}. More precisely, we first build the upper confidence bound on $\Sigma^*_{ij}$ using the fact that $X_i\cdot X_j$ is a \textit{sub-exponential} random variable, since both $X_i$ and $X_j$ are sub-Gaussian by virtue of Assumption~\ref{ass:subgau}.
The result is stated in the following proposition with a proof in Appendix~\ref{app:sigmaupper}.
\begin{prop} With probability $1-10t^{-2}$, we have
$$\abs{\Sigma^*_{ij}-\bar\Sigma_{ij,t-1}} \leq g_{ij}(t)\triangleq16\pa{\frac{3\log(t)}{\counter{ij}{t-1}}\vee\sqrt{\frac{3\log(t)}{\counter{ij}{t-1}}}} + \sqrt{\frac{48\log^2(t)}{{\counter{ij}{t-1}}{\counter{i}{t-1}}}}+\sqrt{\frac{36\log^2(t)}{{\counter{ij}{t-1}}{\counter{j}{t-1}}}}\cdot$$
In particular, defining the upper confidence bound $\Sigma_{ij,t}\triangleq \bar\Sigma_{ij,t-1} + g_{ij}(t),$ it holds that $0\leq \Sigma_{ij,t}-\Sigma^*_{ij}\leq 2g_{ij}(t)$ with probability $1-10t^{-2}$.
\label{prop:sigmaupper}
\end{prop}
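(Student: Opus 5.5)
We decompose the estimation error of $\bar\Sigma_{ij,t-1}$ into a term involving the true means and two correction terms coming from plugging in empirical means. Write $Y_{i,u}\triangleq X_{i,u}-\mu^*_i$ and let $\tilde\mu_{i}\triangleq\sum_{u}\II{i,j\in A_u}X_{i,u}/\counter{ij}{t-1}$ be the average of arm $i$ over the rounds where $i$ and $j$ are drawn together. Expanding the product gives
\begin{align*}
\bar\Sigma_{ij,t-1}-\Sigma^*_{ij}
&= \Bigl(\frac{1}{\counter{ij}{t-1}}\sum_{u}\II{i,j\in A_u}Y_{i,u}Y_{j,u}-\Sigma^*_{ij}\Bigr) - (\mean{i,t-1}-\mu_i^*)(\tilde\mu_{j}-\mu^*_j)\\
&\quad - (\mean{j,t-1}-\mu_j^*)(\tilde\mu_{i}-\mu^*_i) + (\mean{i,t-1}-\mu_i^*)(\mean{j,t-1}-\mu_j^*).
\end{align*}
We then bound each piece on a high-probability event.

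\textbf{First term.} By Assumption~\ref{ass:compo_subgau} with $\kappa=1$, $Y_{i}$ and $Y_j$ are $1$-sub-Gaussian. Hence $Y_iY_j$ is sub-exponential: its $\psi_1$ norm is at most a product of $\psi_2$ norms, which is an absolute constant. Centering it at $\Sigma^*_{ij}$ keeps it sub-exponential, with an explicit MGF bound of the form $\EE{e^{\lambda(Y_iY_j-\Sigma^*_{ij})}}\le e^{c\lambda^2}$ for $\abs{\lambda}\le 1/c'$. A Bernstein inequality for sub-exponential martingale differences (the rounds in which the pair is observed form a predictable selection) then gives a deviation of order $\frac{\log}{N}\vee\sqrt{\frac{\log}{N}}$. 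Because $\counter{ij}{t-1}$ is random, we union bound over its possible values $s\in[t]$, at threshold $3\log t$, so each failure probability is $t^{-3}$ and the total is $O(t^{-2})$. Tracking the constants should yield the factor $16$.

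\textbf{Cross terms.} Apply the sub-Gaussian Hoeffding bound, again with a union bound over the counts, to get $\abs{\mean{i,t-1}-\mu_i^*}\le\sqrt{6\log t/\counter{i}{t-1}}$ and $\abs{\tilde\mu_j-\mu_j^*}\le\sqrt{6\log t/\counter{ij}{t-1}}$ (two-sided, probability $2t^{-2}$ each). The products give terms like $\sqrt{36\log^2 t/(\counter{ij}{t-1}\counter{i}{t-1})}$. The last product term is absorbed because $\counter{i}{t-1},\counter{j}{t-1}\ge \counter{ij}{t-1}$, so it is dominated by one of the cross terms. I expect this absorption, together with small constant bookkeeping, explains the asymmetric constants $48$ versus $36$. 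Summing the failure probabilities of the roughly five events (two-sided) gives $10t^{-2}$.

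\textbf{Consequence.} The "in particular" claim follows directly: on the event, $\Sigma_{ij,t}-\Sigma^*_{ij}=(\bar\Sigma_{ij,t-1}-\Sigma^*_{ij})+g_{ij}(t)\in[0,2g_{ij}(t)]$.

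\textbf{Main obstacle.} The difficulty is the first term: obtaining an explicit sub-exponential MGF constant for the centered product $Y_iY_j-\Sigma^*_{ij}$ from only marginal sub-Gaussianity, and handling the random number of samples. I would first use $Y_iY_j\le (Y_i^2+Y_j^2)/2$ and Cauchy–Schwarz on the MGF to reduce to $\EE{e^{\lambda Y_i^2}}\le (1-2\lambda)^{-1/2}$. Then I would bound the centered MGF by a Taylor expansion, in the standard way.
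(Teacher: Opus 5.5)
Your expansion of $\bar\Sigma_{ij,t-1}-\Sigma^*_{ij}$ is correct. The architecture is also the paper's: a sub-exponential Chernoff bound for the centered-product average with a union bound over the counter, Hoeffding bounds for the mean errors, and five two-sided events giving $10t^{-2}$. The gap is in the plug-in terms. Bounding the three products separately cannot give the stated $g_{ij}(t)$, and ``absorption'' does not explain the constant $48$.

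Write $N_i,N_j,N_{ij}$ for the counters at $t-1$. On your event the products are bounded by $6\log t/\sqrt{N_iN_{ij}}$, $6\log t/\sqrt{N_jN_{ij}}$ and $6\log t/\sqrt{N_iN_j}\le 6\log t/\sqrt{N_iN_{ij}}$. Absorbing the last one therefore doubles the first to $\sqrt{144\log^2 t/(N_iN_{ij})}$, not $\sqrt{48\log^2 t/(N_iN_{ij})}$. For instance, if $i$ and $j$ are always played together ($N_i=N_j=N_{ij}=N$), your correction terms total $18\log t/N$, while $g_{ij}(t)$ only allows $(\sqrt{48}+6)\log t/N$. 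Retuning the Hoeffding radii within the $10t^{-2}$ budget does not help, since each radius must be at least $\sqrt{6\log t/N}$.

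The missing idea is to exploit the cancellation between the first cross term and the last product:
$-(\bar\mu_i-\mu_i^*)(\tilde\mu_j-\mu_j^*)+(\bar\mu_i-\mu_i^*)(\bar\mu_j-\mu_j^*)=(\mu_i^*-\bar\mu_i)(\tilde\mu_j-\bar\mu_j)$.
This is exactly the paper's decomposition $\bar\Sigma_{ij,t-1}=\tilde\Sigma_{ij,t-1}+(\mu_i^*-\bar\mu_i)(\tilde\mu_j-\bar\mu_j)+(\mu_j^*-\bar\mu_j)(\tilde\mu_i-\mu_i^*)$. You then need to concentrate $\tilde\mu_j-\bar\mu_j$ directly, not through the triangle inequality, which again gives $2\sqrt{6\log t/N_{ij}}$ and the factor $12$.

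For fixed counters, $N_{ij}(\tilde\mu_j-\bar\mu_j)=\sum_{u:\,i,j\in A_u}(1-N_{ij}/N_j)X_{j,u}-\sum_{u:\,j\in A_u,\,i\notin A_u}(N_{ij}/N_j)X_{j,u}$. This is a sum of $N_j$ independent terms with variance proxy $N_{ij}(1-N_{ij}/N_j)\le N_{ij}$. The proxy vanishes when $N_j=N_{ij}$, which is precisely the cancellation your separate absolute-value bounds lose. Because two random counters are involved, the union bound runs over the $t^2$ pairs $(N_{ij},N_j)$. This forces the radius $\sqrt{8\log t/N_{ij}}$ (tail $2t^{-4}$ per pair), and multiplying by $\sqrt{6\log t/N_i}$ gives the $\sqrt{48}$. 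Your other cross term gives the $\sqrt{36}$, and the failure count stays $2t^{-2}+4\cdot 2t^{-2}$.

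On the first term, the factor $16$ needs precisely $\EE{e^{\lambda(Y_iY_j-\Sigma^*_{ij})}}\le e^{64\lambda^2}$ for $\abs{\lambda}\le 1/8$ (the paper's Lemma~\ref{lem:subex}). A moment-based Taylor expansion using $\EE{Y^{2k}}\le 2\cdot 2^kk!$ and Cauchy--Schwarz gives exactly these constants. Chernoff at level $3\log t$ then yields $16\pa{\frac{3\log t}{N_{ij}}\vee\sqrt{\frac{3\log t}{N_{ij}}}}$.
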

To build estimates well concentrated around $\bmu^*$, we will use the matrix $\bSigma_t$ defined above to design the following high probability confidence region for all $A\in\actions$
\begin{align}\cC_t(A)\triangleq \vmean{t-1}\!+\!\left\{\!\sizecorr{\sum_{i\in A}}\bxi\in \R^\arms,
\sum_{i\in A} \frac{\counter{i}{t-1}\xi_i^2}{{\abs{A}\abs{\xi_i}\!+\!\sum_{j\in A}0\vee\Sigma_{ij,t}}}\leq 8( \log t+\log\log t )+4em\right\}\cdot&\label{C_t(A)}\end{align}
The intuition behind this confidence region is similar to the one for empirical Bernstein confidence intervals, but the term $\sum_{j\in A}0\vee\Sigma_{ij,t}$ in the denominator replaces the usual empirical variance. 
To compare our confidence region with the one of \cite{Degenne2016},
notice first that their algorithm uses the matrix $\bGamma$ to build a confidence ellipsoid. They provide an analysis for this confidence ellipsoid using the \emph{Laplace's method} and the matrix relation $\bC\preceq_{+}\bGamma$.
In contrast, our
confidence region is based on the covariance matrix $\bSigma^*$. Our analysis is also different, as we use a \emph{covering-argument} analysis. This is because 
the covariance estimation and Assumption~\ref{ass:subexp} are both hard to handle with {Laplace's method}, that is more appropriate for sub-Gaussian random variables. Indeed, all calculations can be explicit and it is easy to construct a \textit{conjugate prior}. This is \textit{not the case} for  sub-exponential random variables.
Covering arguments are much more easier to use together with a diagonal matrix, so axis-aligned confidence region are desirable. We use an \emph{axis-realignment technique} based on the matrix relation $\pa{\Sigma^*_{ij}}_{ij\in A}\preceq_+ \text{diag}\pa{\sum_{j\in A}0\vee\Sigma^*_{ij}}_{i\in A}$. The upside is to avoid dealing with off-diagonal terms by transforming them into diagonal ones.
From all these previous observations, we can say that the confidence ellipsoid of \cite{Degenne2016} is  tighter as it does not require any axis realignment; however, not only the matrix $\bGamma$ is generally looser  than $\bSigma^*$ but also axis realignment does not alter the analysis, so that our  new approach outperforms theirs in terms of asymptotic regret upper bound.


As common in bandits, the major challenge in the analysis   is to prove that with high probability, $\bmu^*\in\cC_t(A)$ for any action $A\in \actions$.
The covering argument together with the conversion from an {axis-unaligned} confidence region into an {axis-aligned} confidence region allows us to 
 achieve this result (see Lemma~\ref{lem:concentration}).
Therefore, an optimistic estimate $\bmu_t$ of the true mean $\bmu^*$ can be found using an upper-confidence-bound approach: if $A_t,\bmu_t$ are defined as in Algorithm~\ref{algo:cov}, then, since $\bmu^*\in\cC_t(A^*)$, we have \[\be_{A_t}\transpose\bmu_t \geq \be_{A^*}\transpose\bmu^*.\]
\vspace{-.3cm}

\begin{algorithm}[t]
\begin{algorithmic}
\STATE \textbf{Initialization}: 
\STATE Play $A_1=[n]$, or at least a sequence $A_1,A_2,\dots,$ (no more than $n(n-1)/2$) such that for any $i,j\in [n]$, one of these $A_{t}$'s contains $\sset{i,j}$. We thus have $\counter{ij}{t-1}\geq 1$ for all $i,j\in [n]$.
\STATE \textbf{For all subsequent rounds} $t$:
\STATE Solve the following bilinear program to get $A_t$, with $\cC_t(A)$ defined by \eqref{C_t(A)}, and play $A_t,$
\vspace{-0.3cm}
\STATE \quad  \[(A_t,\bmu_t)\in \argmax_{A\in \actions,~\bmu\in\cC_t(A) }\be_A\transpose\bmu.\]
\end{algorithmic}
\caption{\textsc{escb-c} \textit{(Efficient Sampling for Combinatorial Bandits with Covariance estimate})}\label{algo:cov}
\vspace{-0.2cm}
\end{algorithm}

 
\noindent
The regret bound for \textsc{escb-c} is stated in Theorem~\ref{thm:upper_cov} with proof in Appendix~\ref{app:upper_cov}.
\begin{theorem}\label{thm:upper_cov}
Assume that the outcome distribution $\Prb_{\bX}$ satisfies Assumptions~\ref{ass:subgau} and~\ref{ass:subexp}, and define $\Delta\triangleq \min_{A\in \actions,~\Delta(A)>0} \Delta(A)$, $\Delta_{\max}\triangleq \max_{A\in \actions,~\Delta(A)>0} \Delta(A)$. If $\Delta$ is small enough, i.e., there exists a universal constant $c$ such that 
\vspace{-.2cm}
$$
\Delta\vee\pa{\Delta{+\Delta\log\pa{\frac{\Delta_{\max}}{\Delta}}}}^{3/2} \leq c\pa{\frac{\log(m+1)\sum_{i\in [\arms]}\max_{A\in \actions,i\in A}\sum_{j\in A}0\vee\Sigma^*_{ij}}{n^2}}^{3/2},
$$
 then the regret of Algorithm~\ref{algo:cov} is upper bounded as \[\limsup_{T\to \infty}\! \frac{\Delta}{\log T} R_T \leq c'  \log^2(m+1)\!\sum_{i\in [\arms]}\max_{A\in \actions,i\in A}\sum_{j\in A}0\vee\Sigma^*_{ij},\]
 \vspace{-.4cm}
 
\noindent
where $c'$ is a universal constant.
\end{theorem}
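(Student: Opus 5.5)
The plan follows the standard ESCB-type analysis \citep{combes2015combinatorial,Degenne2016}, adapted to the data-dependent, Bernstein-type confidence region \eqref{C_t(A)}. We decompose the regret according to three events at each round $t$. The first is the failure event $\mathcal{E}_t$ where some covariance bound of Proposition~\ref{prop:sigmaupper} fails. By a union bound over pairs $(i,j)$ its probability is at most $10n^2t^{-2}$, so it contributes $O(n^2\Delta_{\max})$. The second is the event $\mathcal{F}_t$ that $\bmu^*\notin\cC_t(A^*)$; Lemma~\ref{lem:concentration} will control it. The third is the event, on $\mathcal{E}_t^c\cap\mathcal{F}_t^c$, that a suboptimal action $A_t$ is played.

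On that third event, optimism gives $\be_{A_t}\transpose\bmu_t\geq \be_{A^*}\transpose\bmu^*$, hence
\[
\Delta(A_t)\leq \be_{A_t}\transpose(\bmu_t-\vmean{t-1})+\be_{A_t}\transpose(\vmean{t-1}-\bmu^*).
\]
We handle the second term with the same confidence region applied to $A_t$; a union bound over actions is replaced by the covering argument, which works per action through the axis-aligned form. Both terms are bounded via Cauchy--Schwarz on the constraint defining $\cC_t(A_t)$:
\[
\sum_{i\in A_t}\abs{\xi_i}\leq \sqrt{\Big(8(\log t+\log\log t)+4em\Big)\sum_{i\in A_t}\frac{m\abs{\xi_i}+s_{i,t}}{\counter{i}{t-1}}},\qquad s_{i,t}\triangleq \sum_{j\in A_t}0\vee\Sigma_{ij,t}.
\]
Proposition~\ref{prop:sigmaupper} gives $s_{i,t}\leq \sum_{j\in A}0\vee\Sigma^*_{ij}+2\sum_j g_{ij}(t)$. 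The term $m\abs{\xi_i}$ is absorbed by a quadratic-inequality argument; this is the sub-exponential, linear part of Bernstein's inequality. Writing $\sigma_i\triangleq\max_{A\ni i}\sum_{j\in A}0\vee\Sigma^*_{ij}$, one obtains a condition of the form
\[
\Delta(A_t)^2\lesssim \log t\sum_{i\in A_t}\frac{\sigma_i}{\counter{i}{t-1}}+(\text{lower-order terms from }m\text{ and }g_{ij}).
\]
The lower-order terms are nonnegligible only while $\counter{i}{t-1}$ (and $\counter{ij}{t-1}$) are of order $\log t\cdot\mathrm{poly}(n,m)/\Delta^{2/3}$ or so. This is exactly where the small-$\Delta$ hypothesis with the $3/2$ exponents enters: it ensures these phases cost at most a constant times the main term, including the $\log(\Delta_{\max}/\Delta)$ factor coming from peeling over gap levels.

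The main term is then converted into the $\log^2(m+1)$ bound by the refined counting argument of \citet{Degenne2016}, originally from \citet{kveton2015tight}. We define decreasing thresholds $\alpha_k$ and events ``at least $m\beta_k$ arms of $A_t$ have $\counter{i}{t-1}\leq \alpha_k\sigma_i m\log T/\Delta(A_t)^2$''. The displayed inequality forces one such event to hold on every bad round. Charging each such round to the undersampled arms and summing the geometric series with weights $\sigma_i$ gives $\sum_i\sigma_i\log T\,\log^2(m+1)/\Delta$. The factor $m$ cancels with $\sum_{i\in A_t}$ because $\sigma_i$ already aggregates covariance over the action, which is the point of the realignment.

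The main obstacle is Lemma~\ref{lem:concentration}, i.e.\ showing $\Prb[\mathcal{F}_t]\lesssim t^{-1}(\log t)^{-2}$ so that it contributes $o(\log T)$. For this we use the realignment $(\Sigma^*_{ij})_{i,j\in A}\preceq_+\mathrm{diag}(\sum_{j\in A}0\vee\Sigma^*_{ij})$. Together with Assumption~\ref{ass:subexp}, it yields an exponential-moment bound for $\blambda\transpose(\vmean{t-1}-\bmu^*)$ restricted to $A^*$ with a diagonal quadratic form, valid for $\norm{\blambda}_1\leq 1/2$. The counts $\counter{i}{t-1}$ are random, so we stratify them on a geometric grid, as in \citet{Magureanu2014} and \citet{combes2015combinatorial}. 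On each cell we cover the directions $\blambda$ with a finite net, apply a Chernoff bound per net point (Bernstein-type because of the $\ell_1$ restriction on $\blambda$, giving the $\abs{A}\abs{\xi_i}$ term), and union bound. The net and grid sizes produce the additive $4em$ and $\log\log t$ terms. Finally, on $\mathcal{E}_t^c$ we may replace $\bSigma^*$ by the upper estimate $\bSigma_t$ monotonically, since $\Sigma_{ij,t}\geq\Sigma^*_{ij}$ and the region grows with the denominators.
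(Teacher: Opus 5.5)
Your regret step follows the paper's Step~1: optimism, Cauchy--Schwarz on the constraint of $\cC_t(A_t)$, the quadratic inequality, and substituting the covariance upper bounds. Your threshold counting in the style of Kveton et al.\ and Degenne--Perchet, with weights $\sigma_i$, is a legitimate substitute for the paper's Theorem~\ref{thm:dege} (reverse amortization plus peeling) for the main $\log^2(m+1)$ term.

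The genuine gap is in the concentration part, exactly where the covariance enters. You define $\mathcal F_t=\sset{\bmu^*\notin\cC_t(A^*)}$, which is a two-sided event. You plan to control it by an exponential-moment bound for $\blambda\transpose(\vmean{t-1}-\bmu^*)$ with the diagonal form $\bD=\mathrm{diag}\pa{\sum_{j\in A^*}0\vee\Sigma^*_{ij}}$, over a net of arbitrary directions. But the realignment gives $\blambda\transpose\bSigma^*\blambda\leq\blambda\transpose\bD\blambda$ only for $\blambda\geq 0$; termwise it needs $\lambda_i\lambda_j\geq 0$ whenever $\Sigma^*_{ij}<0$. It fails for mixed signs. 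If $\bX$ is a uniformly random canonical vector of $\R^3$ (the sparse regime the paper targets), then up to scaling $\bSigma^*=\frac32 I-\frac12\mathbf{1}\mathbf{1}\transpose$ and $\bD=I$, and $\blambda=(1,-1,0)$ gives $3>2$. In general only $\bSigma^*\preceq 2\bD$ holds, which halves the Chernoff exponent that the radius of $\cC_t$ is calibrated for.

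The missing idea is that optimism only needs one-sided coverage. The paper uses $\mathfrak{C}_t=\sset{\bmu^*\vee\vmean{t-1}\in\cC_t(A^*)}$, which still gives $\be_{A_t}\transpose\bmu_t\geq\be_{A^*}\transpose\bmu^*$ and involves only coordinates with $\mu^*_i\geq\mean{i,t-1}$. The Chernoff parameters $\lambda_i=\epsilon_i/\pa{2(\sigma_i^2+\abs{A^*}\epsilon_i)}$ are then nonnegative, and they automatically satisfy $\norm{\blambda}_1\leq 1/2$. This is precisely what makes $\preceq_+$ usable. The paper then handles the random counters by peeling and Lemma~8 of \citet{Magureanu2014} on the levels $\zeta_i$, rather than by a net over directions.

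Second, bounding $\be_{A_t}\transpose(\vmean{t-1}-\bmu^*)$ through $\cC_t(A_t)$ is not justified as written. The covering argument controls a \emph{fixed} action: it removes the randomness of the counters, not of the action. So it does not replace a union bound over $\cA$ for the data-dependent $A_t$. This is repairable. The over-estimation event is again one-sided (realign with $-\blambda\geq 0$), and its probability is summable for each fixed $A$. A union bound then costs only a $T$-independent factor $\abs{\cA}$, which is harmless for the $\limsup$.

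The paper avoids this altogether by working on $\mathfrak{D}_t=\sset{\be_{A_t}\transpose(\vmean{t-1}-\bmu^*)\leq\Delta(A_t)/2}$. There, $\Delta(A_t)\leq 2\be_{A_t}\transpose(\bmu_t-\vmean{t-1})$ uses only $\bmu_t\in\cC_t(A_t)$, which holds by construction. The rounds in $\neg\mathfrak{D}_t$ are bounded by $8nm^2\Delta_{\max}/\Delta^2$ via Lemma~2 of \citet{Degenne2016}.

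Finally, your sketch of the lower-order terms is off in scale. The pair counts that matter are of order $\log T/\Delta^{4/3}$ up to factors in $m$, not $\Delta^{-2/3}$. This is the case $\alpha=3/4$ in Theorem~\ref{thm:dege}, and it produces the $\Delta^{-1/3}$ term that the small-$\Delta$ condition is designed to dominate.
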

Notice that the  bound in Theorem~\ref{thm:upper_cov} is
tight, up to a poly-logarithmic factor in $m$, with respect to the lower bound  in Theorem~\ref{thm:lower}, in the case where $\bSigma^*$ has non-negative entries.
Moreover,  we focus on the asymptotic behavior of the regret (w.r.t.\  $T$) when $\Delta$ is small, i.e., when the problem becomes very difficult. While the quantity $c  \log^2(m+1) \sum_{i\in [\arms]}\max_{A\in \actions,i\in A}\sum_{j\in A}0\vee\Sigma^*_{ij} \log(T)/\Delta$ presented in Theorem~\ref{thm:upper_cov} highlights the main dependence on both $\Delta$ and $T$, 
we prove a more precise non-asymptotic upper bound in Appendix~\ref{app:upper_cov}, \eqref{rel:fullstate_cov}, which holds for all $\Delta>0$. Indeed, as for $\textsc{ucb-v}$, the errors from estimating $\bold\Sigma$ generate an extra term in the upper bound. However, since these errors are multiplied with estimation errors on the means, their impact is of second order. In particular, 
 for $\Delta$ small enough, this extra term becomes negligible compared to the main term. Therefore, the term from covariance estimation errors is \textit{not} present in Theorem~\ref{thm:upper_cov},
but appears when $\Delta$ is  far from~$0$. 
Finally, remark that when the covariance $\bSigma^*$ is known, then one can consider the confidence region where $\Sigma_{ij,t}$ is replaced by $\Sigma^*_{ij}$. This avoids covariance estimation errors, and gives the upper bound of Theorem~\ref{thm:upper_cov} when $\Delta{+\Delta\log\pa{{\Delta_{\max}}/{\Delta}}}$ is smaller than $\displaystyle {\sum_{i\in [\arms]}\max_{A\in \actions,i\in A}\sum_{j\in A}0\vee\frac{\Sigma^*_{ij}}{n\cdot m}}\cdot$
\vspace{-.15cm}

\begin{rem}\label{rk:intersect}
Considering the intersection of the region from Algorithm~\ref{algo:cov} with the one of \textsc{cucb-v}, we can replace $\log^2(m+1)\sum_{j\in A}0\vee\Sigma^*_{ij}$ by ${m\Sigma^*_{ii}}\wedge\pa{\log^2(m+1)\sum_{j\in A}0\vee\Sigma^*_{ij}}$
 in Theorem~\ref{thm:upper_cov}. \end{rem}
 \vspace{-.5cm}


\section{Application to sparse outcomes}
In this section, we shall consider an additional structural assumption on the vector $\bX$, namely that it is $s$-sparse in the sense that
$$\norm{\bX}_0\leq s,$$
i.e., the number of nonzero components of $\bX$ is smaller than $s$, where $s$ is a fixed known parameter.\footnote{For example, the Dirichlet-multinomial distribution with $s$ trials is $s$-sparse.} Importantly, the set of components which are nonzero is \emph{not fixed nor known}, and may change over time. It should be noted, however, that there is a significant difference between the stochastic  and the adversarial cases: In the later, the set of components which are nonzero change arbitrarily over time, whereas in the former, this set is  sampled i.i.d. Notice, this sparse stochastic setting is different than the usual stochastic sparse bandit, where $\bmu^*$ is assumed to be sparse; see e.g., \citet{KwonPV17} for the classical MAB setting, and \citet{yadkori12sparse,carpentier12sparse} for the linear bandit setting.  For simplicity, we further assume that $\norm{\bX}_\infty \leq 1$. As we already saw in Proposition~\ref{prop:bounded}, this implies Assumption~\ref{ass:compo_subgau}~and~\ref{ass:subexp}.
The difficulty of this setting is that both the approach of \citet{Degenne2016} and standard methods such as \textsc{cucb-v} would not reach the lower bound for the regime $s\leq m$, as we will see. The reason is that a correlation exists between the components, because of sparsity, and must be taken into account.

\paragraph{Why sparsity in semi-bandits?}
Sparsity is nowadays a very standard assumption in learning theory (that potentially does not need any further motivations). There are many examples of online learning scenarios  naturally involving some sparse structure. For instance, in the celebrated click-through-rate optimization, it is safe to assume that users would only click on a few of the different ads that can be displayed (those that can catch their eyes for any reason, say). 
Similarly, in recommender systems, it is safe to assume that a user will browse/buy items from a specific category and not the other (for instance, a segment of the population in e-shops only buy bottles of wines and others only video-games or clothes).

Other examples involve settings where outcomes are usually zero except on very rare occasions: In the online routing, the packets are sent in a network and are lost if a server of that network has a failure. Because of failsafe procedures, failures are \textit{desynchronized} and typically only one (or at most a few) of them can happen simultaneously. 
In all of these examples, the decision maker has some combinatorial problem to solve: select an admissible path, select a \textit{diverse} bundle of object/ads to display, etc., and only a few of the base items will generate non-zero outcome.

\subsection{Lower bound}
To start our study of sparse outcomes,  
we state a new lower bound in Theorem~\ref{thm:lower_sparse}, that is  valid for the setting  described above. This lower bound is built on the same ideas as Theorem~\ref{thm:lower}, with a notable variation: when reducing to a MAB problem, we do not obtain the necessary conditions for the application of \citet{lai1985asymptotically}, because of the linear dependence between the $\mu_i^*$'s.
Thus, we use instead the lower bound from \citet{graves1997asymptotically}.
More precisely, we consider the same action space $\cA$, and  incorporate the sparsity assumption as an extra constraint for defining a worst case distribution.
\begin{theorem}\label{thm:lower_sparse}
 For any $n,m,s\in \N^*$ such that $n/m$, $n/s$, $1\vee (s/m)$ are integers, $n/m,n/s\geq 2$, any $\Delta\in (0,\frac{ms}{2(n-m)}]$ and any consistent policy, there is a problem with $n$ arms --- characterized by some action space $\actions$ with $m=\max\sset{\abs{A},~A\in \actions}$ and some vector of outcomes $\bX$ with all gaps equal to $\Delta$ satisfying  $\norm{\bX}_\infty \leq 1,~\norm{\bX}_0\leq s$ --- on which the regret satisfies
 
 \[\liminf_{T\to \infty} \frac{\Delta }{\log\pa{T}}R_T\geq\frac{s(s\wedge m)\pa{1-2m/\arms}}{4}\cdot\]
 \vspace{.01cm}
 
\end{theorem}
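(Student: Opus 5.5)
We would follow the proof of Theorem~\ref{thm:lower}: the same action space of $n/m$ disjoint actions $A_k=\sset{(k-1)m+1,\dots,km}$, each of size $m$. Only the outcome distribution changes, since it must now be $s$-sparse and bounded. The equality $\norm{\bX}_0\le s$ ties the means of different actions together, so a perturbation of one action generally moves other arms too. We would therefore not apply \citet{lai1985asymptotically}, but the \citet{graves1997asymptotically} lower bound for controlled Markov chains / structured bandits. It states that any consistent policy satisfies $\liminf \EE{N_k(T)}/\log T\ge c_k$, where $N_k(T)$ is the number of times $A_k$ is played. Here $c_k$ solves a semi-infinite LP: for every ``confusing'' alternative distribution $\Prb'$ that leaves the observation law of the optimal action $A_1$ unchanged but makes some other action optimal, we need $\sum_k N_k\,\mathrm{KL}(\Prb_{A_k}\Vert \Prb'_{A_k})\ge \log T$. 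We plan to use only alternatives that modify the laws of $A_1$ and of a single $A_k$ (plus some ``reservoir'' arms outside $A_1\cup A_k$). This reduces the constraint to $N_k\ge \log T/\mathrm{KL}_k$ and gives $R_T\gtrsim \Delta\sum_{k\ge 2}\log T/\mathrm{KL}_k$.

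\textbf{Construction.} Since $n/s$ is an integer, partition $[n]$ into $n/s$ consecutive blocks $B_1,\dots,B_{n/s}$ of size $s$. Draw one block $B$ at random and set $\bX=\be_B$, so $\norm{\bX}_0\le s$ and $\norm{\bX}_\infty\le1$. Because $1\vee(s/m)$ is an integer, the blocks and actions nest in one of two ways.
\begin{itemize}
\item If $s\ge m$, each action lies inside a single block. Then $\be_{A_k}\transpose\bX=m\cdot\Bernoulli(p_k)$, where $p_k$ is the probability of the block containing $A_k$.
\item If $s\le m$, each action is a union of $m/s$ blocks. Then $\be_{A_k}\transpose\bX=s\cdot\Bernoulli(p_k)$, with $p_k$ the total probability of its blocks.
\end{itemize}
In both cases $\be_{A_k}\transpose\bX=(s\wedge m)\cdot\Bernoulli(p_k)$. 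Start from block probabilities that are roughly uniform, so $p_k\approx (s\vee m)/n$. Then increase the mass of the blocks of $A_1$ (taking it from blocks outside all the $A_k$'s under consideration, or spreading it uniformly) so that every gap equals $\Delta$, i.e.\ $(s\wedge m)(p_1-p_k)=\Delta$. The condition $\Delta\le ms/(2(n-m))$ is exactly what keeps all probabilities in $[0,1]$ and keeps $p_k\le$ about $1/2$.

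\textbf{Alternative and KL.} For a fixed $k\ge2$, the alternative $\Prb'$ moves mass $2\Delta/(s\wedge m)$ from the remaining blocks onto the blocks of $A_k$, and leaves the mass on $A_1$'s blocks untouched. Then $A_1$ has the same observation law under $\Prb$ and $\Prb'$, while $A_k$ becomes strictly better than $A_1$. Observing $A_k$ reveals exactly which of its blocks fired, i.e.\ a categorical variable. By the chain rule, its KL is dominated by the Bernoulli KL $\mathrm{kl}(p_k,p_k+2\Delta/(s\wedge m))\lesssim \frac{(2\Delta/(s\wedge m))^2}{2p_k(1-p_k)}$. This yields $\EE{N_k}\gtrsim \log T\,(s\wedge m)^2p_k(1-p_k)/(2\Delta^2)$. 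Summing $\Delta\,\EE{N_k}$ over the $n/m-1$ suboptimal actions with $p_k\approx (s\vee m)/n$ gives
\[
\frac{\Delta\, R_T}{\log T}\gtrsim \frac{n}{m}\cdot\frac{(s\wedge m)^2 (s\vee m)}{n}\cdot\frac{1-2m/n}{4} = \frac{s(s\wedge m)(1-2m/n)}{4}.
\]
Here we used $(s\wedge m)(s\vee m)=sm$, and the factor $(1-2m/n)$ absorbs both $(n/m-1)/(n/m)$ and $1-p_k$.

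\textbf{Main obstacle.} The delicate step is verifying the hypotheses of \citet{graves1997asymptotically}, namely a finite parametrization with the right continuity and the feasibility of the confusing alternatives, while respecting the simplex constraint on block probabilities. One must also check that the optimal solution of the Graves--Lai LP really decouples across $k$: alternatives that perturb several actions at once must not yield a smaller bound. We would first try to handle this by restricting to the single-$k$ alternatives above. These already give valid necessary constraints, because any feasible allocation must satisfy all of them simultaneously. Second, we would keep precise track of the KL constant (the $\mathrm{kl}$ upper bound by $\chi^2$) to recover the explicit factor $1/4$.
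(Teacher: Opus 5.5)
Your overall plan is the paper's: disjoint actions, per-action rewards $(s\wedge m)\cdot\Bernoulli(p_k)$ with $p_k\approx(s\vee m)/n$, Graves--Lai because the sparsity budget ties the means together, and a $\chi^2$ bound on $\mathrm{kl}$. However, both the construction and the decoupling step fail as written.

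\textbf{The block construction.} Integrality of $1\vee(s/m)$ gives $m\mid s$ when $s\ge m$, but says nothing when $s<m$. For example, $n=12$, $m=4$, $s=3$ satisfies every hypothesis, yet $A_1=\{1,2,3,4\}$ meets two size-$3$ blocks, and $\be_{A_1}\transpose\bX\in\{0,1,3\}$ is not of the form $s\cdot\Bernoulli(p_1)$.

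When $s>m$ the blocks do nest, but then all $s/m$ actions in a block have identical rewards. In particular $A_2,\dots,A_{s/m}\subset B_1$ have exactly the law of $A_1$, so their gap is $0$ and $(s\wedge m)(p_1-p_k)=\Delta$ is impossible. Moreover, your alternative ``move mass onto the block of $A_k$'' changes the laws of all $s/m$ actions of that block at once. Its constraint therefore only controls $\sum_{k'\in\mathrm{block}(k)}c_{k'}$. These alternatives give $n/s-1$ effective constraints instead of $n/m-1$, i.e.\ a bound of order $m^2/\Delta$ rather than $sm/\Delta$.

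The paper avoids this by drawing $1\vee(s/m)$ \emph{distinct actions} at random and lighting the first $s\wedge m$ arms of each chosen action. With $q=(s\vee m)/n$, each action $A_k$ with $k\ge 2$ is chosen with marginal probability $q-\delta m/n$, and $A_1$ with probability $q+\delta(1-m/n)$. Each action's observation law then depends on its own $p_k$ only, and the sole coupling is $\sum_k p_k=1\vee(s/m)$.

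\textbf{The decoupling.} There is no free reservoir. The $n/m$ actions partition $[n]$ and $A_1$'s law must stay fixed, so mass added to $A_k$ has to be removed from other suboptimal actions. Their KL terms, weighted by their own $c_{k'}$, appear in the same constraint. Hence single-$k$ alternatives do not yield $c_k\,\mathrm{KL}_k\ge 1$.

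Handling this coupling is the core of the paper's proof. With $\delta=\Delta/(s\wedge m)$, it raises $p_i$ to $p_1$ and lowers each of the other $n/m-2$ suboptimal means by $\delta/(n/m-2)$. It bounds the two kl's by $4\delta^2/q$ and $4\delta^2/((n/m-2)^2q)$; the condition $\Delta\le ms/(2(n-m))$ keeps the $\chi^2$ denominators at least $q/4$. Solving the symmetric LP gives the common value $c=q(n/m-2)/(4\delta^2(n/m-1))$. Hence $\liminf_{T\to\infty} R_T/\log T\ge(s\wedge m)\delta(n/m-1)c=(s\wedge m)q(n/m-2)/(4\delta)$. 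So the factor $1-2m/n$ comes from this coupling, not from ``$(n/m-1)/(n/m)$ and $1-p_k$''.

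Two further points on constants. Shifting $A_k$ by $2\Delta/(s\wedge m)$ instead of $\Delta/(s\wedge m)$ (plus a vanishing margin) quadruples the KL and loses the constant $1/4$. Also, $(p-p')^2/(2p(1-p))$ is only a local approximation of $\mathrm{kl}(p,p')$, not an upper bound.
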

\vspace{-.25cm}

\noindent
The proof is given in Appendix~\ref{app:lower_sparse}. To give an idea, contrary to Theorem~\ref{thm:lower}, we have more freedom in the covariance, and ${\bold X}$ can be chosen to maximize $\mathbb{V}(\sum_{i\in A}X_i)$ for each action $A$, up to  the constraints ${\Vert{\bold X}\Vert_\infty\leq 1,~\Vert{\bold X}\Vert_0= s}$. The maximal value of $\sum_{i\in A}X_i$ is thus $(s\wedge m)$.
 Now consider for simplicity the softer constraint $\mathbb{E}\Vert{\bold X}\Vert_0= s$. If ${\bold X}$ is chosen so that $\sum_{i\in A}X_i/(s\wedge m)$ is Bernoulli of parameter $p$, then the optimal $p$ is equal to 
 $(s\vee m)/\arms$. 
 The variance is about $p(s\wedge m)^2=ms(s\wedge m)/\arms.$ Multiplying this by $n/m$ (the number of actions) and dividing by the gap $\Delta$ gives the order of the lower bound.
\vspace{-.1cm}
\subsection{Our approach for sparse semi-bandits}

In this subsection, we adapt our techniques to the sparse semi-bandit setting.
Since $\Vert{\bold X}\Vert_\infty\leq 1$, the $\ell_0$-inequality  $\Vert{\bold X}\Vert_0\leq s$ immediately implies the $\ell_1$-inequality  $\Vert{\bold X}\Vert_1\leq s$. As we will actually only use sparsity  through the latter inequality, we can relax our assumption on the model into  $\norm{\bX}_1\leq s$, for more generality. 
Let $\nu^*_i\triangleq\EE{\abs{X_i}}$, and $\meanabs{i,t-1}$ the corresponding empirical average estimate:
\(\meanabs{i,t-1}\triangleq\frac{\sum_{u\in[t-1]}\II{i\in A_u}\abs{X_{i,u}}}{\counter{i}{t-1}}\cdot\)
Our approach is based on replacing $\sum_{j\in A} 0\vee\Sigma^*_{ij}$ by $\nu^*_i(s\wedge m)$ (see Lemma~\ref{lem:sparse}, proved in Appendix~\ref{app:sparse}).  Using this, it is possible to estimate $\nu^*_i$ instead of each $\Sigma^*_{ij}$.
\begin{lem}\label{lem:sparse}  $\sum_{j\in A}0\vee\Sigma^*_{ij}\leq 2\nu^*_i(s\wedge m)$.
\end{lem}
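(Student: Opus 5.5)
We must prove $\sum_{j\in A}0\vee\Sigma^*_{ij}\leq 2\nu^*_i(s\wedge m)$, where $\|\bX\|_\infty\leq 1$ and $\|\bX\|_1\leq s$ (hence also $\|\bX\|_0\leq s$ in the original model). The plan is to split the covariance as $\Sigma^*_{ij}=\EE{X_iX_j}-\mu^*_i\mu^*_j$ and to bound the positive part of each term by the sum of the positive parts of its two pieces. Concretely, write $0\vee\Sigma^*_{ij}\leq 0\vee \EE{X_iX_j} + 0\vee(-\mu_i^*\mu_j^*)$, so that
\[\sum_{j\in A}0\vee\Sigma^*_{ij}\leq \sum_{j\in A}\abs{\EE{X_iX_j}}+\sum_{j\in A}\abs{\mu_i^*}\abs{\mu_j^*}.\]
The whole task is then to show that each of the two sums on the right is at most $\nu^*_i(s\wedge m)$.

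For the first sum, move the absolute value inside: $\sum_{j\in A}\abs{\EE{X_iX_j}}\leq \EE{\abs{X_i}\sum_{j\in A}\abs{X_j}}$. Pointwise we have $\sum_{j\in A}\abs{X_j}\leq \abs{A}\,\|\bX\|_\infty\leq m$, and also $\sum_{j\in A}\abs{X_j}\leq \|\bX\|_1\leq s$. Hence $\sum_{j\in A}\abs{X_j}\leq s\wedge m$ almost surely, and the first sum is at most $(s\wedge m)\EE{\abs{X_i}}=(s\wedge m)\nu^*_i$.

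For the second sum, use $\abs{\mu_i^*}\leq \nu_i^*$ and $\sum_{j\in A}\abs{\mu^*_j}\leq \EE{\sum_{j\in A}\abs{X_j}}\leq s\wedge m$, by the same pointwise bound. This gives at most $\nu^*_i(s\wedge m)$, and adding the two bounds yields the factor $2$.

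The only delicate point is handling the positive part of a difference, that is, justifying $0\vee(a-b)\leq 0\vee a+0\vee(-b)$. This follows from subadditivity of $x\mapsto 0\vee x$. Beyond that, one should check that the $\ell_1$ bound is all that is needed, so the argument works under the relaxed assumption $\|\bX\|_1\leq s$. This plan avoids the $\ell_0$ structure entirely, so I expect no real obstacle.
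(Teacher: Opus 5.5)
Your proof is correct and matches the paper's argument step for step. Like the paper, you bound $0\vee\Sigma^*_{ij}$ by $\EE{\abs{X_iX_j}}+\abs{\mu_i^*\mu_j^*}$ and then use the pointwise bound $\sum_{j\in A}\abs{X_j}\leq s\wedge m$ (which needs only $\norm{\bX}_\infty\leq 1$, $\norm{\bX}_1\leq s$ and $\abs{A}\leq m$) for both the cross-moment term and the product-of-means term.
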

We can therefore use the same algorithm (Algorithm~\ref{algo:cov}), but with a confidence region $\cC_t$ independent of $A$, since summing over $A$ or $[n]$ on the main sum doesn't change the algorithm and the second sum $\sum_{j\in A}0\vee\Sigma_{ij,t}$ is replaced by an estimates of the upper bound given in Lemma~\ref{lem:sparse}.
\vspace{-0.55cm}

\begin{align}
\cC_t\triangleq \vmean{t-1}\!+\!\left\{\!\sizecorr{\sum_{i\in A}}\bxi\in \R^\arms,\!
\sum_{i\in [\arms]} \frac{\counter{i}{t-1}\xi_i^2}{{m\abs{\xi_i}\!+\!2(s\!\wedge\! m)\nu_{i,t} }}\!\leq 8( \log(t)\!+\!\log(\log(t)) )\!+\!4em\!\right\},\label{C_t}\end{align}
\vspace{-0.15cm}

where the upper bound estimate $\nu_{i,t}\triangleq \meanabs{i,t-1}\!+\!\sqrt{\frac{1.5\log(t)}{\counter{i}{t-1}}}$ of $\nu^*_i$ is a simple consequence of Hoeffding's inequality, using that $\abs{X_{i,u}}$ is $1/4$-sub-Gaussian. 
Our algorithm is stated in Algorithm~\ref{algo:sparse}. As a byproduct of Theorem~\ref{thm:upper_cov}, we provide an upper bound for the regret in the sparse semi-bandit setting in Corollary~\ref{cor:upper_cov} (see Appendix~\ref{app:upper_cov2}, \eqref{rel:fullstate_sparse}, for a more precise bound). Again, notice we are reaching the lower bound of Theorem~\ref{thm:lower_sparse}, using the relation $\sum_i \nu^*_i =\mathbb{E}\Vert{\bold X}\Vert_1\leq s.$

\begin{algorithm}[t]
\begin{algorithmic}
\STATE \textbf{Initialization}: 
\STATE Play $A_1=[n]$, or at least a sequence $A_1,A_2,\dots,$ (no more than $n$) such that
all arm have been sampled once.
 We thus have $\counter{i}{t-1}\geq 1$ for every arm $i\in [n]$.
\STATE \textbf{For all subsequent rounds} $t$:
\STATE Solve the following bilinear program to get $A_t$, with $\cC_t$ defined by \eqref{C_t}, and play $A_t$.
\vspace{-0.3cm}
\STATE \quad  \[(A_t,\bmu_t)\in \argmax_{A\in \actions,~\bmu\in\cC_t }\be_A\transpose\bmu.\]
\end{algorithmic}
\caption{\textsc{escb-c} modified for the case of $\norm{\cdot}_1$-constrained outcomes}\label{algo:sparse}
\vspace{-0.2cm}
\end{algorithm}

\vspace{-.1cm}
\begin{cor}\label{cor:upper_cov}Assume that the outcome distribution $\Prb_{\bX}$ satisfies $\Vert{\bold X}\Vert_\infty\leq 1$ and $\Vert{\bold X}\Vert_1\leq s$, and that
\vspace{-0.26cm}$$\pa{\Delta(s\wedge m)}^{2/3}\vee\pa{m\Delta+m\Delta\log(\Delta_{\max}/\Delta)}\leq c\log(m+1){\sum_{i\in [\arms]}\frac{\nu^*_i(s\wedge m)}{n}}\CommaBin$$
\vspace{-0.4cm}
for some universal constant $c$. Then the regret of Algorithm~\ref{algo:sparse} is upper bounded as 

\begin{align*}\limsup_{T\to \infty} \frac{\Delta}{\log\pa{T}}R_T \leq c'  \log^2(m+1) \sum_{i\in [n]}\nu^*_i(s\wedge m)\leq c'  \log^2(m+1) (s\wedge m) s,\end{align*}
\vspace{-0.35cm}

\noindent
where $c'$ is a universal constant.
\end{cor}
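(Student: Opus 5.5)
The corollary should follow by running the proof of Theorem~\ref{thm:upper_cov} with one substitution: the per-action covariance row sum $\sum_{j\in A}0\vee\Sigma^*_{ij}$ is replaced everywhere by the action-independent proxy $2\nu^*_i(s\wedge m)$, which Lemma~\ref{lem:sparse} allows. The analysis of Theorem~\ref{thm:upper_cov} uses $\bSigma^*$ in only two places.

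\emph{Concentration.} The first place is the statement that, with high probability, $\bmu^*\in\cC_t(A)$ for every $A$ (Lemma~\ref{lem:concentration}). Its proof uses the axis-realignment
\[
\pa{\Sigma^*_{ij}}_{ij\in A}\preceq_+ \text{diag}\pa{\textstyle\sum_{j\in A}0\vee\Sigma^*_{ij}}_{i\in A},
\]
followed by a covering argument with the Bernstein-type bound from Assumption~\ref{ass:subexp}.
\begin{itemize}
\item Assumption~\ref{ass:subexp} holds here by Proposition~\ref{prop:bounded}, since $\norm{\bX}_\infty\leq1$.
\item By Lemma~\ref{lem:sparse}, the diagonal matrix can be replaced by the larger one $\text{diag}(2(s\wedge m)\nu^*_i)_{i\in A}$, and the relation $\preceq_+$ still holds.
\item The estimate $\nu_{i,t}$ upper-bounds $\nu^*_i$ with probability $1-O(t^{-3})$ by Hoeffding, and satisfies $\nu_{i,t}-\nu^*_i\leq 2\sqrt{1.5\log t/\counter{i}{t-1}}$. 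So the estimated denominator dominates the true one. This plays the role of Proposition~\ref{prop:sigmaupper}, with a simpler error $g_i(t)=O(\sqrt{\log t/N_{i,t-1}})$ in place of $g_{ij}(t)$.
\item The main sum over $A$ can be enlarged to a sum over $[n]$ with $|A|\leq m$ in the $|\xi_i|$ term. This only enlarges the region, so $\bmu^*\in\cC_t$ still holds, and the optimism inequality $\be_{A_t}\transpose\bmu_t\geq\be_{A^*}\transpose\bmu^*$ follows.
\end{itemize}

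\emph{Regret decomposition.} The second place is the regret decomposition, which follows the proof of Theorem~\ref{thm:upper_cov}. On the good event, $\Delta(A_t)\leq \be_{A_t}\transpose(\bmu_t-\bmu^*)$. Applying Cauchy--Schwarz with the weights of $\cC_t$ bounds the suboptimality gap by
\[
\sqrt{\log t\sum_{i\in A_t}\frac{m|\xi_i|+2(s\wedge m)\nu_{i,t}}{N_{i,t-1}}}.
\]
The same peeling/counting argument (the source of the $\log^2(m+1)$ factor) then gives the leading term $c'\log^2(m+1)\sum_i\nu^*_i(s\wedge m)\log T/\Delta$. The remaining terms have to be checked to be of lower order.
\begin{itemize}
\item The $m|\xi_i|$ term produces a contribution of order $m\log T(1+\log(\Delta_{\max}/\Delta))$.
\item The estimation error $\nu_{i,t}-\nu^*_i$ produces a contribution of order $((s\wedge m)/\Delta)^{2/3}$ in the analogous bookkeeping; it is second order because it multiplies the mean-estimation errors.
\end{itemize}
The hypothesis of the corollary is exactly what makes each of these at most a constant times $\log(m+1)\sum_i\nu^*_i(s\wedge m)\log T/\Delta$, so they are absorbed into the main term.

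\emph{Final inequality.} The last step is $\sum_i\nu^*_i=\EE{\norm{\bX}_1}\leq s$.

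\emph{Main obstacle.} I expect the hardest step to be tracking the second-order terms: checking that the condition stated in the corollary (with exponent $2/3$ and the $m\Delta\log(\Delta_{\max}/\Delta)$ term) is what the modified error $g_i$ requires. This requires redoing the final optimization in the non-asymptotic bound \eqref{rel:fullstate_cov} with $g_{ij}$ replaced by the Hoeffding width of $\nu_{i,t}$. I would first derive the non-asymptotic analogue and then read off the threshold on $\Delta$.
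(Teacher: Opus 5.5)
Your plan follows the paper's own route. Appendix~\ref{app:upper_cov2} just reruns the proof of Theorem~\ref{thm:upper_cov}, replacing $\sum_{j\in A}0\vee\Sigma^*_{ij}$ via Lemma~\ref{lem:sparse} and using Hoeffding in place of Proposition~\ref{prop:sigmaupper}. Two steps of your sketch need fixing, though.

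\textbf{The region comparison is backwards.} Replacing the sum over $A$ by a sum over $[n]$ adds nonnegative terms to the constraint, so it \emph{shrinks} the region. Only two changes enlarge it: replacing $|A|$ by $m$, and replacing the variance proxy by the larger $2(s\wedge m)\nu_{i,t}$. Hence $\bmu^*\in\cC_t$, which constrains the deviations of all $n$ arms, does not follow from the $A^*$-restricted covering argument of Lemma~\ref{lem:concentration}.

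It is also not needed. Optimism only requires that the point equal to $\bmu^*\vee\vmean{t-1}$ on $A^*$ and to $\vmean{t-1}$ off $A^*$ lies in $\cC_t$. Its deviation vanishes outside $A^*$, so this is the sparse analogue of the event $\mathfrak{C}_t$ in Step~2, with $m\geq|A^*|$ in the $|\xi_i|$ term. Lemma~\ref{lem:concentration} controls that event because $\sigma_i^2\leq 2(s\wedge m)\nu^*_i\leq 2(s\wedge m)\nu_{i,t}$ on the Hoeffding event. The same observation explains why summing over $[n]$ does not change the algorithm: for fixed $A$, the inner maximizer sets $\xi_i=0$ off $A$.

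\textbf{The second-order term has the wrong order.} On the Hoeffding event, the width $\nu_{i,t}-\nu^*_i\leq 2\sqrt{1.5\log t/\counter{i}{t-1}}$ contributes, up to constants,
\[
\sqrt{\delta(T)^{3/2}(s\wedge m)\textstyle\sum_{i\in A_t}\counter{i}{t-1}^{-3/2}}
\]
to the bound on $\Delta(A_t)$. This is Theorem~\ref{thm:dege} with $\alpha_i=3/4$ and $\beta_{i,T}\propto\delta(T)(s\wedge m)^{2/3}$. It gives $\delta(T)(s\wedge m)^{2/3}\Delta_{i,\min}^{-1/3}$ per arm, not $((s\wedge m)/\Delta)^{2/3}$.

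Sum this over the $n$ arms and compare it with $\log(m+1)\delta(T)\sum_i\nu^*_i(s\wedge m)/\Delta$. The result is exactly the condition $(\Delta(s\wedge m))^{2/3}\leq c\log(m+1)\sum_i\nu^*_i(s\wedge m)/n$. Likewise, the $m|\xi_i|$ term ($\alpha_i=1$) gives $nm(1+\log(\Delta_{\max}/\Delta))$ after summing over arms, which is the other half of the hypothesis. With your exponent $-2/3$, the stated hypothesis would not absorb the term as $\Delta\to0$.

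A minor point: after union bounds over the random counter and the $n$ arms, the Hoeffding event fails with probability $O(nt^{-2})$ rather than $O(t^{-3})$. This is still summable, so the conclusion is unaffected.
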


\begin{rem}
It should be noticed that semi-bandits algorithms as \textsc{cucb-v} or \textsc{cucb-kl} (that are variant of the classical \textsc{cucb} \citep{kveton2015tight}, where the confidence region is a Cartesian product of confidence intervals, with Bernstein and kl-base confidence intervals respectively) also reach the lower bound of Theorem~\ref{thm:lower_sparse} for the regime $s\geq m$, since $\mathbb{V}(X_i)\leq2\nu^*_i$ (thanks to Lemma~\ref{lem:sparse}). However, in the regime where $s\leq m$, these algorithms are not able to reach it, while \textsc{escb-c} is. In Appendix~\ref{app:cucb}, we describe the two algorithms \textsc{cucb-v} and \textsc{cucb-kl}, and comment further on the tightness difference between confidence regions.
\end{rem}
\section{Implementation details}
\label{sec:imp}
We now discuss the computational efficiency of our approaches.
First, Algorithm~\ref{algo:cov} (and both those of \citet{combes2015combinatorial} and \citet{Degenne2016}) is not efficient for arbitrary combinatorial  space $\mathcal{A}$. 
However, the evaluation of  \(F:A\mapsto\max_{{\boldsymbol \mu}\in \mathcal{C}_t(A)} {\bold e}_A\transpose{\boldsymbol \mu}, \)
can be done efficiently as it is an LP over a convex set. In practice, when $\mathcal{A}$ allows it, \textsc{greedy}\footnote{Starting from $A=\emptyset$, we sequentially add (when possible) the best possible $i$ to the current $A$ if $F(A\cup\sset{i})>F(A)$.} \citep{Nemhauser1978} can be used to maximize $F$. In general, it is unknown if this alters the regret rate. On the one hand, it does not when~$\mathcal{A}$ is given by a matroid, and  $\mathcal{C}_t$ is as in Algorithm~\ref{algo:sparse}. This is  because $F$ is \emph{submodular} and
the following approximation guaranty holds for the output $A_t$ of \textsc{greedy} \citep{perrault2019exploiting}:
\(2\pa{F(A_t) - \be_{A_t}\transpose\vmean{t-1}}+\be_{A_t}\transpose\vmean{t-1}\geq F(A^*),\)
where the l.h.s.\ is simply $F$ where $\cC_t$ is scaled by a factor $2$ from its center $\vmean{t-1}$. 
On the other hand, when $\mathcal{C}_t(A)$ is as in Algorithm~\ref{algo:cov}, a concave extension of $A\mapsto F(A)$ can be considered, and can thus be maximized efficiently.
Notice, when considering the intersection of the two confidence regions as in Remark~\ref{rk:intersect}, this implementation is still tractable since the minimum of two concave functions is still concave.
Since the obtained solution might not be integral, we use a randomized rounding to obtain a feasible set $A_t\in\cA=\sset{0,1}^n$. We provide in Appendix~\ref{app:super} further details and prove that this method scales the regret by a factor $1+\log\pa{\frac{m\log(T)}{\Delta^2}}$,  an acceptable price for efficiency.   
\section{Experiments}
\begin{figure}[H]
\centering
\vspace{-0.75cm}
\includegraphics[width=.38\columnwidth]{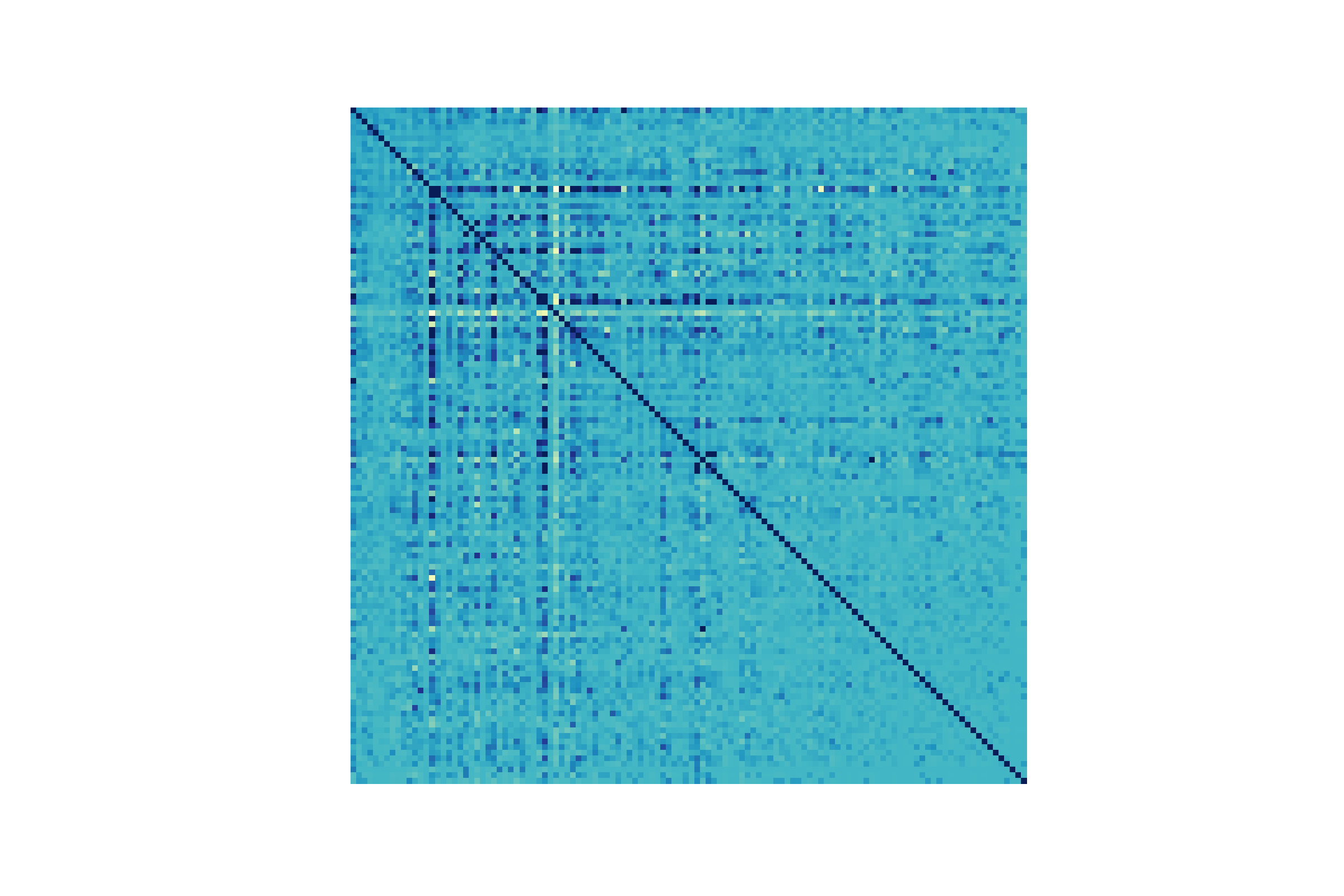}\hfill\includegraphics[width=.25\columnwidth]{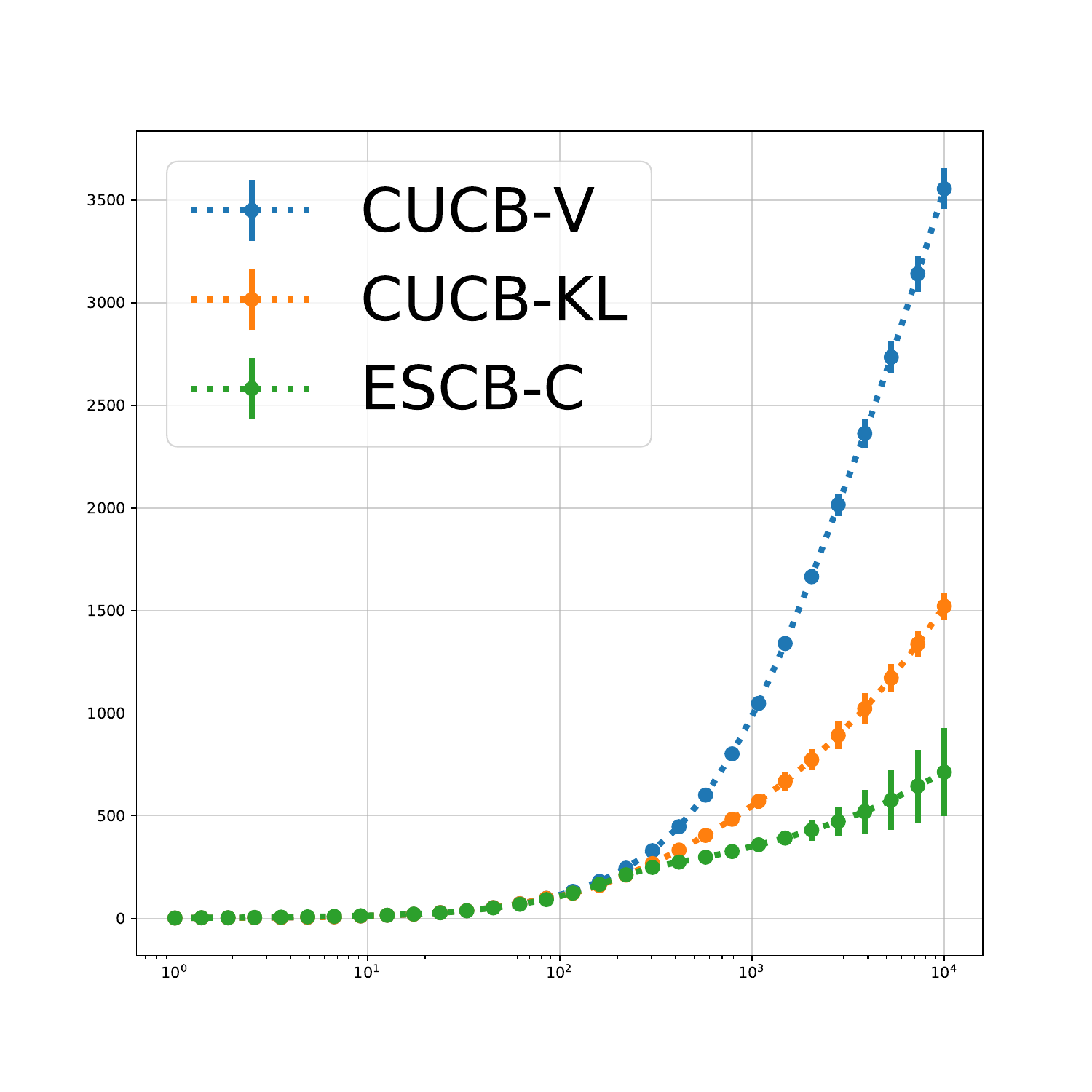}
 \vspace{-.7cm}
\caption{\textbf{Left}: Correlation matrix of the dataset, \textbf{right}: Cumulative regret, averaged over $36$ independent simulations}
\label{exp:cor}
\end{figure}
\vspace{-.65cm}

We consider the following dynamic assortment problem. An agent has $\arms$ products to sale, with fixed known prices. At each round, a customer arrives, with some unknown random valuation vector over products. Then, the agent offers any subset of products, by paying a fixed known cost for each offered product (e.g. transportation and display cost), and the customer buys an offered product if and only if its valuation is greater than its price. The agent is interested in maximizing the total profit (revenue minus cost) from sales over $T$ rounds. We use the $n=120$ products from the \citet{kaggle2013} dataset
containing $7500$ grocery store transactions. At each round, valuations are determined 
by sampling a random transaction from this dataset.  The choice of such data is motivated by correlations that exist between arms, as illustrated in Figure~\ref{exp:cor} -- left, representing the correlation matrix. We ran $36$ independent simulations with $T=10^4$, and with a common product price and cost respectively   equal to $1.5$ and  $0.1$. We compared \textsc{cucb-v} and \textsc{cucb-kl} (see Appendix~\ref{app:cucb}) with the Lov\'asz extension implementation of \textsc{escb-c} (see Appendix~\ref{app:super}) and results are plotted in log-scale (Figure~\ref{exp:cor} -- right); error bars represent the sample standard deviation over simulations. There is less volatility in the regret of \textsc{cucb-v} and \textsc{cucb-kl}; this is due to the fact that their confidence regions overestimate the risk, and the ``bad'' event where the regret deviates is almost negligible. Nevertheless, we clearly observe that \textsc{escb-c} outperforms the two other approaches in terms of the average regret. 
Finally, let us point out that we did not empirically compare to the \textsc{ols-ucb} algorithm of \citet{Degenne2016} since it is inefficient to implement (the combinatorial problem to be solved within each round is NP-Hard in general \citep{atamturk2017maximizing}. We noticed that for the choice of sub-Gaussian matrix where all the correlation coefficients equals 1, \textsc{ols-ucb} (if it could be implemented) would return a solution very close to \textsc{cucb-v}.

\vspace{-0.2cm}
\section{Discussion}
We improved the analysis of combinatorial semi-bandits in multiple ways. First, we brought new perspectives by considering a fairly large family of sub-exponential probability distributions, that crucially  do not depend on parameters difficult to obtain in real situations. We have built an algorithm for this family,  based on the estimation of the covariance matrix. We have therefore already significantly improved existing approaches by adapting not only to the variance of the arms, but also to the correlation  between them.
A tight analysis of our proposed method gives a new state-of-the-art upper bound on the regret. Our new bound is also more intuitive, and is more relevant to  reflect the complexity of the instance at hand (through  correlations  between  arms). Finally, we applied our approach to a setting not yet studied before, that assumes sparsity of the outcome vector. We gave a lower bound, as well as a matching algorithm that  leverages the sparsity assumption. 
\clearpage

\acks{The research presented was supported by European CHIST-ERA project DELTA, French Ministry of
Higher Education and Research, Nord-Pas-de-Calais Regional Council,  French National Research Agency project BOLD (ANR19-CE23-0026-04). Furthermore, it was also supported in part by a public grant as part of the Investissement d'avenir project, reference ANR-11-LABX-0056-LMH, LabEx LMH, in a joint call with Gaspard Monge Program for optimization, operations research and their interactions with data sciences.}
\setlength{\bibsep}{5.1pt}
\bibliography{library}
\newpage

\appendix
\section{The sub-Gaussian matrix is an upper bound on the covariance matrix}
\label{app:bSigmapreceqbGamma}
The fact that $\bSigma^*\preceq\bC$ is well known and can be proved as follows: Fix $\bx\in \R^\arms$. For any $ \lambda~\in~\R,$ $\EE{e^{\lambda\bx\transpose\pa{\bX-\bmu^*}}}~\leq~e^{\lambda^2\bx\transpose\bC\bx/2}$. The second order Taylor expansion in $\lambda$ gives
$$\frac{\lambda^2}{2}\EE{\pa{\bx\transpose\pa{\bX-\bmu^*}}^2}+o\pa{\lambda^2}\leq \frac{\lambda^2}{2}\bx\transpose\bC\bx +o\pa{\lambda^2}.$$
Dividing the inequality by $\lambda^2$, and letting $\lambda\to 0$ yields $\EE{\pa{\bx\transpose\pa{\bX-\bmu^*}}^2}\leq \bx\transpose\bC\bx$, i.e., $\bSigma^*\preceq \bC$.

\section{Proof of Proposition~\ref{prop:bounded}}
\label{app:bounded}
\begin{proof}
Assumption~\ref{ass:compo_subgau} is a direct consequence of  Hoeffding's Lemma. For Assumption~\ref{ass:subexp}, we have $\norm{\bX-\bmu^*}_\infty\leq 2\kappa$. For $\norm{\blambda}_1\leq1/(2\kappa)$, we have:
\begin{align*}
 \log\EE{e^{\blambda\transpose\pa{\bX-\bmu^*}}}&=\log\pa{1+\sum_{k\geq 2}\EE{\frac{\pa{\blambda\transpose\pa{\bX-\bmu^*}}^k}{k!}}}\\&\leq
 \sum_{k\geq 2}\EE{\frac{\pa{\blambda\transpose\pa{\bX-\bmu^*}}^k}{k!}}&\log(x)\leq x-1~\forall x> 0,\\&=
 \sum_{k\geq 2}\EE{\frac{\pa{\blambda\transpose\pa{\bX-\bmu^*}}^{k-2}\pa{\blambda\transpose{\pa{\bX-\bmu^*}}}^2}{k!}}\\
 &\leq  \sum_{k\geq 2}\EE{\frac{\pa{\norm{\blambda}_1\norm{\bX-\bmu^*}_\infty}^{k-2}\pa{\blambda\transpose{\pa{\bX-\bmu^*}}}^2}{k!}}
 \\
 &\leq  \sum_{k\geq 2}{\frac{\EE{\pa{\blambda\transpose{\pa{\bX-\bmu^*}}}^2}}{k!}}
 \\&=(e-2)\blambda\transpose\bSigma^*\blambda\leq \blambda\transpose\bSigma^*\blambda.
\end{align*}\end{proof}

\section{Proof of Theorem~\ref{thm:lower}}
\label{app:lower}
\begin{proof}
Consider $\actions$ containing $\arms/m$ disjoint actions $A_1,\dots,A_{\arms/m}$ composed of $m$ arms, with $A_k=\sset{(k-1)m+1,\dots,km }$, and $\bX\sim \cN(-\Delta/m(\II{i\notin A_1})_i,\bSigma^*)$. 
 This problem  reduces to a standard bandit problem with $\arms/m$ arms.
 We use a result from \citet{burnetas1996optimal}, a generalization of \citet{lai1985asymptotically}, that states that
 \[\liminf_{T\to \infty} \frac{R_T}{\log\pa{T}}\geq \sum_{k=2}^{\arms/m}\frac{\Delta}{\inf_{Y,~\EE{Y}=0}\KL{\Prb_{\sum_{i\in A_k} X_i}}{\Prb_{Y}}}\cdot\]
As we can write \begin{align*}\inf_{Y,~\EE{Y}=0}\KL{\Prb_{\sum_{i\in A_k} X_i}}{\Prb_{Y}} &\leq {\KL{\cN\pa{-\Delta,\be_{A_k}\transpose \bSigma^*\be_{A_k} }}{\cN\pa{0,\be_{A_k}\transpose \bSigma^*\be_{A_k}}}} \\& = \frac{\Delta^2/2}{\be_{A_k}\transpose \bSigma^*\be_{A_k}},\end{align*}
it holds that
$$\liminf_{T\to \infty} \frac{R_T}{\log\pa{T}}\geq 2\sum_{k=2}^{n/m}\frac{\be_{A_k}\transpose \bSigma^*\be_{A_k}}{\Delta} = 2\sum_{i\in[n],~i\notin A_1}\max_{A\in \cA,~i\in A}\sum_{j\in A}\frac{\Sigma^*_{ij}}{\Delta},$$ where we used the fact that $\sset{A\in \cA,~i\in A}$ is a singleton.
\end{proof}

\section{Proof of Proposition~\ref{prop:sigmaupper}}
\label{app:sigmaupper}
\begin{proof}We define $\tilde\Sigma_{ij,t-1}\triangleq\sum_{u\in[t-1]}\frac{\II{i,j\in A_u}\pa{X_{i,u}-\mu_i^*}\pa{X_{j,u}-\mu_j^* }}{\counter{ij}{t-1}}$ and
for $k\in \sset{i,j}$, $\tilde\mu_{k,t-1}\triangleq\frac{1}{\counter{ij}{t-1}}\sum_{u\in[t-1]}\II{i,j\in A_u}X_{k,u}$.
Notice that the following relation holds
 $$\bar\Sigma_{ij,t-1}=\tilde\Sigma_{ij,t-1}+\pa{\mu_i^*-\mean{i,t-1}}\pa{\tilde\mu_{j,t-1}-\mean{j,t-1}}+\pa{\mu_j^*-\mean{j,t-1}}\pa{\tilde\mu_{i,t-1}-\mu_i^*}.$$
 We now state Lemma~\ref{lem:subex} giving sub-exponential parameters for a product of sub-Gaussian random variables. A proof comes from \citet{honorio14}.
 \begin{lem}\label{lem:subex}
 $Y$,$Z$ are $1$-sub-Gaussian random variables $\imp$ $\forall\abs{\lambda}\leq 1/8$, $\EE{e^{\lambda(YZ-\EE{YZ})}}\leq e^{64\lambda^2}$.
 \end{lem}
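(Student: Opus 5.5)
We bound the moment generating function of $W\triangleq YZ-\EE{YZ}$ by expanding $e^{\lambda W}$ in a power series and controlling each moment through the sub-Gaussian tails of $Y$ and $Z$. Throughout, $1$-sub-Gaussian means the centered version satisfies $\EE{e^{\lambda(Y-\EE{Y})}}\le e^{\lambda^2/2}$. The product appearing in Proposition~\ref{prop:sigmaupper} involves the centered variables $X_i-\mu_i^*$, so I would first treat $Y,Z$ as centered, and then note that only the centered versions enter the argument.

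\textbf{Step 1 (moments of $Y$ and $Z$).} From $\EE{e^{\lambda Y}}\le e^{\lambda^2/2}$ and Chernoff's bound, $\PP{\abs{Y}\ge u}\le 2e^{-u^2/2}$. Integrating the tail gives $\EE{\abs{Y}^{2k}}\le 2\,k!\,2^{k}$, and the same bound holds for $Z$.

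\textbf{Step 2 (moments of $YZ$).} By Cauchy--Schwarz,
\[
\EE{\abs{YZ}^k}\le \sqrt{\EE{Y^{2k}}\,\EE{Z^{2k}}}\le 2\,k!\,2^k .
\]
By the convexity (Jensen) inequality $\abs{a-b}^k\le 2^{k-1}(\abs a^k+\abs b^k)$ together with $\abs{\EE{YZ}}^k\le\EE{\abs{YZ}^k}$, we obtain $\EE{\abs{W}^k}\le 2^k\EE{\abs{YZ}^k}\le 2\,k!\,4^k$.

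\textbf{Step 3 (series bound).} Since $\EE{W}=0$,
\[
\EE{e^{\lambda W}}\le 1+\sum_{k\ge2}\frac{\abs\lambda^k\EE{\abs W^k}}{k!}\le 1+2\sum_{k\ge 2}(4\abs\lambda)^k=1+\frac{32\lambda^2}{1-4\abs\lambda}.
\]
For $\abs\lambda\le 1/8$ we have $1-4\abs\lambda\ge 1/2$, so the right-hand side is at most $1+64\lambda^2\le e^{64\lambda^2}$. This gives exactly the claimed constant.

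\textbf{Expected obstacle.} The only delicate point is keeping the constants tight enough in Steps 1 and 2 to land at $64$ with the range $1/8$. If the tail-integration constant in Step 1 comes out larger, I would instead bound $\EE{e^{\lambda YZ}}$ directly. Using $\abs{YZ}\le (Y^2+Z^2)/2$ and Cauchy--Schwarz, $\EE{e^{\abs{\lambda}\abs{YZ}}}\le\sqrt{\EE{e^{\abs\lambda Y^2}}\EE{e^{\abs\lambda Z^2}}}$, and sub-Gaussianity gives $\EE{e^{sY^2}}\le (1-2s)^{-1/2}$ for $s<1/2$. This handles the first-order factor, and the centering is then absorbed as in Step 3. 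The constants are loose enough that either route should reach $e^{64\lambda^2}$.
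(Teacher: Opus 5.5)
Your proof is correct. The tail integration gives $\EE{Y^{2k}}\le 2\cdot 2^k k!$. Cauchy--Schwarz together with the centering factor $2^k$ gives $\EE{\abs{W}^k}\le 2\,k!\,4^k$. The geometric series then gives $1+32\lambda^2/(1-4\abs{\lambda})\le 1+64\lambda^2$ for $\abs{\lambda}\le 1/8$, which is exactly the stated constant and range.

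The paper proves nothing here; it simply defers to \citet{honorio14}. So your argument is not a competing route but a self-contained replacement for that citation, using the standard moment method for products of sub-Gaussian variables. The citation is shorter; your version makes the constants and hypotheses explicit.

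Reading ``$1$-sub-Gaussian'' as $\EE{e^{\lambda Y}}\le e^{\lambda^2/2}$, i.e.\ zero mean, is necessary and not just convenient. Take $Y=c+g$ and $Z=c+h$ with $g,h$ independent standard Gaussians. Then $YZ-\EE{YZ}=c(g+h)+gh$ has variance $2c^2+1$. But the claimed bound forces the variance to be at most $128$, so it fails for large $c$. The zero-mean form is exactly what Proposition~\ref{prop:sigmaupper} uses, with $Y=X_i-\mu_i^*$ and $Z=X_j-\mu_j^*$.

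You can drop your fallback paragraph, since the main route already lands on the constants. That paragraph also does not actually carry out the step where ``the centering is absorbed,'' so as written it would not stand on its own anyway.
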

 We apply Lemma~\ref{lem:subex} with a Chernoff argument and an union bound (to avoid the randomness of counters) in order to get the following Bernstein inequality $$\PP{ \abs{\Sigma^*_{ij}-  \tilde\Sigma_{ij,t-1}}\geq16\pa{\frac{3\log(t)}{\counter{ij}{t-1}}\vee\sqrt{\frac{3\log(t)}{\counter{ij}{t-1}}}} }\leq 2t^{-2}.$$
 In the same way, Hoeffding's inequality gives directly that with probability $1-8t^{-2}$, we have simultaneously
 \begin{equation*}
\left\lbrace
\begin{array}{cc}
\abs{\mu_i^*-\mean{i,t-1}} &\leq \sqrt{\frac{6\log(t)}{\counter{i}{t-1}}}\\
\abs{\tilde\mu_{j,t-1}-\mean{j,t-1}} &\leq \sqrt{\frac{8\log(t)}{\counter{ij}{t-1}}}\\
\abs{\mu_j^*-\mean{j,t-1}}&\leq \sqrt{\frac{6\log(t)}{\counter{j}{t-1}}}\\
\abs{\tilde\mu_{i,t-1}-\mu_i^*}&\leq \sqrt{\frac{6\log(t)}{\counter{ij}{t-1}}},
\end{array}\right.
\end{equation*}
which is enough to conclude the proof. Notice that for the second inequality above, we take the union bound for two counters. When they are not random, $\counter{ij}{t-1}\pa{\tilde\mu_{j,t-1}-\mean{j,t-1}}$, that is equal to
 $$\sum_{u\in[t-1]}\II{i,j\in A_u}X_{j,u}\pa{1-\counter{ij}{t-1}/\counter{j}{t-1}} - \sum_{u\in[t-1]}\II{j\in A_u,i\notin A_u}X_{j,u} \counter{ij}{t-1}/\counter{j}{t-1},$$
 is a sum of $\counter{j}{t-1}$ independent random variables, $\counter{ij}{t-1}$ of which are $\pa{1-\counter{ij}{t-1}/\counter{j}{t-1}}^2$-sub-Gaussian and the remaining ones are $\counter{ij}{t-1}^2/\counter{j}{t-1}^2$-sub-Gaussian. So it is $\counter{ij}{t-1}\pa{1-\counter{ij}{t-1}/\counter{j}{t-1}}$-sub-Gaussian, and in particular $\counter{ij}{t-1}$ -sub-Gaussian.
\end{proof}

\section{Proof of Theorem~\ref{thm:upper_cov}}
\label{app:upper_cov}
\begin{proof}
In the proof, we denote $\ba\odot\bb\triangleq (a_ib_i)_i$ the Hadamard product of two vectors $\ba,\bb\in \R^\arms.$
Let $t\geq 1$, and $\delta(t)\triangleq 2(\log(t)+\log(\log(t)))+em$.
Through initialization, we can assume $\counter{ij}{t-1}\geq 1$ for all $i,j\in [\arms]$ (as this only adds $n(n-1)\Delta_{\max}/2$ to the regret bound).
We will decompose contributions to regret by considering the following events:
\[\mathfrak{C}_{t}\triangleq \sset{\bmu^*\vee \bar\bmu_{t-1} \in \cC_t\pa{A^*}},\]
\[\mathfrak{D}_{t}\triangleq \sset{\be_{A_t}\transpose \pa{\vmean{t-1}-\bmu^*}\leq  \Delta\pa{A_t}/2},\]
\[\mathfrak{S}_{t}\triangleq \sset{\forall i,j\in [\arms], 0\leq \Sigma_{ij,t}-\Sigma^*_{ij}\leq  2g_{ij}(t)}.\]
We also define
\[\tilde g_{ij}(t)\triangleq 
16\pa{\frac{3\log(t)}{\counter{ij}{t-1}^2}\vee\sqrt{\frac{3\log(t)}{\counter{ij}{t-1}^3}}} + \sqrt{\frac{48\log^2(t)}{{\counter{ij}{t-1}^4}}}+\sqrt{\frac{36\log^2(t)}{{\counter{ij}{t-1}^4}}}\CommaBin\]
 \[\forall i\in[\arms],~\Delta_{i,\min}\triangleq \min_{A\in \actions,~i\in A,~\Delta\pa{A}>0}\Delta\pa{A},\]
  \[\Delta_{i,\max}\triangleq \max_{A\in \actions,~i\in A,~\Delta\pa{A}>0}\Delta\pa{A},\]
 and \[\forall i,j\in[\arms],~\Delta_{ij,\min}\triangleq \min_{A\in \actions,~i,j\in A,~\Delta\pa{A}>0}\Delta\pa{A},\]
 \[\Delta_{ij,\max}\triangleq \max_{A\in \actions,~i,j\in A,~\Delta\pa{A}>0}\Delta\pa{A}.\]
\paragraph{Step 1: If $\mathfrak{C}_{t},\mathfrak{D}_{t}$ and $\mathfrak{S}_{t}$ hold}
We have
\begin{align*}
  \Delta\pa{A_t}&= \pa{ \be_{A^*}-\be_{A_t}}\transpose{\bmu^*}\\&\leq\be_{A^*}\transpose{\bmu^*\vee \bar\bmu_{t-1}} - \be_{A_t}\transpose\bmu_t + \be_{A_t}\transpose\pa{\bmu_t - \bmu^*}
   \\&\leq
   \be_{A_t}\transpose\pa{\bmu_t - \bmu^*} & \mathfrak{C}_{t}
      \\&\leq
   \Delta\pa{A_t}/2 + \be_{A_t}\transpose\pa{\bmu_t - \vmean{t-1}} & \mathfrak{D}_{t}
\end{align*}
i.e.,
\begin{align*}
 \Delta\pa{A_t} &\leq 2\be_{A_t}\transpose\pa{\bmu_t - \vmean{t-1}}
 \\&\leq
 2\sqrt{\sum_{i\in A_t}\frac{4\pa{ \sum_{j\in A_t}0\vee\Sigma_{ij,t}+m\pa{\mu_{i,t}-\mean{i,t-1}}} {\delta(t)}}{\counter{i}{t-1}}}&\text{Cauchy-Schwarz and }\bmu_t\in C_t(A_t)
 \\&\leq
 4\sqrt{\delta(t)\sum_{i\in A_t}\pa{\frac{ \sum_{j\in A_t}0\vee\Sigma_{ij,t}} {\counter{i}{t-1}}+\frac{m\pa{\mu_{i,t}-\mean{i,t-1}}}{\min_{j\in A_t}\counter{j}{t-1}}}}.
\end{align*}
Solving the corresponding quadratic inequation in the variable $x=\be_{A_t}\transpose\pa{\bmu_t - \vmean{t-1}}$, we get
\begin{align*}
\Delta\pa{A_t} &\leq 2\be_{A_t}\transpose\pa{\bmu_t - \vmean{t-1}}
 \\&\leq 4\pa{\sqrt{\frac{\delta(t)^2m^2}{{\min_{i\in A_t}\counter{i}{t-1}^2}} + \sum_{i\in A_t}\frac{\delta(t) \sum_{j\in A_t}0\vee\Sigma_{ij,t}} {\counter{i}{t-1}}}+\frac{m\delta(t)}{{\min_{j\in A_t}\counter{j}{t-1}}}}
 \\&\leq 4{ \sqrt{\delta(t)\sum_{i\in A_t}\frac{ \sum_{j\in A_t}0\vee\Sigma_{ij,t}} {\counter{i}{t-1}}}+\frac{8m\delta(t)}{{\min_{j\in A_t}\counter{j}{t-1}}}}
 \\&\leq
 4{ \sqrt{\delta(t)\sum_{i\in A_t}\frac{ \sum_{j\in A_t}0\vee\pa{\Sigma^*_{ij}+ 2g_{ij}(t)}} {\counter{i}{t-1}}}+\frac{8m\delta(t)}{{\min_{j\in A_t}\counter{j}{t-1}}}}&\mathfrak{S}_{t}
\\&\leq
 4 \sqrt{\delta(t)\sum_{i\in A_t}\frac{ \sum_{j\in A_t}0\vee\Sigma^*_{ij}}  {\counter{i}{t-1}}}
 +
 4 \sqrt{\delta(t)\sum_{i\in A_t}\frac{ \sum_{j\in A_t} 2g_{ij}(t)}  {\counter{i}{t-1}}}
 +
 \frac{8m\delta(t)}{{\min_{j\in A_t}\counter{j}{t-1}}}
 \\&\leq
 \underbrace{4 \sqrt{\delta(T)\sum_{i\in A_t}\frac{ \max_{A\in \actions,i\in A}\sum_{j\in A}0\vee\Sigma^*_{ij}}  {\counter{i}{t-1}}}}_{\numterm{term1}}
 +
 \underbrace{4 \sqrt{\delta(T)\sum_{i,j\in A_t}{  \tilde g_{ij}(T)}  }}_{\numterm{term2}}
 +
  \underbrace{\frac{8m\delta(T)}{{\min_{j\in A_t}\counter{j}{t-1}}}}_{\numterm{term3}}\cdot
\end{align*}
Where the last inequality uses that ${\counter{i}{t-1}\wedge\counter{j}{t-1}}\geq \counter{ij}{t-1}\forall i,j\in [\arms]$.
From this point, we treat each term separately, using the relation
$$\II{\Delta\pa{A_t}\leq
\eqref{term1}+\eqref{term2}+\eqref{term3}}\leq \II{\Delta\pa{A_t}/3\leq \eqref{term1}} + \II{\Delta\pa{A_t}/3\leq \eqref{term2}} +\II{\Delta\pa{A_t}/3\leq \eqref{term3}}. $$
We provide Theorem~\ref{thm:dege} in Appendix~\ref{app:extra_cmab}, that is helpful to bound the regret on each of this 3 events. Indeed, for the first term, applying it with $\beta_{i,T}=12^2\delta(T)\max_{A\in \actions,i\in A}\sum_{j\in A}0\vee\Sigma^*_{ij}$, $\alpha_i=1/2$, and $(I,I_t) = ([n],A_t)$ gives the bound $$\sum_{t=1}^T\II{\Delta\pa{A_t}/3\leq \eqref{term1}}\Delta\pa{A_t}\leq 4608\log_2^2(4\sqrt{m})\sum_{i\in [n]}{{{\frac{\delta(T)\max_{A\in \actions,i\in A}\sum_{j\in A}0\vee\Sigma^*_{ij}}{\Delta_{i,\min}}}}}\cdot$$
The second term can be itself decomposed into two terms, bounding the max by the sum and using $\log(T)\leq \delta(T)$.
$$\eqref{term2}\leq 4\delta(T)\sqrt{\sum_{i,j\in A_t} \pa{54+\sqrt{48}}N_{ij,t-1}^{-2}} + 4\delta(T)^{0.75}\sqrt{16\sqrt{3}\sum_{i,j\in A_t}N_{ij,t-1}^{-1.5}}.$$
Thus, again, it is sufficient to treat each term separately. We also apply Theorem~\ref{thm:dege}, but with $(I,I_t) = ([n]^2,A_t^2)$, taking respectively $\alpha_i=1,\beta_{i,T}=24\sqrt{54+\sqrt{48}}\delta(T)$ and $\alpha_i=0.75,\beta_{i,T}=192~\cdot~{6}^{2/3}\delta(T)$ for each term. This gives 
\begin{align*}\sum_{t=1}^T\II{\Delta\pa{A_t}/3\leq \eqref{term2}}\Delta\pa{A_t}~&\leq 1152\sqrt{6} \log_2(4m)\sum_{i,j\in [n]}\delta(T)\pa{1+\log\pa{\frac{\Delta_{ij,\max}}{\Delta_{ij,\min}}}}\\
&+12288\cdot6^{2/3}\pa{{4^{1/3}-1}}^{-1} \log_2(4m)\sum_{i,j\in [n]}\delta(T)\Delta_{ij,\min}^{-1/3}.\end{align*}
The last term can be analyzed in the same way by first upper bounding it as
$$\eqref{term3}\leq 8m\delta(T)\sqrt{\sum_{i\in A_t}\frac{1}{N_{i,t-1}^2}}\cdot$$
Then, taking $\alpha_i=1,\beta_{i,T}=24m\delta(T)$ in Theorem~\ref{thm:dege} gives
\begin{align*}\sum_{t=1}^T\II{\Delta\pa{A_t}/3\leq \eqref{term3}}\Delta\pa{A_t}\leq 1152\log_2(4\sqrt{m})\sum_{i\in[n]} m\delta(T)\pa{1+\log\pa{\frac{\Delta_{i,\max}}{\Delta_{i,\min}}}}.\end{align*}
This concludes step 1; notice that all subsequent steps will aim to bound the regret by a term independent of $T$, over a certain event. Thus, we can see that the bounds above are the actual contributions to the rate of the regret. To show Theorem~\ref{thm:upper_cov}, we must therefore choose the regime for $\Delta\leq \Delta_{i,\min}$ so that the first term prevails over the others. In other words, we want to have
$$n^2\pa{\Delta^{-1/3}\vee\pa{1+\log(\Delta_{\max}/\Delta)}}\leq c \log(m+1)\sum_{i\in [n]}\frac{\max_{A\in \actions,i\in A}\sum_{j\in A}0\vee\Sigma^*_{ij}}{\Delta}\CommaBin$$
where $c$ is a constant. This gives exactly our condition in Theorem~\ref{thm:upper_cov}.

\paragraph{Step 2: If $\mathfrak{S}_{t}$,$\neg\mathfrak{C}_{t}$ hold}

Let $\sigma_i^2\triangleq {\sum_{j\in A^*}{0\vee\Sigma^*_{ij}}}$ for all arms $i\in [\arms].$
We fix some $\delta\geq e\cdot m$, and define the following events:
\[\mathfrak{A}_{t}\triangleq \sset{\sum_{i\in A^*}\II{\mu_i^*\geq \mean{i,t-1}}\counter{i}{t-1}\frac{\pa{\mu_i^*-\mean{i,t-1}}^{2}}{4\pa{\sigma_i^2+\abs{A^*}\pa{\mu_i^*-\mean{i,t-1}}}}\geq \delta }\]
\[\forall\bd\in \pa{\N^*}^{A^*},\quad\mathfrak{B}_{\bd,t}\triangleq\bigcap_{i\in A^*}\sset{e^{d_i-1}\leq\counter{i}{t-1}<e^{d_i}}.\]
Notice that $\mathfrak{S}_{t},\neg\mathfrak{C}_{t}$ implies $\mathfrak{A}_{t}$ for $\delta=\delta(t)$. 
Since each number of pulls $\counter{i}{t-1}$ for $i\in A^*$ is bounded by $t$, the number of possible $\bd\in \pa{\N^*}^{A^*}$ such that $\PP{\mathfrak{B}_{\bd,t}}>0$ is bounded by $ \log(t)^m$. Thanks to the following Lemma~\ref{lem:concentration}, and an union bound on those possible $\bd\in \pa{\N^*}^{A^*}$, we get 
\[\PP{\mathfrak{A}_{t}}\leq e^{m+1}\pa{\frac{{(\delta-1)\log(t)}}{m}}^m e^{-\delta},\]
so the regret under this event is bounded by a universal constant, since the upper bound above is the term of a convergent series for $\delta=\delta(t)$. Indeed, it rewrites as
$$t^{-2}e^{m+1-em}\pa{\underbrace{\frac{2-\log^{-1}(t)}{m}}_{\leq 2/m}+\underbrace{2\frac{{\log(\log(t))}}{\log(t)}+e\log^{-1}(t)}_{\leq 2e^{e/2-1}}}^m,$$
that is bounded by
$$t^{-2}e\cdot \underbrace{\pa{e^{1-e} \cdot2e^{e/2-1}}^m}_{\leq 1}\underbrace{\pa{\frac{{e^{1-e/2}}}{m}+1}^m}_{\leq e^{e^{1-e/2}}}.$$
\begin{lem}[Covering-argument] Let $\bd\in \pa{\N^*}^{A^*}.$ Then, 
 $\PP{\mathfrak{A}_{t}\cap \mathfrak{B}_{\bd,t}}\leq \pa{\frac{(\delta-1) e}{m}}^me^{1-\delta}.$
\label{lem:concentration}\end{lem}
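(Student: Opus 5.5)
The plan is a peeling-and-covering argument in the spirit of \citet{Magureanu2014}. The peeling is already supplied by the event $\mathfrak{B}_{\bd,t}$, which pins every counter $\counter{i}{t-1}$, $i\in A^*$, to the window $[e^{d_i-1},e^{d_i})$. The covering will discretize the amount of ``deviation budget'' that each coordinate of $A^*$ uses. For a single cell we apply a Chernoff bound based on Assumption~\ref{ass:subexp}. There the off-diagonal terms of $\bSigma^*$ are removed by the axis-realignment inequality: for $\blambda\ge 0$ supported on $A^*$,
\[\blambda\transpose\bSigma^*\blambda\le \sum_{i\in A^*}\lambda_i^2\sigma_i^2,\qquad \sigma_i^2=\sum_{j\in A^*}0\vee\Sigma^*_{ij}.\]
This inequality holds because $\lambda_i\lambda_j\Sigma^*_{ij}\le \lambda_i\lambda_j(0\vee\Sigma^*_{ij})\le\tfrac12(\lambda_i^2+\lambda_j^2)(0\vee\Sigma^*_{ij})$, together with the symmetry of $\bSigma^*$.

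\textbf{Step 1: covering.} Set $x_i\triangleq(\mu_i^*-\mean{i,t-1})_+$ and $z_i\triangleq \counter{i}{t-1}x_i^2/\pa{4(\sigma_i^2+mx_i)}$, so that $\mathfrak{A}_t=\sset{\sum_{i\in A^*} z_i\ge\delta}$. Since $z_i\mapsto x_i$ is increasing, on $\mathfrak{A}_t$ there is a grid vector $\bc\in\N^{A^*}$ with $c_i\le z_i$ for all $i$ and $\sum_i c_i\ge \delta-m$; rounding down costs at most $1$ per coordinate. Only vectors with $\sum_ic_i=\lceil\delta-m\rceil$ need to be kept, and there are at most $\binom{\delta-1}{m-1}\le (\delta-1)^m/m!\le\pa{(\delta-1)e/m}^m$ of them. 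The constraint $\delta\ge em$ from Step 2 is used here.

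\textbf{Step 2: one cell.} Fix $\bc$. On $\mathfrak{B}_{\bd,t}$, the inequality $z_i\ge c_i$ together with $\counter{i}{t-1}<e^{d_i}$ gives a deterministic threshold: $x_i\ge\epsilon_i$, where $\epsilon_i$ solves $e^{d_i}\epsilon_i^2=4c_i(\sigma_i^2+m\epsilon_i)$. We therefore need to bound
\[\PP{\sum_{i\in A^*}\lambda_i\counter{i}{t-1}(\mu_i^*-\mean{i,t-1})\ge \sum_{i\in A^*}\lambda_i\counter{i}{t-1}\epsilon_i}\]
with the Bernstein-optimal choice $\lambda_i=\epsilon_i/\pa{2(\sigma_i^2+m\epsilon_i)}$. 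The coordinates of $A^*$ are not sampled at the same rounds, so we write the left side as $\sum_{u<t}\blambda_u\transpose(\bmu^*-\bX_u)$, where $\blambda_u=(\lambda_i\II{i\in A_u})_{i\in A^*}$. Because $A_u$ is predictable, Assumption~\ref{ass:subexp} applied conditionally makes
\[\exp\pa{\sum_u\blambda_u\transpose(\bmu^*-\bX_u)-\sum_u\blambda_u\transpose\bSigma^*\blambda_u}\]
a supermartingale, provided $\norm{\blambda_u}_1\le 1/2$. This condition follows from $\lambda_i\le 1/(2m)$, and that is exactly the role of the $m|\xi_i|$ (here $mx_i$) term in the denominator. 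Realignment bounds the compensator by $\sum_i\lambda_i^2\sigma_i^2\counter{i}{t-1}$. Markov's inequality then gives a bound $\exp\pa{-\sum_i \counter{i}{t-1}(\lambda_i\epsilon_i-\lambda_i^2\sigma_i^2)}$, and with our $\lambda_i$ each exponent term is at least $\counter{i}{t-1}\epsilon_i^2/\pa{4(\sigma_i^2+m\epsilon_i)}\ge e^{-1}e^{d_i}\epsilon_i^2/(\cdots)$. Since $\counter{i}{t-1}\ge e^{d_i-1}$, the exponent is at least $\sum_ic_i$ up to the factor lost by using the window. The factor-$e$ window mismatch will be absorbed by defining the grid with respect to $e^{d_i-1}$ instead: we take $\epsilon_i$ from $e^{d_i-1}\epsilon_i^2=4c_i(\cdots)$, and use the fact that $\counter{i}{t-1}<e^{d_i}$ only to guarantee that $z_i\ge c_i$ implies a slightly weaker threshold. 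Each cell then has probability at most $e^{-(\delta-m)}$ times a constant. Summing over the cells, and tuning the rounding so that the loss totals $e^{1-\delta}$ rather than $e^{m-\delta}$, gives $\pa{(\delta-1)e/m}^me^{1-\delta}$.

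\textbf{Main obstacle.} The delicate point is the bookkeeping between the random counters and the fixed scalings $\lambda_i$. The scalings must be deterministic on the peeling cell, so that the supermartingale argument is valid. At the same time, the exponent must recover $\sum_ic_i$ with only an $O(1)$ total loss, not a loss of $e$ per coordinate. I would handle this by letting the grid step be relative to $\counter{i}{t-1}/e^{d_i-1}\in[1,e)$, by absorbing that ratio into the rounding slack of Step 1, and by checking that the $\ell_1$ constraint $\norm{\blambda_u}_1\le1/2$ still holds after this rescaling.
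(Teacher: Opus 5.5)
The gap is the factor $e$ that comes from the window $\mathfrak{B}_{\bd,t}$. Your sketch promises to absorb it, but it cannot be absorbed.

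\textbf{The factor $e$.} Your per-cell step is the paper's argument: threshold computed at the top of the window, $\lambda_i=\epsilon_i/\pa{2(\sigma_i^2+\abs{A^*}\epsilon_i)}$, axis realignment, then Markov after lower bounding the random exponent by its value at $\counter{i}{t-1}=e^{d_i-1}$. Your supermartingale treatment of the predictable $A_u$ is, if anything, cleaner than the paper's product of expectations. Like the paper's, this step yields only $e^{-\sum_i c_i/e}$ per cell.

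Your repair does not work. If $\epsilon_i$ is defined through $e^{d_i-1}$, the implication $z_i\ge c_i\Rightarrow x_i\ge\epsilon_i$ fails when $\counter{i}{t-1}$ is close to $e^{d_i}$. The threshold that is actually implied is the one computed at $e^{d_i}$, and using it puts the factor back.

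This is intrinsic, not a rounding issue. With $\blambda$ fixed on a window of ratio $r$, the exponent loses a factor $1/r$, and nothing downstream of the per-cell tail bounds recovers it. Already for $m=1$, the bounds $e^{-\zeta/e}$ are attained by an exponential variable of mean $e$. Its tail $e^{-\delta/e}$ exceeds the target $(\delta-1)e^{2-\delta}$ for every $\delta\ge 6$.

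The $(\delta-1)$ and $e^{1-\delta}$ in the statement are the constants produced by windows of ratio $\delta/(\delta-1)$, as in Magureanu et al.\ (2014). The paper's own chain also ends with $e^{-\sum_i\zeta_i/e}$. Its appeal to their Lemma~8 therefore literally gives $(\delta/m)^m e^{-\delta/e}$, which is consistent with the hypothesis $\delta\ge em$, not the displayed bound. The sound fix is finer peeling, which changes $\mathfrak{B}_{\bd,t}$. In Step~2 of the proof of Theorem~\ref{thm:upper_cov}, the union then runs over about $\lceil\delta\log t\rceil^m$ cells instead of $\log(t)^m$, which is still summable against $e^{-\delta(t)}$.

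\textbf{The covering.} Your grid is also too lossy for the stated constant. Even granting $e^{-\sum_i c_i}$ per grid point, the integer grid gives $\binom{\lceil\delta\rceil-1}{m-1}e^{m-\delta}$. At $\delta=em$, which the hypothesis allows, this is of order $e^{0.07m}$ up to polynomial factors, so it is vacuous for large $m$. The claimed bound there is $e^{-(e-2)m+O(1)}$. No choice of grid step fixes this: a smaller rounding loss is paid for by more grid points, and in this regime the union bound never gets down to $e^{-\Omega(m)}$.

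The paper does not union-bound over a grid. It proves $\PP{\mathfrak{B}_{\bd,t}\cap\bigcap_{i\in A^*}\sset{z_i>\zeta_i}}\le e^{-\sum_i\zeta_i/e}$ for every real $\bzeta\in\R_+^{A^*}$. It then invokes Lemma~8 of Magureanu et al., which turns joint exponential tails into a Gamma-type tail for $\sum_i z_i$. One way to see such a lemma: under joint tails $e^{-\sum_i\zeta_i}$, expanding $\prod_i\pa{1+\int_0^{z_i}\theta e^{\theta y}\,dy}$ gives $\EE{e^{\theta\sum_i z_i}}\le(1-\theta)^{-m}$. Hence $\PP{\sum_i z_i\ge\delta}\le\pa{\delta e/m}^m e^{-\delta}$ for $\delta\ge m$, with no per-coordinate loss.

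\textbf{A smaller point.} Keep $\abs{A^*}$ rather than $m$ inside $z_i$. When $m>\abs{A^*}$, your event $\sset{\sum_i z_i\ge\delta}$ is contained in $\mathfrak{A}_t$ instead of containing it. The constraint $\norm{\blambda_u}_1\le 1/2$ still holds, since $\lambda_i\le 1/(2\abs{A^*})$.
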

\begin{proof} We rely on a covering argument. The idea is to get rid of randomness by replacing the empirical mean $\bar\mu_{i,t-1}$ by some non-random value $x_i$.
Let $\bzeta\in \R_+^{A^*}$.
For $i\in A^*,$ we define $x_i(N)$ for $N\in \R_+$ as the unique solution $x\in(-\infty,\mu_i^*]$ of the equation ${N}\frac{\pa{\mu_i^*-x}^{2}}{4(\sigma_i^2+\abs{A^*}\pa{\mu_i^*-x})}= \zeta_i$. 
Notice that for all $i\in A^*$, $x_i$ is non-decreasing since $x\mapsto\frac{\pa{\mu_i^*-x}^{2}}{4(\sigma_i^2+\abs{A^*}\pa{\mu_i^*-x})}$ is decreasing on $(-\infty,\mu_i^*]$. The event

$$\bigcap_{i\in A^* }\sset{\counter{i}{t-1}\frac{\pa{\mu_i^*-\mean{i,t-1}}^{+2}}{4\pa{\sigma_i^2+\abs{A^*}\pa{\mu_i^*-\mean{i,t-1}}}}>  \zeta_i } $$implies $$\bigcap_{i\in A^*}\sset{\mean{i,t-1} \leq x_i(\counter{i}{t-1})}. $$
Under the event $\mathfrak{B}_{\bd,t}$, this implies \begin{align}\bigcap_{i\in A^*}\sset{\mean{i,t-1} \leq x_i(e^{d_i})}. \label{ass:x_i}\end{align}
With $\epsilon_i\triangleq\mu_i^*-x_i(e^{d_i})$ and $\lambda_i\triangleq\frac{\epsilon_i}{2\pa{\sigma_i^2+{\abs{A^*}\varepsilon_i}}}\CommaBin$  $i\in A^*$, this further implies:
\begin{align*}e^{-1}\sum_{i\in A^*}\zeta_i&={\sum_{i\in A^*}e^{d_i-1}\frac{\epsilon_i^2}{4\pa{\sigma_i^2+{\abs{A^*}\varepsilon_i}}}}&x_i(e^{d_i})>-\infty,\\&\leq
{\sum_{i\in A^*}\counter{i}{t-1}\frac{\epsilon_i^2}{4\pa{\sigma_i^2+{\abs{A^*}\varepsilon_i}}}}&\mathfrak{B}_{\bd,t}\\&
={\sum_{i\in A^*}\counter{i}{t-1}\frac{\epsilon_i^2}{2\pa{\sigma_i^2+{\abs{A^*}\varepsilon_i}}}}-{\sum_{i\in A^*}\counter{i}{t-1}\frac{\epsilon_i^2}{4\pa{\sigma_i^2+{\abs{A^*}\varepsilon_i}}}}\\&\leq\sum_{i\in A^*}\counter{i}{t-1}\frac{\epsilon_i^2}{2\pa{\sigma_i^2+{\abs{A^*}\varepsilon_i}}}-\sum_{i\in A^*}\counter{i}{t-1}\sigma_i^2\frac{\varepsilon_i^2}{4\pa{\sigma_i^2+{\abs{A^*}\varepsilon_i}}^2}&\frac{\sigma_i^2}{\sigma_i^2+\abs{A^*}\varepsilon_i}\leq 1,\\&=\sum_{i\in A^*}\counter{i}{t-1}\lambda_i\varepsilon_i-\sum_{i\in A^*}\counter{i}{t-1}\sigma_i^2\lambda_i^2\\ &\leq
\sum_{i\in A^*}\counter{i}{t-1}\lambda_i\pa{\mu_i^*-\mean{i,t-1}}-\sum_{i\in A^*}\counter{i}{t-1}\sigma_i^2\lambda_i^2&\text{using}~\eqref{ass:x_i}
,
\\ &=
\sum_{u\in [t-1]}\pa{\pa{\blambda\odot \be_{A_u\cap A^*}}\transpose\pa{\bmu^*-\bX_{u}}- \pa{\blambda\odot \be_{A_u\cap A^*}}\transpose\bD \pa{\blambda\odot \be_{A_u\cap A^*}}},
\end{align*}
where $\bD$ is the diagonal matrix with $D_{ii}=\sigma_i^2$ for all $i\in [\arms]$. 
For all $u\in [t-1],$ since $\blambda\geq 0$, we can write the following axis-realignment inequality
\begin{align*}\pa{\blambda\odot \be_{A_u\cap A^*}}\transpose\bSigma^* \pa{\blambda\odot\be_{A_u\cap A^*}} &=  \sum_{i\in A_u\cap A^*}\sum_{j\in A_u\cap A^*}\Sigma^*_{ij}\lambda_i\lambda_j
\\
&\leq \sum_{i\in A_u\cap A^*}\sum_{j\in A_u\cap A^*}\frac{0\vee\Sigma^*_{ij}}{2}\pa{\lambda_i^2+\lambda_j^2} \\&= \sum_{i\in A_u\cap A^*}\pa{\sum_{j\in A_u\cap A^*}{0\vee\Sigma^*_{ij}}}\lambda_i^2\\&\leq \pa{\blambda\odot \be_{A_u\cap A^*}}\transpose\bD \pa{\blambda\odot \be_{A_u\cap A^*}}.\end{align*}
Thus, we have
\begin{align*}e^{-1}\sum_{i\in A^*}\zeta_i&\leq \sum_{u\in [t-1]}\pa{\pa{\blambda\odot \be_{A_u\cap A^*}}\transpose\pa{\bmu^*-\bX_{u}}- \pa{\blambda\odot \be_{A_u\cap A^*}}\transpose\bSigma^* \pa{\blambda\odot \be_{A_u\cap A^*}}}\\&\leq 
\sum_{u\in [t-1]}\pa{\pa{\blambda\odot \be_{A_u\cap A^*}}\transpose\pa{\bmu^*-\bX_{u}}- \log\EE{e^{\pa{\blambda\odot \be_{A_u\cap A^*}}\transpose\pa{\bX-\bmu^*}}}},
\end{align*}
where the last inequality uses Assumption~\ref{ass:subexp} and $\norm{\blambda\odot \be_{A_u\cap A^*}}_1\leq 1/2$.
Now, notice that $$\EE{\exp\pa{\sum_{u\in [t-1]}\pa{\pa{\blambda\odot \be_{A_u\cap A^*}}\transpose\pa{\bmu^*-\bX_{u}}- \log\EE{e^{\pa{\blambda\odot \be_{A_u\cap A^*}}\transpose\pa{\bX-\bmu^*}}}}}}$$ equals
 \begin{align*}\EE{\prod_{u\in[t-1]}\frac{e^{{\pa{\blambda\odot \be_{A_u\cap A^*}}\transpose\pa{\bmu^*-\bX_{u}}}}}{\EE{e^{\pa{\blambda\odot \be_{A_u\cap A^*}}\transpose\pa{\bX-\bmu^*}}}}}&=\prod_{u\in[t-1]}\EE{\frac{e^{{\pa{\blambda\odot \be_{A_u\cap A^*}}\transpose\pa{\bmu^*-\bX_{u}}}}}{\EE{e^{\pa{\blambda\odot \be_{A_u\cap A^*}}\transpose\pa{\bX-\bmu^*}}}}}\\&=1,\end{align*}
so from  Markov inequality, we get the following bound:
\begin{align*}
\PP{\sum_{u\in [t-1]}\pa{\pa{\blambda\odot \be_{A_u\cap A^*}}\transpose\pa{\bmu^*-\bX_{u}}- \log\EE{e^{\pa{\blambda\odot \be_{A_u\cap A^*}}\transpose\pa{\bX-\bmu^*}}}}\geq e^{-1}\sum_{i\in A^*}\zeta_i}\leq e^{-\sum_{i\in A^*}\zeta_ie^{-1}},\end{align*}
thus, we showed that
\[\PP{\mathfrak{B}_{\bd,t}\cap\bigcap_{i\in A^* }\sset{\counter{i}{t-1}\frac{\pa{\mu_i^*-\mean{i,t-1}}^{+2}}{4\pa{\sigma_i^2+\abs{A^*}\pa{\mu_i^*-\mean{i,t-1}}}}> \zeta_i }}\leq e^{-\sum_{i\in A^*}\zeta_ie^{-1}},\]
i.e.
\[\PP{\bigcap_{i\in A^* }\sset{\II{\mathfrak{B}_{\bd,t}}\counter{i}{t-1}\frac{\pa{\mu_i^*-\mean{i,t-1}}^{+2}}{4\pa{\sigma_i^2+\abs{A^*}\pa{\mu_i^*-\mean{i,t-1}}}}> \zeta_i }}\leq e^{-\sum_{i\in A^*}\zeta_ie^{-1}},\]
By Lemma 8 of \citet{Magureanu2014}, since $\delta\geq em$, we have  
\begin{align*}\PP{\mathfrak{B}_{\bd,t}\cap\mathfrak{A}_{t}}&=\PP{\mathfrak{B}_{\bd,t}\cap\sset{\sum_{i\in A^*}\counter{i}{t-1}\frac{\pa{\mu_i^*-\mean{i,t-1}}^{+2}}{4\pa{\sigma_i^2+\abs{A^*}\pa{\mu_i^*-\mean{i,t-1}}}}\geq \delta }}
\\
&=
\PP{{\sum_{i\in A^*}\II{\mathfrak{B}_{\bd,t}}\counter{i}{t-1}\frac{\pa{\mu_i^*-\mean{i,t-1}}^{+2}}{4\pa{\sigma_i^2+\abs{A^*}\pa{\mu_i^*-\mean{i,t-1}}}}\geq \delta }}
\\
&\leq \pa{\frac{(\delta-1) e}{m}}^me^{1-\delta}.\end{align*}
\end{proof}
\paragraph{Step 3: If $\neg\mathfrak{D}_{t}$ hold}
The regret under this event can be bounded by $8nm^2\Delta_{\max}/\Delta^2$ using exactly the same method as Lemma~2 of \citet{Degenne2016}. 
\paragraph{Step 4: If $\neg\mathfrak{S}_{t}$ hold} From Proposition~\ref{prop:sigmaupper}, the regret under this event is bounded by a universal constant.
\paragraph{Putting it all together}
Finally, we have shown that there exists two universal constant $c,c'$ satisfying the following (we display the scaled back (by $\kappa$) version of the regret bound to get the dependence into $\kappa$)
\begin{align}\nonumber R_T\leq\Delta_{\max}\pa{\frac{n(n-1)}{2}+\frac{8nm^2}{\Delta^2} + c} + c'\log(m+1)\delta(T)\Biggr[&\log(m+1) \sum_{i\in [n]} \frac{\max_{A\in \actions,i\in A}\sum_{j\in A}0\vee\Sigma^*_{ij}}{\Delta_{i,\min}}\\\nonumber
&+\sum_{i,j\in [n]}\kappa\pa{1+\log\pa{\frac{\Delta_{ij,\max}}{\Delta_{ij,\min}}}}\\\nonumber
&+\sum_{i\in [n]}m\kappa\pa{1+\log\pa{\frac{\Delta_{i,\max}}{\Delta_{i,\min}}}}\\
&+\sum_{i,j\in [n]}\frac{\kappa^{4/3}}{\Delta_{ij,\min}^{1/3}}
\Biggl].\label{rel:fullstate_cov}\end{align}
\end{proof}

\section{The bound of Corollary~\ref{cor:upper_cov}}
\label{app:upper_cov2}
The corollary is obtained in the same way as Theorem~\ref{thm:upper_cov}. We can underline the difference that we don't have to construct $n^2$ covariance estimates, but only $n$ (only the $\nu_i^*$'s). On the other hand, as the estimation uses sub-Gaussian variables, we don't use sub-exponential concentration, which removes one term from the previous result.  The obtained bound is
\begin{align}\nonumber R_T\leq\Delta_{\max}\pa{n +\frac{8nm^2}{\Delta^2} + c} + c'\log(m+1)\delta(T)\Biggr[&\log(m+1)  \sum_{i\in [n]}\frac{\nu^*_i(s\wedge m)}{\Delta_{i,\min}}\\ \nonumber
&+\sum_{i\in [n]}m\pa{1+\log\pa{\frac{\Delta_{i,\max}}{\Delta_{i,\min}}}}\\
&+\sum_{i\in [n]}\frac{(s\wedge m)^{2/3}}{\Delta_{i,\min}^{1/3}}
\Biggl],\label{rel:fullstate_sparse}\end{align}
where $c$ and $c'$ are two constants. Notice that to make the first term dominates the others, we must have
$$n(s\wedge m)^{2/3}/\Delta^{1/3}\vee \pa{nm\pa{1+\log\pa{\Delta_{\max}/\Delta}}} \leq c\log(m+1)  \sum_{i\in [n]}\frac{\nu^*_i(s\wedge m)}{\Delta}\CommaBin$$
for some constant $c$, which gives our condition in Corollary~\ref{cor:upper_cov}.
\section{Proof of Theorem~\ref{thm:lower_sparse}}
\label{app:lower_sparse}
\begin{proof}
Consider $\actions$ containing $\arms/m$ disjoint actions $A_1,\dots,A_{\arms/m}$ composed of $m$ arms. 
$\bX$ is constructed as follows:
$\pa{1\vee{s}/{m}}$ different actions are randomly chosen among $\actions$, with equal probability, except the one for action $A_1$, that have an offset of $\delta$. From $$\pa{1\vee{s}/{m}}=\EE{\sum_{A\in \actions}\II{A~\text{is chosen}}}=\pa{\arms/m-1}\pa{\PP{A_1~\text{is chosen}}-\delta}+\PP{A_1~\text{is chosen}},$$ we have $\PP{A_1~\text{is chosen}}=\pa{1\vee{s}/{m}}m/\arms + \delta\pa{1-m/n}$. We pose $X_i=1$ for $i$ spanning the $(s\wedge m)$ first arm of each chosen action (the other components are set to $0$). Remark that $\bX$ is $s-$sparse with this construction. 

 This problem reduces to a standard bandit problem with $\arms/m$ Bernoulli arms. However, we have an additional piece of information, namely that the sum of the means is $s$. Thus, we can't apply the lower bound from \citet{lai1985asymptotically}, since the distribution family has not a product form (changing the mean of one arm, we have to make sure that the sum of the means doesn't change, so we have to change at least another mean).
 Instead, we use the lower bound result from \citet{graves1997asymptotically}, where we can
 increase the mean of one arm $i$ while decreasing the mean of the others.
 Scaling the regret by $(s\wedge m)^{-1}$, we want to upper bound
 \begin{align*}\KL{\Prb_{\frac{1}{(s\wedge m)}\sum_{i\in A_k} X_i}}{\Prb_{\frac{1}{(s\wedge m)}\sum_{i\in A_1} X_i}} &= \kl{\pa{\frac{m}{\arms}\vee\frac{s}{\arms}} - \delta\frac{m}{n}}{\pa{\frac{m}{\arms}\vee\frac{s}{\arms}} + \delta\pa{1-\frac{m}{n} }},\end{align*}
 which corresponds to an arm $i$ that becomes a best arm for the new distribution. We also want to upper bound
 $$ \kl{\pa{\frac{m}{\arms}\vee\frac{s}{\arms}} - \delta\frac{m}{n} - \frac{\delta}{\frac{n}{m}-2}}{\pa{\frac{m}{\arms}\vee\frac{s}{\arms}} - \delta\frac{m}{n}},$$
 which corresponds to the decrease of the mean of each sub-optimal arm $k$ different from $i$ (so that the sum of the mean remain constant).
 We are going to use the inequality $\kl{x}{y}\leq\frac{(x-y)^2}{y(1-y)}$ for all $x,y\in (0,1)$. Since $\frac{ms}{2(n-m)}\geq \Delta=(s\wedge m)\delta$, we have $\delta\frac{m}{n}\leq\delta(1-\frac{m}{n})\leq \pa{\frac{m}{\arms}\vee\frac{s}{\arms}}/2\leq 1/4$,  and thus
 \[\pa{\pa{\frac{m}{\arms}\vee\frac{s}{\arms}} - \delta\frac{m}{n}}\pa{1-\pa{\frac{m}{\arms}\vee\frac{s}{\arms}} + \delta\frac{m}{n}}\geq \pa{\frac{m}{\arms}\vee\frac{s}{\arms}}/4,\]
 \[\pa{\pa{\frac{m}{\arms}\vee\frac{s}{\arms}} + \delta\pa{1-\frac{m}{n} }}\pa{1-{\pa{\frac{m}{\arms}\vee\frac{s}{\arms}} - \delta\pa{1-\frac{m}{n} }}}\geq \pa{\frac{m}{\arms}\vee\frac{s}{\arms}}/4.\]
 Thus, we get the upper bounds
 \begin{align}\label{rel:klbound1}\kl{\pa{\frac{m}{\arms}\vee\frac{s}{\arms}} - \delta\frac{m}{n}}{\pa{\frac{m}{\arms}\vee\frac{s}{\arms}} + \delta\pa{1-\frac{m}{n} }}\leq \frac{4\delta^2}{\pa{\frac{m}{\arms}\vee\frac{s}{\arms}}}\end{align}\begin{align}\label{rel:klbound2}
 \kl{\pa{\frac{m}{\arms}\vee\frac{s}{\arms}} - \delta\frac{m}{n} - \frac{\delta}{\frac{n}{m}-2}}{\pa{\frac{m}{\arms}\vee\frac{s}{\arms}} - \delta\frac{m}{n}}\leq \frac{4\delta^2}{\pa{\frac{n}{m}-2}^2\pa{\frac{m}{\arms}\vee\frac{s}{\arms}}}\cdot\end{align}
 From \citet{graves1997asymptotically}, we have the lower bound
 \[\liminf_{T\to \infty} \frac{R_T}{\log\pa{T}}\geq (s\wedge m)\inf_{\bc}\sum_{k=2}^{\arms/m}\delta c_k,\]
 where the above infimum is over all $c_2,\dots,c_{n/m}$ in $\R_+$ such that for all $i\in \sset{2,\dots,n/m}$,
  \[c_i\kl{\!\pa{\frac{m}{\arms}\!\vee\!\frac{s}{\arms}}\! -\! \delta\frac{m}{n}}{\!\pa{\frac{m}{\arms}\!\vee\!\frac{s}{\arms}} \!+\! \delta\pa{1\!-\!\frac{m}{n} }\!} + \sum_{k=2, k\neq i}^{n/m}\! c_k \kl{\!\pa{\frac{m}{\arms}\!\vee\!\frac{s}{\arms}} \!-\! \delta\frac{m}{n} \!-\! \frac{\delta}{\frac{n}{m}\!-\!2}}{\pa{\frac{m}{\arms}\!\vee\!\frac{s}{\arms}} \!-\! \delta\frac{m}{n}\!}\geq 1.\]
 Using the bounds \eqref{rel:klbound1} and \eqref{rel:klbound2}, we can relax the above constraint as
  \[\forall i\in \sset{2,\dots,n/m},~c_i\frac{4\delta^2}{\pa{\frac{m}{\arms}\vee\frac{s}{\arms}}} + \sum_{k=2, k\neq i}^{n/m} c_k \frac{4\delta^2}{\pa{\frac{n}{m}-2}^2\pa{\frac{m}{\arms}\vee\frac{s}{\arms}}}\geq 1.\]
  By symmetry of the constraint with respect to $c_i$, and by linearity of the objective, there is a maximizer $\bc$ that satisfies $c_1=\dots=c_{n/m}=c$, with 
  $$4c\delta^2\pa{\frac{1}{\pa{\frac{m}{\arms}\vee\frac{s}{\arms}} } + \frac{1}{\pa{\frac{n}{m}-2}\pa{\frac{m}{\arms}\vee\frac{s}{\arms}}}}=1.$$
 Thus, since $\Delta=(s\wedge m)\delta$, we get $$\liminf_{T\to \infty} \frac{R_T}{\log\pa{T}}\geq\frac{s(s\wedge m)\pa{1-2m/\arms}}{4\Delta}\cdot$$
 Notice that we recover the full information case (with a lower bound that equals 0) when $\arms/m=2$, as expected.
\end{proof}

\section{Proof of Lemma~\ref{lem:sparse}}
\label{app:sparse}
\begin{proof}
We use the fact that $\sum_{j\in A}\abs{X_j}\leq (s\wedge m)$.
This gives \begin{align*}\sum_{j\in A}0\vee\Sigma^*_{ij}&= \sum_{j\in A}0\vee\EE{X_iX_j-\mu_i^*\mu_j^*} \\&\leq \sum_{j\in A}\pa{\EE{\abs{X_iX_j}}+\abs{\mu_i^*\mu_j^*}}\\&=\EE{\abs{X_i}\sum_{j\in A}\abs{X_j}}+\abs{\mu_i^*}\sum_{j\in A}\abs{\mu_j^*}
\\&\leq 2\EE{\abs{X_i}}(s\wedge m). \end{align*}
\end{proof}

\section{Confidence regions comparison}
\label{app:cucb}

We give here the two algorithms \textsc{cucb-v} and \textsc{cucb-kl}, which, as we have seen, also matches the lower bound given to the Theorem~\ref{thm:lower_sparse}, in the specific regime where $s\geq m$. Both the two algorithms rely on the same optimization $A_t=\argmax_{A\in \cA} \be_A\transpose \bmu_t$, where the vector $\bmu_t$ is defined for \textsc{cucb-v} as $$\forall i\in [n],\quad\mu_{i,t}\triangleq{1\wedge\pa{\bar\mu_{i,t-1}+\sqrt{\frac{2\zeta\bar\sigma^2_{i,t-1}\log(t)}{ N_{i,t-1}}} + \frac{3\zeta\log(t)}{N_{i,t-1}}}},$$
where $$\bar\sigma^2_{i,t-1} \triangleq \frac{\sum_{t'\in[t-1]}\II{i=i_{t'}}\pa{X_{i,t'}-\mean{i,t-1}}^2}{N_{i,t-1}}\CommaBin$$
and for \textsc{cucb-kl} as
$$\forall i\in [n],\quad\mu_{i,t}\text{ is the unique solution $x$ to }N_{i,t-1}\kl{\bar\mu_{i,t-1}}{x}=\zeta\log(t)  \text{ such that }x\in [\bar\mu_{i,t-1},1].$$
We take $\zeta=1.2$ (although all $\zeta>1$ are valid). 
The algorithms above can also be seen as a bilinear maximization where $\bmu_t$ is maximized over a confidence region that is a Cartesian product one $1$-dimensional confidence intervals. 
 We illustrate in Figure~\ref{fig:conf_ellipsoid} the difference between the confidence region considered in \textsc{escb-c} (when the correlation is low) and \textsc{cucb-kl}. The red points represent $\bmu_t$ for each region. 
It can be seen that the Cartesian product confidence region greatly overestimates the risk in directions that are not close to the axes, giving rise to over-exploration. It is important to note however that this price to pay can be interesting in practice, because the corresponding algorithms are then very efficient (LP over $\cA$, supposed possible\footnote{Otherwise an approximation regret would be a more appropriate performance measure to consider.}). As we noted in Remark~\ref{rk:intersect}, considering the intersection between the two confidence regions gives rise to an even tighter region, and therefore a better regret bound. 

\begin{figure}[H]
\centering
\begin{tikzpicture}[scale=1]
\draw [-] (1.23,0.45) -- (.84,.84);
\draw [-] (1.35,0.92) -- (1.135,1.135);
\draw[->] (0,0) -- (1.7,0);
\draw (1.7,0) node[ right] {$\be_1$};
\draw [->] (0,0) -- (0,1.7);
\draw (0,1.7) node[right] {$\be_2$};
\draw [->] (0,0) -- (1.7,1.7);
\draw (1.7,1.7) node[right] {$\be_{\sset{1,2}}$};
\draw (-1.35,-.92) rectangle (1.35,.92);
\draw (0,0) node[ below] {$\vmean{t-1}$};
\draw[scale=0.8,domain=-.815:.815,smooth,variable=\x] plot ({1.5*\x},{0.5*(0.4*(1-\x*\x/(1+abs(\x)*0.4)) + sqrt( (0.4*(1-\x*\x/(1+abs(\x)*0.4)))^2+4*(1-\x*\x/(1+abs(\x)*0.4)) ))});
\draw[scale=0.8,domain=-.815:.815,smooth,variable=\x] plot ({1.5*\x},{-0.5*(0.4*(1-\x*\x/(1+abs(\x)*0.4)) + sqrt( (0.4*(1-\x*\x/(1+abs(\x)*0.4)))^2+4*(1-\x*\x/(1+abs(\x)*0.4)) ))});

\draw[scale=0.8,domain=-.815:.815,smooth,variable=\x] plot ({0.75*(0.4*(1-\x*\x/(1+abs(\x)*0.4)) + sqrt( (0.4*(1-\x*\x/(1+abs(\x)*0.4)))^2+4*(1-\x*\x/(1+abs(\x)*0.4)) ))},{\x});
\draw[scale=0.8,domain=-.815:.815,smooth,variable=\x] plot ({-0.75*(0.4*(1-\x*\x/(1+abs(\x)*0.4)) + sqrt( (0.4*(1-\x*\x/(1+abs(\x)*0.4)))^2+4*(1-\x*\x/(1+abs(\x)*0.4)) ))},{\x});
\node [black] at (0,0) {\scalebox{.5}{\textbullet}};
\node [red] at (1.23,0.45) {\scalebox{.5}{\textbullet}};
\node [red] at (1.35,0.92) {\scalebox{.5}{\textbullet}};
\end{tikzpicture}
\caption{\textit{Confidence regions build by  \textsc{escb-c} (the pseudo-ellipse), and \textsc{cucb-kl} (the rectangle), for $\norm{\cdot}_1$ constrained outcomes. Notice that \textsc{cucb-kl} has slightly better confidence intervals along the axis, but that \textsc{escb-c} is better in the direction $\be_{\sset{1,2}}$.}}
\label{fig:conf_ellipsoid}
\end{figure}

 \section{Implementation using the Lov\'asz extension}
 \label{app:super}
 From the step 1 of the proof of Theorem~\ref{thm:upper_cov}, we have that
 $$\be_{A_t}\transpose{\bmu_t}\leq\be_{A_t}\transpose\bar\bmu_{t-1}+ 2{ \sqrt{\delta(t)\sum_{i\in A_t}\frac{ \sum_{j\in A_t}0\vee\Sigma_{ij,t}} {\counter{i}{t-1}}}}+4m\delta(T)\sqrt{\sum_{i\in A_t}\frac{1}{N_{i,t-1}^2}}\cdot$$
Since the final bound of Theorem~\ref{thm:upper_cov} relies on the above upper bound, in Algorithm~\ref{algo:cov}, instead of maximizing $A\mapsto \max_{\bmu\in \cC_t(A)}\be_{A}\transpose{\bmu}$, we can maximize
$$A\mapsto \be_{A}\transpose\bar\bmu_{t-1}+ 2{ \sqrt{\delta(t)\sum_{i\in A}\frac{ \sum_{j\in A}0\vee\Sigma_{ij,t}} {\counter{i}{t-1}}}}+4m\delta(T)\sqrt{\sum_{i\in A}\frac{1}{N_{i,t-1}^2}}\cdot$$
Our goal here is to provide a continuous extension of the above set function that is concave on $[0,1]^n$, and thus efficient to maximize.
The linear term trivially extends to the linear function $\bx\mapsto \bx\transpose\bar\bmu_{t-1} $. The last two term can be extended relying on the Lov\'asz extension \citep{lovasz1983submodular}.
 We recall that the Lov\'asz extension of a set function $f$  is defined as $f^L(\bx)\triangleq\EE{f\pa{\sset{i\in [n],~x_i\geq U}}}$, where the expectation is over $U\sim\cU[0,1]$.
 The Lov\'asz extension is concave if and only if $f$ is a supermodular function \citep{lovasz1983submodular}, i.e., $$f(A)+f(B)\leq f(A\cup B)+f(A\cap B)\quad\forall A,B\subset [n].$$ 
 It is easy to check that a function $G:A\mapsto \sum_{i\in A}\sum_{j\in A} a_{ij}$ is supermodular for $a_{ij}\geq 0$, so its Lov\'asz extension is concave. Composing by the square root, we thus have a concave extension of the second and last term.

After the maximization of the extension, a continuous maximizer $\bx_t$ is returned, and the agent plays $A_t=\sset{i\in [n],~x_{i,t}\geq U}$ where $U\sim\cU[0,1]$.
The analysis holds the same, except in Proposition~\ref{prop:sumcounter}, where counters are updated only for the chosen set. 
Let $\sigma_t$ be a permutation such that $x_{\sigma_t(1),t}\geq \dots,\geq x_{\sigma_t(n),t}$. Then, the set $S_j=\sset{\sigma_t(1),\dots,\sigma_t(j)}$ is chosen with probability $p_{j,t}=x_{\sigma_t(j)}-x_{\sigma_t(j+1)}$ (with the convention $x_{\sigma_t(n+1)}=0$). The continuous extension evaluated at $\bx_t$ is of the form
$$\sum_j p_{j,t} \be_{S_j}\transpose \bar\bmu_{t-1} + \sqrt{\sum_j p_{j,t}G_1(S_j)}+ \sqrt{\sum_j p_{j,t}G_2(S_j)},$$
where $G_1$ and $G_2$ are the supermodular functions corresponding to the second and last term respectively. Since the probabilities $p_{j,t}$ are inside the square root, applying the \emph{Probabilistically triggered arms} setting of \citet{wang2017improving}
gives an extra factor of $1+\log\pa{\frac{m\log(T)}{\Delta^2}}$. 

\section{General stochastic combinatorial semi-bandits results}\label{app:extra_cmab}

\begin{theorem}[Regret bound for $\ell_2$-norm error]\label{thm:dege}Let $I$ be a set of index. For all $i\in I$, let $\pa{\alpha_i,\beta_{i,T}}\in [1/2,1)\times \R_+$. Let $I_t$ be a subset of $I$ such that for all $i\in I_t$, $N_{i,t}=N_{i,t-1}+1$. We pose $\Delta_t\triangleq\Delta(A_t)$.
For $t\geq 1$,
consider the event
$$\fA_t\triangleq\sset{\Delta_t\leq
\norm{\sum_{i\in I_t}\frac{\beta_{i,T}^{\alpha_i}\be_i}{N_{i,t-1}^{\alpha_i}}}_2}.$$
Then, $$\sum_{t=1}^T\II{\fA_t}\Delta_t\leq
4\log_2(4\sqrt{m})\sum_{i\in I}{\beta_{i,T}\eta_i},$$

where

\begin{align*}\eta_i\triangleq
 \left\{
    \begin{array}{ll}
        {8\log_2(4\sqrt{m})}{\Delta_{i,\min}^{-1}} &\mbox{if } \alpha_i=1/2 \\
        \pa{\pa{2^{-\frac{1}{\alpha_i}}-2^{-2}}\pa{1-\alpha_i}\Delta_{i,\min}^{\frac{1-\alpha_i}{\alpha_i}}}^{-1} & \mbox{if }  1/2<\alpha_i<1 \\
         4\pa{1+\log\pa{\frac{\Delta_{i,\max}}{\Delta_{i,\min}}}}& \mbox{if }  \alpha_i=1.
    \end{array}
\right.
\end{align*}
\end{theorem}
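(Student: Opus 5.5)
We will follow the peeling technique of \citet{Degenne2016} (itself refined from \citet{kveton2015tight}). The idea is to convert the event $\fA_t$, which involves an $\ell_2$ norm over $I_t$, into a family of events on which a sizable number of indices in $I_t$ have small counters. Each such event can then be charged to those counters.

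\textbf{Step 1 (peeling the norm).} Fix a geometric sequence $1=\gamma_0>\gamma_1>\dots$ with $\gamma_k=2^{-k}$, say, and a decreasing sequence of thresholds $a_k>0$ with $\sum_k a_k\gamma_{k-1}\dots$ chosen so that the following holds. On $\fA_t$ there exists $k\ge1$ such that
\[
\abs{\sset{i\in I_t:~ \beta_{i,T}^{2\alpha_i}N_{i,t-1}^{-2\alpha_i}\ge a_k\Delta_t^2/m}}\ge m\gamma_k .
\]
Otherwise we would have $\Delta_t^2\le\sum_{i\in I_t}(\cdot)<\Delta_t^2$ by summing over the level sets. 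The number of relevant levels is at most $\log_2(4\sqrt m)$, because once $m\gamma_k<1$ the event is empty. This is the source of one $\log_2(4\sqrt m)$ factor, and $a_k$ will be of order $\gamma_k$ up to constants, as in Degenne--Perchet.

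\textbf{Step 2 (charging).} Call $\fA_{k,t}$ the event in Step 1 for level $k$. On it, each of the at least $m\gamma_k$ ``bad'' indices $i$ satisfies $N_{i,t-1}\le \beta_{i,T}(m/(a_k\Delta_t^2))^{1/(2\alpha_i)}$. Since every $i\in I_t$ has its counter incremented at round $t$, we get
\[
\II{\fA_{k,t}}\Delta_t\le \frac{1}{m\gamma_k}\sum_{i\in I_t}\II{N_{i,t-1}\le \beta_{i,T}(m/(a_k\Delta_t^2))^{1/(2\alpha_i)}}\Delta_t .
\]
We then sum over $t$, per index $i$. We sort the values of $\Delta_t$ among rounds where $i\in I_t$ and the indicator holds. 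The $\ell$-th such round (counter value $\ell$) has $\Delta_t\le (a_k/m)^{-1/2}\beta_{i,T}^{\alpha_i}\ell^{-\alpha_i}$, and also $\Delta_t\ge\Delta_{i,\min}$. Summing $\ell^{-\alpha_i}$ up to the cutoff $\ell^\star$, where the bound reaches $\Delta_{i,\min}$, gives the three regimes:
\begin{itemize}
\item For $\alpha_i=1/2$, the sum is $\sqrt{\ell^\star}\sim\beta_{i,T}/\Delta_{i,\min}$ (a square-root sum), producing $\beta_{i,T}\Delta_{i,\min}^{-1}$.
\item For $1/2<\alpha_i<1$, it is $(\ell^\star)^{1-\alpha_i}/(1-\alpha_i)$, producing $\beta_{i,T}\Delta_{i,\min}^{-(1-\alpha_i)/\alpha_i}/(1-\alpha_i)$.
\item For $\alpha_i=1$, it is a harmonic sum between the first counter where the bound drops below $\Delta_{i,\max}$ and $\ell^\star$, giving $\beta_{i,T}(1+\log(\Delta_{i,\max}/\Delta_{i,\min}))$.
\end{itemize}

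\textbf{Step 3 (summing levels).} Multiplying by $(m\gamma_k)^{-1}(m/a_k)^{\cdot}$ and summing over $k$ gives constants depending on $\alpha_i$. For $\alpha_i=1/2$ each level contributes equally, which yields a second $\log_2(4\sqrt m)$ factor. For $\alpha_i>1/2$ the geometric series converges, which yields the $(2^{-1/\alpha_i}-2^{-2})^{-1}$ constant. The $m$-dependence cancels only with the right choice $a_k\propto\gamma_k$.

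The main obstacle is this bookkeeping: choosing $(\gamma_k,a_k)$ so that Step 1 is valid while the level sum in Step 3 produces exactly the stated constants and no residual power of $m$. I would first try the choice $a_k=\gamma_k$ scaled by $\bigl(\sum_j\sqrt{\gamma_j}\bigr)^{-2}$-type normalizations, mirroring Lemma~2 of \citet{Degenne2016}.
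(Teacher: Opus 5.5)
Your route, using events of the form ``at least $m\gamma_k$ indices of $I_t$ have small counters'' with thresholds measured against $\Delta_t$ as in \citet{kveton2015tight} and \citet{Degenne2016}, differs from the paper's and can be made to work. The problem is that the step you leave open is exactly where it currently fails.

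\textbf{The gap.} Write $x_i\triangleq\beta_{i,T}^{2\alpha_i}N_{i,t-1}^{-2\alpha_i}$ and $S_k\triangleq\{i\in I_t: x_i\ge a_k\Delta_t^2/m\}$. You take $a_k$ decreasing and proportional to $\gamma_k$, which inverts the sequence. In that argument the decreasing sequence proportional to $\gamma_k$ multiplies the threshold on the \emph{counters}. The threshold on $x_i$ is its reciprocal, so it must increase with $k$. With your choice, $S_k$ grows with $k$ while the required count $m\gamma_k$ shrinks, and three things go wrong:
\begin{enumerate}
\item The existence claim of Step~1 holds only trivially.
\item ``The event is empty once $m\gamma_k<1$'' is false: the event then just says $S_k\neq\emptyset$.
\item The per-level charge $\frac{1}{m\gamma_k}(m/a_k)^{1/(2\alpha_i)}$ is of order $4^k$ at $\alpha_i=1/2$. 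Summing the levels leaves a factor of order $m^2$.
\end{enumerate}
Your own Step~3 remark, that all levels contribute equally at $\alpha_i=1/2$, requires $a_k\gamma_k$ to be constant, which contradicts $a_k\propto\gamma_k$.

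\textbf{The repair.} Let the complementary event do the work. Take $K=\max(1,\lceil\log_4 m\rceil)$, $\gamma_k=4^{-k}$ and $a_k=1/(K\gamma_{k-1})=4^{k-1}/K$, so that $\sum_{k=1}^K a_k\gamma_{k-1}=1$.
\begin{itemize}
\item If $|S_k|<m\gamma_k$ for every $k\in[K]$, then $S_K=\emptyset$. Each group $I_t\setminus S_1$ and $S_{k-1}\setminus S_k$ then contributes less than $\Delta_t^2/K$ to $\sum_{i\in I_t}x_i$, contradicting $\fA_t$.
\item On the level-$k$ event, bound $\Delta_t\le\frac{4^k}{m}\sum_{i\in S_k}\Delta_t$. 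Then apply Proposition~\ref{prop:sumcounter} with $f(x)=\beta_{i,T}(Km4^{1-k})^{1/(2\alpha_i)}x^{-1/\alpha_i}$.
\end{itemize}
Per index this gives:
\begin{itemize}
\item $8K^2\beta_{i,T}/\Delta_{i,\min}$ at $\alpha_i=1/2$;
\item $\frac{4\cdot 2^{-1/\alpha_i}K^{1/(2\alpha_i)}}{(2^{-1/\alpha_i}-2^{-2})(1-\alpha_i)}\beta_{i,T}\Delta_{i,\min}^{1-1/\alpha_i}$ for $1/2<\alpha_i<1$;
\item $8\sqrt{K}\beta_{i,T}(1+\log(\Delta_{i,\max}/\Delta_{i,\min}))$ at $\alpha_i=1$.
\end{itemize}
All three are below the stated bound since $K\le\log_2(4\sqrt m)$. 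Note that for $\alpha_i>1/2$ this level sum is a geometric series dominated by its \emph{last} level, where $m\gamma_k\approx 1$, not its first.

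\textbf{How the paper does it.} The paper needs no tuned sequences. It writes $\Lambda_t=-\Lambda_t+2\Lambda_t$ for the norm $\Lambda_t$ itself (reverse amortization), which discards indices with $\beta_{i,T}^{\alpha_i}N_{i,t-1}^{-\alpha_i}<\Lambda_t/(2\sqrt{|I_t|})$. It then peels dyadically \emph{relative to $\Lambda_t$} rather than $\Delta_t$, and pigeonholes a level with $|J_{k_t,t}|\ge 4^{k_t-1}/\log_2(4\sqrt m)$. The sum $\sum_k 4^{1-k}2^{(k+1)/\alpha_i}$ then yields exactly the stated constants: $16$ per level at $\alpha_i=1/2$, and $4/(2^{-1/\alpha_i}-2^{-2})$ otherwise. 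The paper's argument is self-normalizing. Your corrected version needs the sequences, but gives a slightly better dependence on $\log m$ when $\alpha_i>1/2$.
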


\begin{proof} Let $t\geq 1$. We define $\Lambda_t\triangleq \norm{\sum_{i\in I_t}{\beta_{i,T}^{\alpha_i}}{N_{i,t-1}^{-\alpha_i}\be_i}}_2$. 
We start by a simple lower bound on $\Lambda_t$, holding for any $j\in I_t$, \begin{align}\Lambda_t\geq \norm{\frac{\beta_{j,T}^{\alpha_j}\be_j}{N_{j,t}^{\alpha_j}}}_2=\frac{\beta_{j,T}^{\alpha_j}}{N_{j,t}^{\alpha_j}}.\label{LowerLambda_t}\end{align}
We then use the same reverse amortisation technique than in \citet{wang2017tighter}.  
\begin{align*}
    \Lambda_t&=-\Lambda_t+\norm{\sum_{i\in I_t}\frac{2\beta_{i,T}^{\alpha_i}\be_i}{N_{i,t-1}^{\alpha_i}}}_2
    \\
    &=
    -\norm{\frac{\Lambda_t\be_{I_t}}{\norm{\be_{I_t}}_2}}_2+\norm{\sum_{i\in I_t}\frac{2\beta_{i,T}^{\alpha_i}\be_i}{N_{i,t-1}^{\alpha_i}}}_2
        \\
    &\leq 
    \norm{\sum_{i\in I_t}\pa{\frac{2\beta_{i,T}^{\alpha_i}}{N_{i,t-1}^{\alpha_i}}-\frac{\Lambda_t}{\norm{\be_{I_t}}_2}}^+\be_i }_2 
          \\
              &= 
    \norm{\sum_{i\in I_t}\pa{\frac{2\beta_{i,T}^{\alpha_i}}{N_{i,t-1}^{\alpha_i}}-\frac{\Lambda_t}{\norm{\be_{I_t}}_2}}^+\II{\Lambda_t\geq \frac{\beta_{i,T}^{\alpha_i}}{N_{i,t-1}^{\alpha_i}} }\be_i }_2 &\text{Using } \eqref{LowerLambda_t}
          \\
    &\leq \norm{\sum_{i\in I_t}\II{2\Lambda_t\geq\frac{2\beta_{i,T}^{\alpha_i}}{N_{i,t-1}^{\alpha_i}}\geq \frac{\Lambda_t}{\norm{\be_{I_t}}_2}}\frac{2\beta_{i,T}^{\alpha_i}\be_i}{N_{i,t-1}^{\alpha_i}} }_2.
\end{align*}
We now decompose the interval $[2,{1}/{\norm{\be_{I_t}}_2}]$ using a peeling:
$$[2,{1}/{\norm{\be_{I_t}}_2}]\subset\bigcup_{k=0}^{\ceil{\log_2\pa{\norm{\be_{I_t}}_2}}} [2^{1-k},2^{-k}].$$
This induces a partition of the set of indices:
$$\II{i\in I_t,~2\Lambda_t\geq\frac{2\beta_{i,T}^{\alpha_i}}{N_{i,t-1}^{\alpha_i}}\geq \frac{\Lambda_t}{\norm{\be_{I_t}}_2}}\subset \bigcup_{k=0}^{\ceil{\log_2\pa{\norm{\be_{I_t}}_2}}} J_{k,t}, $$
where for all interger $1\leq k\leq {\ceil{\log_2\pa{\norm{\be_{I_t}}_2}}} $,
$$J_{k,t}\triangleq \sset{i\in I_t,~2^{1-k}\Lambda_t\geq\frac{2\beta_{i,T}^{\alpha_i}}{N_{i,t-1}^{\alpha_i}}\geq {2^{-k}\Lambda_t}}.$$
We can thus
upper bound $\Lambda_t^2$ using this decomposition
\begin{align*}\Lambda_t^2&\leq \norm{\sum_{i\in I_t}\II{2\Lambda_t\geq\frac{2\beta_{i,T}^{\alpha_i}}{N_{i,t-1}^{\alpha_i}}\geq \frac{\Lambda_t}{\norm{\be_{I_t}}_2}}\frac{2\beta_{i,T}^{\alpha_i}\be_i}{N_{i,t-1}^{\alpha_i}} }_2^2
\\&\leq \sum_{k=0}^{\ceil{\log_2\pa{\norm{\be_{I_t}}_2}}}\norm{\sum_{i\in J_{k,t}}{\frac{2\beta_{i,T}^{\alpha_i}\be_i}{N_{i,t-1}^{\alpha_i}}}}^2_2
\\&\leq 
\sum_{k=0}^{\ceil{\log_2\pa{\norm{\be_{I_t}}_2}}}2^{2-2k}\Lambda_t^2\norm{{{\be_{J_{k,t}}}} }_2^2.
\end{align*}
This last inequality implies that there must exist one integer $k_t$ such that $\abs{J_{k_t,t}}=\norm{\be_{J_{k_t,t}}}^2_2\geq 2^{2k_t-2}\pa{1+\ceil{\log_2\pa{\norm{\be_{I_t}}_2}}}^{-1}$. We now upper bound ${\sum_{t=1}^T\II{\fA_t}\Delta_t}$, using $\abs{I_t}\leq m$, i.e., $$\ceil{\log_2\pa{\norm{\be_{I_t}}_2}}\leq \ceil{\log_2(m)/2}.$$
\begin{align*}{\sum_{t=1}^T\II{\fA_t}\Delta_t}&\leq{\sum_{t=1}^T\sum_{k=0}^{\ceil{\log_2(m)/2}}\II{k_t=k,~\fA_t}\Delta_t}
\\&\leq
{\sum_{t=1}^T\sum_{k=0}^{\ceil{\log_2(m)/2}}{\II{k_t=k,~\fA_t}\sum_{i\in I}\II{i\in J_{k,t}}}\Delta_t2^{2-2k}\pa{\ceil{\log_2(m)/2}+1}}
\\&\leq
{\sum_{t=1}^T\sum_{k=0}^{\ceil{\log_2(m)/2}}{\sum_{i\in I}\II{i\in I_t,~N_{i,t-1}^{\alpha_i}\leq \frac{2^{k+1}\beta_{i,T}^{\alpha_i}}{\Delta_t}}}\Delta_t2^{2-2k}\pa{\ceil{\log_2(m)/2}+1}}
\\&=
\pa{\ceil{\log_2(m)/2}+1}\sum_{k=0}^{\ceil{\log_2(m)/2}}2^{2-2k}\sum_{i\in I}\underbrace{{\sum_{t=1}^T{\II{i\in I_t,~N_{i,t-1}^{\alpha_i}\leq \frac{2^{k+1}\beta_{i,T}^{\alpha_i}}{\Delta_{t}}}}\Delta_t}}_{\numterm{sumcounter}_{i,k}}.
\end{align*}

\vspace{-.3cm}

\noindent
Applying Proposition~\ref{prop:sumcounter} gives
$$\eqref{sumcounter}_{i,k}\leq\II{\alpha_i<1} \frac{\beta_{i,T}2^{\frac{k+1}{\alpha_i}}}{1-\alpha_i}\Delta_{i,\min}^{1-1/\alpha_i}  +   \II{\alpha_i=1}2^{k+1}\beta_{i,T}\pa{1+\log\pa{\frac{\Delta_{i,\max}}{\Delta_{i,\min}}}}
.$$
So we get, using $\ceil{\log_2(m)/2}+1\leq \log_2(4\sqrt{m})$,

$$\sum_{t=1}^T\II{\fA_t}\Delta_t\leq 4\log_2(4\sqrt{m})\sum_{i\in I}{\beta_{i,T}\eta_i},$$
\begin{align*}\text{with }\eta_i=
 \left\{
    \begin{array}{ll}
        {8\log_2(4\sqrt{m})}{\Delta_{i,\min}^{-1}} &\mbox{if } \alpha_i=1/2 \\
        \pa{\pa{2^{-\frac{1}{\alpha_i}}-2^{-2}}\pa{1-\alpha_i}\Delta_{i,\min}^{\frac{1-\alpha_i}{\alpha_i}}}^{-1} & \mbox{if }  1/2<\alpha_i<1 \\
         4\pa{1+\log\pa{\frac{\Delta_{i,\max}}{\Delta_{i,\min}}}}& \mbox{if }  \alpha_i=1.
    \end{array}
\right.
\end{align*}
\end{proof}
\begin{prop} Let $i\in I$ and $f_i: \R_+\to \R_+$ be a non increasing function, integrable on $[\Delta_{i,\min},\Delta_{i,\max}]$. Then
$$\sum_{t=1}^T{\II{i\in I_t,~ N_{i,t-1}\leq f_i(\Delta_t)}}{\Delta_t} \leq
{f_i(\Delta_{i,\min})}\Delta_{i,\min}+
\int_{\Delta_{i,\min}}^{\Delta_{i,\max}}{f_i(x)\emph{d}x}.$$
In particular, \\
\begin{itemize}
\item If $f_i(x)=\beta_{i,T}x^{-1/\alpha_i}$, $\alpha_i\in (0,1)$ and $\beta_{i,T}\geq 0$, then 
\begin{align*}\sum_{t=1}^T{\II{i\in I_t,~ N_{i,t-1}\leq f_i(\Delta_t)}}{\Delta_t} &\leq
\Delta_{i,\min}^{1-1/\alpha_i}\frac{\beta_{i,T}}{1-\alpha_i}-\Delta_{i,\max}^{1-1/\alpha_i}\frac{\alpha_i\beta_{i,T}}{1-\alpha_i}\\&\leq \Delta_{i,\min}^{1-1/\alpha_i}\frac{\beta_{i,T}}{1-\alpha_i}
.\end{align*}

\item If $f_i(x)=\beta_{i,T}x^{-1}$, $\beta_{i,T}\geq 0$, then 
$$\sum_{t=1}^T{\II{i\in I_t,~ N_{i,t-1}\leq f_i(\Delta_t)}}{\Delta_t} \leq\beta_{i,T}\pa{1+\log\pa{\frac{\Delta_{i,\max}}{\Delta_{i,\min}}}}
.$$
\end{itemize}
  \label{prop:sumcounter}
 \end{prop}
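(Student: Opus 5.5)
The plan is a counting argument over the rounds at which arm $i$ is played and the threshold bites. Fix $i\in I$ and let $t_1<t_2<\dots$ be the rounds with $i\in I_t$ and $N_{i,t-1}\le f_i(\Delta_t)$. Since $i\in I_t$ implies $N_{i,t}=N_{i,t-1}+1$, the counter $N_{i,t-1}$ takes distinct values along this subsequence; at the $k$-th such round, $N_{i,t_k-1}\ge k-1$. Only rounds with $\Delta_t>0$ contribute, so for those we have $\Delta_{t_k}\in[\Delta_{i,\min},\Delta_{i,\max}]$. Because $f_i$ is non-increasing, the condition $k-1\le f_i(\Delta_{t_k})$ gives $\Delta_{t_k}\le f_i^{-1}(k-1)$ in a generalized-inverse sense. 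I would avoid inverses, though, and compare the sum directly to an integral.

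Concretely, I would reorder the contributing rounds by decreasing gap (a relabelling argument) and use a layer-cake representation. Write
\[
\sum_t \II{\cdot}\Delta_t=\int_0^{\Delta_{i,\max}} \#\{t:\ \II{\cdot},\ \Delta_t> x\}\,\mathrm{d}x .
\]
The key claim is that the number of contributing rounds with $\Delta_t>x$ is at most $f_i(x\vee\Delta_{i,\min})+1$, or more carefully at most $f_i(x)$ up to an off-by-one. The reason: each such round has $N_{i,t-1}\le f_i(\Delta_t)\le f_i(x)$, the counters $N_{i,t-1}$ are distinct nonnegative integers, and so there are at most $\lfloor f_i(x)\rfloor+1$ of them.

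Splitting the integral at $\Delta_{i,\min}$ gives:
\begin{itemize}
\item on $[0,\Delta_{i,\min}]$, the count is bounded by $f_i(\Delta_{i,\min})$ (the $+1$ needs handling), contributing $f_i(\Delta_{i,\min})\Delta_{i,\min}$;
\item on $[\Delta_{i,\min},\Delta_{i,\max}]$, it contributes $\int f_i$.
\end{itemize}

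The main obstacle is the off-by-one. Counters start at $N_{i,0}=0$, but in the paper's use counters are initialized to at least $1$ (by the initialization phase), so the distinct values are $\ge 1$. The count of integers in $[1,f_i(x)]$ is then at most $f_i(x)$, which removes the $+1$. I would state this assumption ($N_{i,t-1}\ge1$) explicitly.

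The two special cases are direct integrations. For $f_i=\beta x^{-1/\alpha}$, the sum is $\beta\Delta_{\min}^{1-1/\alpha}+\beta\frac{\alpha}{1-\alpha}\big(\Delta_{\min}^{1-1/\alpha}-\Delta_{\max}^{1-1/\alpha}\big)$, which simplifies to the displayed expression. For $f_i=\beta/x$, the sum is $\beta+\beta\log(\Delta_{\max}/\Delta_{\min})$.
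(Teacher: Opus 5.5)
Your proof is correct, and your evaluations of the two special cases are right. It rests on the same counting fact as the paper's proof: along the rounds with $i\in I_t$ the counters $N_{i,t-1}$ are distinct positive integers, so at most $\lfloor f_i(y)\rfloor$ of those rounds can have $N_{i,t-1}\le f_i(y)$. The difference is in how that fact is turned into the bound.

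The paper works discretely. It enumerates the possible gaps $\Delta_{i,1}\ge\dots\ge\Delta_{i,K_i}$ and slices the counter range into intervals $(f_i(\Delta_{i,k-1}),f_i(\Delta_{i,k})]$, using a dummy $\Delta_{i,0}=\infty$. On each slice it bounds $\Delta_t$ by $\Delta_{i,k}$ and counts the integers in the slice. Summing by parts then gives $\lfloor f_i(\Delta_{i,K_i})\rfloor\Delta_{i,K_i}+\sum_{k<K_i}\lfloor f_i(\Delta_{i,k})\rfloor(\Delta_{i,k}-\Delta_{i,k+1})$, which it finally compares, as a right Riemann sum, with $\int f_i$.

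Your layer-cake identity reaches that same expression directly. On $[\Delta_{i,k+1},\Delta_{i,k})$ the tail count is the number of contributing rounds with $\Delta_t\ge\Delta_{i,k}$, hence at most $\lfloor f_i(\Delta_{i,k})\rfloor$. Your pointwise bound by $f_i(x\vee\Delta_{i,\min})$ is then exactly the paper's Riemann-sum step. Your route is shorter and needs neither the list of gap values nor the dummy gap. The $x\vee\Delta_{i,\min}$ device also avoids evaluating $f_i$ below $\Delta_{i,\min}$, where $\beta_{i,T}x^{-1/\alpha_i}$ is unbounded. The paper's version, in exchange, stays entirely discrete.

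On the off-by-one, the paper makes the same assumption you do, only silently. Writing the counter range as $(0,f_i(\Delta_t)]$ drops rounds with $N_{i,t-1}=0$, which is legitimate only because of the initialization phase. Stating $N_{i,t-1}\ge 1$ explicitly is therefore the right call. Both proofs also tacitly use that counters never decrease, so that no value is revisited after an increment; that deserves a sentence too.
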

\begin{proof}
Consider $\Delta_{i,\max}=\Delta_{i,1}\geq \Delta_{i,2}\geq \dots \geq\Delta_{i,K_i}=\Delta_{i,\min}$ being all possible values for $\Delta_t$ when $i\in I_t$.
We define a dummy gap $\Delta_{i,0}=\infty$ and let $f_i\pa{\Delta_{i,0}}=0$.
In \eqref{rangebreak}, we first break the range $(0,f_i(\Delta_t)]$ of the counter $N_{i,t-1}$ into sub intervals: $$(0,f_i(\Delta_t)]=(f_i(\Delta_{i,0}),f_i(\Delta_{i,1})]\cup\dots\cup (f_i(\Delta_{i,k_t-1}),f_i(\Delta_{i,k_t})],$$ where $k_t$ is the index such that $\Delta_{i,k_t} = \Delta_t$. {This index $k_t$ exists by assumption that the subdivision contains all possible values for $\Delta_t$ when $i\in I_t$. Notice that in \eqref{rangebreak}, we do not explicitly use $k_t$, but instead sum over all $k\in [K_i]$ and filter against the event $\sset{\Delta_{i,k}\geq \Delta_t}$, which is equivalent to summing over $k\in [k_t].$}  

\begin{align}
&\sum_{t=1}^T{\II{i\in I_t,~ N_{i,t-1}\leq f_i(\Delta_t)}}{\Delta_t}\nonumber
\\&=\label{rangebreak}
\sum_{t=1}^T\sum_{k=1}^{K_i}\II{i\in I_t,~f_i(\Delta_{i,k-1})< N_{i,t-1}\leq f_i(\Delta_{i,k}),\Delta_{i,k}\geq \Delta_t}{\Delta_t}.
\end{align}
Over each event that $N_{i,t-1}$ belongs to the interval $(f_i(\Delta_{i,k-1}),f_i(\Delta_{i,k})]$, we upper bound the suffered gap $\Delta_t$ by $\Delta_{i,k}$. 

\begin{align}
\eqref{rangebreak}&\leq\label{uppergap}
\sum_{t=1}^T\sum_{k=1}^{K_i}\II{i\in I_t,~f_i(\Delta_{i,k-1})< N_{i,t-1}\leq f_i(\Delta_{i,k}),\Delta_{i,k}\geq \Delta_t}{\Delta_{i,k}}.
\end{align}
Then, we further upper bound the summation by adding events that $N_{i,t-1}$ belongs to the remaining intervals
$(f_i(\Delta_{i,k-1}),f_i(\Delta_{i,k})]$ for $k_t<k\leq K_i$, associating them to a suffered gap $\Delta_{i,k}$. This is equivalent to removing the filtering against the event $\sset{\Delta_{i,k}\geq \Delta_t}$. 

\begin{align}
\eqref{uppergap}&\leq\label{addremainingintervals}
\sum_{t=1}^T\sum_{k=1}^{K_i}\II{i\in I_t,~f_i(\Delta_{i,k-1})< N_{i,t-1}\leq f_i(\Delta_{i,k})}{\Delta_{i,k}}.\end{align}
Now, we invert the summation over $t$ and the one over $k$. 

\begin{align}
\eqref{addremainingintervals}&=\label{sumswitch}
\sum_{k=1}^{K_i}\sum_{t=1}^T\II{i\in I_t,~f_i(\Delta_{i,k-1})< N_{i,t-1}\leq f_i(\Delta_{i,k})}{\Delta_{i,k}}.
\end{align}
For each $k\in[K_i]$, the number of times $t\in [T]$ that the counter $N_{i,t-1}$ belongs to $(f_i(\Delta_{i,k-1}),f_i(\Delta_{i,k})]$ can be upper bounded  by the number of integers in this interval. This is due to the event $\sset{i\in I_t}$, imposing that $N_{i,t-1}$ is incremented, so $N_{i,t-1}$ cannot be worth the same integer for two different times $t$ satisfying $i\in I_t$. We use the fact that for all $x,y\in \R$, $x\leq y$, the number of integers in the interval $(x,y]$ is exactly $\floor{y}-\floor{x}$. 

\begin{align}
\eqref{sumswitch}&\leq\label{intcount}
\sum_{k=1}^{K_i}\pa{\floor{f_i(\Delta_{i,k})}-\floor{f_i(\Delta_{i,k-1})}}{\Delta_{i,k}}.
\end{align}
We then simply expand the summation, and some terms are cancelled (remember that $f_i\pa{\Delta_{i,0}}=0$).

\begin{align}
\eqref{intcount}&=\label{expand}
\floor{f_i(\Delta_{i,K_i})}\Delta_{i,K_i}+
\sum_{k=1}^{K_i-1}\floor{f_i(\Delta_{i,k})}\pa{\Delta_{i,k}-\Delta_{i,k+1}}
\end{align}
We use $\floor{x}\leq x$ for all $x\in \R$. Finally, we recognize a right Riemann sum, and use the fact that $f_i$ is non increasing
to upper bound each $f_i(\Delta_{i,k})\pa{\Delta_{i,k}-\Delta_{i,k+1}}$ by $\int_{\Delta_{i,k+1}}^{\Delta_{i,k}}f_i(x)\text{d}x$, for all $k\in [K_i-1]$.
\begin{align}
\eqref{expand}&\leq\label{upperfloor}
{f_i(\Delta_{i,K_i})}\Delta_{i,K_i}+
\sum_{k=1}^{K_i-1}{f_i(\Delta_{i,k})}\pa{\Delta_{i,k}-\Delta_{i,k+1}}
\\&\leq\label{Riemann}
{f_i(\Delta_{i,K_i})}\Delta_{i,K_i}+
\int_{\Delta_{i,K_i}}^{\Delta_{i,1}}{f_i(x)\text{d}x}.
\end{align}
\end{proof}

\end{document}